\newtheorem{theorem}{Theorem}
\newtheorem{corollary}{Corollary}[theorem]
\newtheorem{lemma}{Lemma}
\newtheorem*{remark}{Remark}
\newtheorem{definition}{Definition}
\newcommand*{\addFileDependency}[1]{% argument=file name and extension
  \typeout{(#1)}
  \@addtofilelist{#1}
  \IfFileExists{#1}{}{\typeout{No file #1.}}
}
\newcommand*{\myexternaldocument}[1]{%
    \externaldocument{#1}%
    \addFileDependency{#1.tex}%
    \addFileDependency{#1.aux}%
}
\definecolor{lime}{HTML}{A6CE39}
\DeclareRobustCommand{\orcidicon}{
    \hspace{-2.5mm}
    \begin{tikzpicture}
    \draw[lime, fill=lime] (0,0)
    circle [radius=0.16]
    node[white] {{\fontfamily{qag}\selectfont \tiny ID}};
    \draw[white, fill=white] (-0.0625,0.095)
    circle [radius=0.007];
    \end{tikzpicture}
    \hspace{-2mm}
}
\xdef\csname orcid\x\endcsname{\noexpand\href{https://orcid.org/\csname orcidauthor\x\endcsname}{\noexpand\orcidicon}}
\begin{document}
\ecjHeader{x}{x}{xxx-xxx}{2025}{Fitness Supremums on LGP}{Zhixing H., Yi M., Fangfang Z., Mengjie Z., Wolfgang B.}
% \ecjHeader{x}{x}{xxx-xxx}{201X}{45-character paper description goes here}{Author(s) initials and last name go here}
\title{\bf Bridging Fitness With Search Spaces By Fitness Supremums: A Theoretical Study on LGP}  

% Zhixing Huang, Yi Mei, Fangfang Zhang, Mengjie Zhang, Wolfgang Banzhaf

\author{\name{\bf Zhixing Huang\orcidA{}}  \hfill 
\addr{zhixing.huang@ecs.vuw.ac.nz}\\ 
\name{\bf Yi Mei\orcidC{}}  \hfill 
\addr{yi.mei@ecs.vuw.ac.nz}\\ 
\name{\bf Fangfang Zhang\orcidB{}}  \hfill 
\addr{fangfang.zhang@ecs.vuw.ac.nz}\\ 
\name{\bf Mengjie Zhang\orcidD{}}  \hfill 
\addr{mengjie.zhang@ecs.vuw.ac.nz}\\ 
        \addr{Centre for Data Science and Artificial Intelligence \& School of Engineering and Computer Science, Victoria University of Wellington, Wellington, 6140, New Zealand}
\AND
       \name{\bf Wolfgang Banzhaf\orcidE{}} \hfill \addr{banzhafw@msu.edu}\\
        \addr{Department of Computer Science and Engineering, BEACON Center for the Study of Evolution in Action, and Ecology, Evolution and Behavior Program, Michigan State University, East Lansing, MI 48864, USA}
}

\maketitle

\begin{abstract}
Genetic programming has undergone rapid development in recent years. However, theoretical studies of genetic programming are far behind. One of the major obstacles to theoretical studies is the challenge of developing a model to describe the relationship between fitness values and program genotypes. In this paper, we take linear genetic programming (LGP) as an example to study the fitness-to-genotype relationship. 
{We find that the fitness expectation increases with fitness supremum over instruction editing distance, considering
1) the fitness supremum linearly increases with the instruction editing distance in LGP, 2) the fitness infimum is fixed, and 3) the fitness probabilities over different instruction editing distances are similar. }
% These findings enable us to create an approximate model of fitness value expectations based on instruction editing distance. 
We then extend these findings to explain the bloat effect and the minimum hitting time of LGP based on instruction editing distance.
% Specifically, we theoretically explain the bloat effect in LGP based on the finding. 
The bloat effect happens because it is more likely to produce better offspring by adding instructions than by removing them given an instruction editing distance from the optimal program.
% We further model the minimum hitting time of LGP based on instruction editing distance. 
The analysis of the minimum hitting time suggests that for a basic LGP genetic operator (i.e., freemut), maintaining a necessarily small program size and mutating multiple instructions each time can improve LGP performance.
The reported empirical results verify our hypothesis.
% on the basic genetic operator.
\end{abstract}

\begin{keywords}
Genetic Programming, Fitness Supremum, Instruction Editing Distance, Bloat Effect, Minimum Hitting Time.
\end{keywords}

\section{Introduction}
Genetic programming (GP) is a representative of stochastic symbolic search methods. GP applies genetic operators to manipulate symbolic solutions based on given primitives and predefined program structures (e.g., tree structures). GP has been widely applied to knowledge discovery \cite{Huang2022SLGP, qu_automated_2023,al-helali_genetic_2024}, image classification \cite{fan_genetic_2024, sun_automatic_2024, haut_accelerating_2024}, and combinatorial optimization problems \cite{shi_novel_2022,huang_toward_2024,zeitrag_surrogate-assisted_2022}. 
Despite its success, most understanding of GP behavior is derived empirically and from experiments. For example, we usually initialize GP programs with small program sizes and encourage GP to increase program size during the search. However, there is no theoretical explanation for these recommended designs. We are also not sure why and when these recommended designs are effective.
It would be better to understand GP behaviors from a theoretical perspective first, before moving on to large-scale experiments.
% when we design the initialization strategies for LGP, we mainly try various initialization strategies on a number of benchmark problems, and hope that the best strategy found by the experiment would extend to other problems. Existing GP studies lack a theoretical model to explain these behaviors.
One of the obstacles in theoretical studies of GP is the weak causal link between fitness values and search space configurations. A small move in the search space might lead to a huge difference in fitness values. This weak causal link renders modeling the relationship between fitness values and search spaces very hard.
% When we try to extract the fitness values into a mathematical model, it is hard to connect the fitness values and the GP search space.

% However, existing GP studies lack a theoretical model to explain the behaviors of LGP. For example, existing LGP studies recommend a small variation during LGP evolution, and a number of existing studies highlight the important role of neutral move in GP. However, both the recommended settings of variation step size and the importance of neutral move are concluded based on a large number of experiments. They lack a rigorous explanation for these conclusions. 

In this paper, we analyze GP behaviors by fitness supremums (i.e., the possible worst case of fitness values). 
{Specifically, the expectation of GP fitness increases with the fitness supremum when:
\begin{enumerate}
    \item the fitness infimum is fixed (e.g., 0 in most cases of this paper as we focus on minimization problems),
    \item the probability of sampling programs with the same fitness value is similar when the fitness supremum increases.
\end{enumerate}
In this case, fitness expectation increases with fitness supremum as there is likely more larger fitness values. 
% Note that $\sup(X|C)$ implies $\mathbb{E}[X|C]$ with a possible gap of $\epsilon$.
}

{With this in mind,} we prove that 1) the fitness supremum in GP has a simple relationship with genotype editing distance, and 2) the probability of sampling programs with the same fitness value over different genotype editing distances are similar. 
In other words, {the increment of fitness supremum defined by genotype editing distance implies the increment of fitness expectation.}
This provides a perspective for analyzing fitness values via genotype editing distance and allows to connect fitness values with search spaces. 
The empirical results show that the proposed model based on fitness supremums consistently explains the bloat effect and the choice of variation step size in GP.

Specifically, we take linear genetic programming (LGP) as an example to perform theoretical analyses.
LGP is one of the popular variants of GP methods \cite{Nordin1994, Brameier2007}. The most prominent features of LGP are its linear representation of register-based instructions and the sequential execution of these instructions. The linear representation of LGP allows us to model LGP individuals as vectors of instructions and to count the variation of LGP programs (e.g., the number of different instructions) straightforwardly. There have been a number of studies taking LGP as an example to analyze GP evolution. For example, Hu et al. \cite{Hu2011,Hu2012,Hu2013,hu_neutrality_2018} analyzed the robustness and evolvability in LGP.

Here, we focus on understanding LGP behavior based on the fitness supremum and instruction editing distance.  
There are three main contributions. 
First, we develop a linear model between the fitness supremum and the instruction editing distance in Section \ref{sec:fitnessgaptheo}, showing that a small instruction editing distance implies a small fitness supremum. 
Second, we analyze the distribution of fitness supremums over the entire search space based on the instruction editing distance in Section \ref{sec:d_overspace}. The distribution of the instruction editing distance theoretically backs up the strategy of searching from small to large programs and explains the bloat effect in LGP.
Third, we analyze the expected minimum hitting time of LGP search by the reduction of instruction editing distance based on a basic genetic operator in Section \ref{sec:individual_move}. Our model suggests that reasonably large variation step sizes are beneficial for LGP with the basic genetic operator, further verified by empirical results. 
In the rest of this paper, Section \ref{sec:preliminary} introduces the basic concepts of LGP, and Section \ref{sec:conclude} summarizes the conclusions. The proofs of lemmas are given in Appendix \ref{app:lemmaproof}.

\section{Preliminary}\label{sec:preliminary}

\subsection{Individual Representation}
\subsubsection{Linear Representation}
\begin{figure}%four instructions, each with different source registers
    \centering
    \includegraphics[scale=0.6, viewport=10 10 450 250, clip=true]{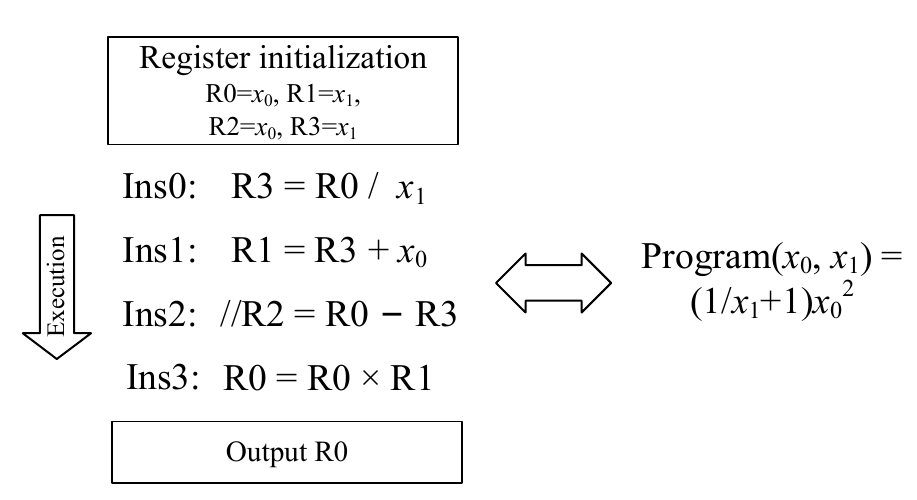}
    \caption{An LGP example composed of four instructions (from Ins0 to Ins3). Ins2 highlighted by ``//'' is an intron that does not affect the program output.}
    \label{fig:LGPexample}
\end{figure}

The representation of LGP individuals is a sequence of register-based instructions. Fig. \ref{fig:LGPexample} is an example of an LGP individual. Using the first instruction in Fig. \ref{fig:LGPexample} as an example, each instruction $\sigma$ consists of three parts: destination register ($\sigma_{des}=R3$), function ($\sigma_{fun}= / $), and source registers ($\sigma_{src}=\{R0, x_1\}$). The source registers specify the input of the function. The function accepts the values in source registers, executes, and stores the results in the destination register. For basic LGP using binary functions, each instruction has one destination register, one function, and two source registers. 
These registers and functions are given beforehand, and all the possible instructions form a combinatorial set of instructions $\mathcal{I}$.

An LGP program executes these instructions sequentially to form a computer program. Before execution of the instruction sequence, LGP performs pre-execution operations such as register initialization. The source register initialization assigns registers input features or constants. In Fig. \ref{fig:LGPexample}, LGP initializes registers by input features $x_0$ and $x_1$ alternatively, which is a common initialization strategy in LGP \cite{Brameier2007, Huang2022}. Then, LGP executes the instructions one by one. After that, LGP performs post-execution operations based on the designated output registers. Here, LGP outputs the value in $R0$ as the final output. Note that LGP can naturally have more than one output if we designate multiple output registers.

% Note that there would be more than one destination register, more than one function, and different numbers of source registers in an instruction based on the primitives and LGP representation design.
The linear representation of LGP has a number of advantages over the tree structures of basic GP. For example, the linear representation facilitates the reusing of intermediate results (i.e., building blocks) in LGP. Intermediate results are stored in registers. LGP easily reuses these results by taking them as source registers. In addition, the linear representation is very similar to many computer languages (e.g., C++ and Python), which is an efficient coding style. 

\subsubsection{Introns and Exons}
LGP individuals output the final results by output registers. However, not all the instructions in an LGP individual contribute to the final output. An instruction that does not contribute to the final output is called ``intron''. In contrast, an instruction that contributes to the final output is called ``exon''. In Fig. \ref{fig:LGPexample}, the third instruction commented with ``//'' is an intron, all other instructions are exons.

The possible number of introns increases with instruction positions \cite{Brameier2007}. 
Suppose an LGP program manipulates a register set of size $\gamma$, among which there are $\gamma_{\text{out}}$ output registers.
Then the possible number of introns grows from the first to the last instruction, from 0, $\frac{n}{\gamma}$, $\frac{2n}{\gamma}$, ..., to $\frac{(\gamma-\gamma_{\text{out}})n}{\gamma}$, where $n$ is the total number of instructions. Particularly, $\frac{n}{\gamma}$ is the number of possible introns when there are $\gamma - 1$ effective destination registers, and $\frac{(\gamma-\gamma_{\text{out}})n}{\gamma}$ is the number of possible introns when there are $\gamma_{\text{out}}$ effective destination registers. Effective destination registers are those registers contributing to the final output.
{Note that we do not know exactly which instruction is an intron unless a specific program is given, but mainly models the growth of the number of introns here.}
% Different positions in an LGP problem have different numbers of introns. 
% Take the last instruction position as an example, because there are $\rho$ active destination registers (i.e., output registers) at the end of the program, the number of introns is $\frac{(|\mathbb{R}|-\rho)n}{|\mathbb{R}|}$.
In Fig. \ref{fig:LGPexample}, the LGP program has four registers $R0$ to $R3$ in the primitive set and has one output register ($\gamma=4$, $\gamma_{\text{out}}=1$). Therefore, the number of possible introns at the position of the fourth instruction is $\frac{(\gamma-\gamma_{\text{out}})n}{\gamma}=\frac{3n}{4}$. At the positions of the second and third instruction, the number of possible introns is $\frac{n}{2}$ since there are two effective destination registers $R0$ and $R1$ given by the source registers of the final instruction.

% For example, Fig. \ref{fig:LGPexample} has four registers $R0$ to $R3$ in the primitive set. Therefore, the introns at the position of the fourth instruction are $\{f | f_{des}=\{R1,R2,R3\}\}$ since the output register is $R0$.
% At the positions of the second and third instructions, the introns are $\{f | f_{des}=\{R2,R3\}\}$ since the final instruction has two source registers $R0$ and $R1$. Although the third instruction has its own source register $R0$ and $R3$, the third instruction is an intron, which is marked by ``$//$''. Thus, the second and third instruction positions have the same intron set. 

% Theoretically speaking, 

\subsubsection{Four Levels of Search Information}
LGP has four levels of search information, i.e., genotype, phenotype, semantics, and fitness values. The genotype is the raw program representation that directly defines the search space of LGP search (i.e., instruction sequences). The phenotype is the effective part of the genotype (i.e., \{genotype\} $\backslash$ \{introns\}) or the observable representation of the genotype (e.g., a directed acyclic graph). The semantics is the behavior of an LGP program. The semantics of LGP programs is usually the output of an LGP program given a certain input. The fitness value is a judging metric over LGP program behavior. The fitness value defines the optimization objective in a straightforward manner.

The semantics of LGP programs is defined as a vector. Given $A$ different inputs, the semantics of an LGP program is defined as a $A\times (\gamma+B)$ dimensional vector, where $\gamma$ is the total number of registers and $B$ is the total number of input features. The input features are kept unchanged (e.g., $x_0$ and $x_1$ in Fig. \ref{fig:LGPexample}), while the register values change with program execution. Fig. \ref{fig:semantics_example} is an example of semantics over executing the LGP program of Fig. \ref{fig:LGPexample}. After initializing registers based on the given input, we have a 6-dimensional input semantic vector, where the first four elements are register values, and the last two elements are input features. The first instruction accepts the input semantics and outputs its results to the fourth register $R3$ (i.e., $2/3=0.67$). After all instructions are executed, the LGP program outputs the final semantics vector. There is a target semantics for approximation by the LGP system, although it may be unknown in some cases (e.g., unsupervised problems). Fig. \ref{fig:semantics_example} right shows a schematic diagram of the semantic movement, where the black dot is the given input semantics, the white dots are the output semantics of each instruction, and the black star is the target semantics. 

\begin{figure}
    \centering
    \includegraphics[scale=0.6, viewport=30 10 610 230, clip=true]{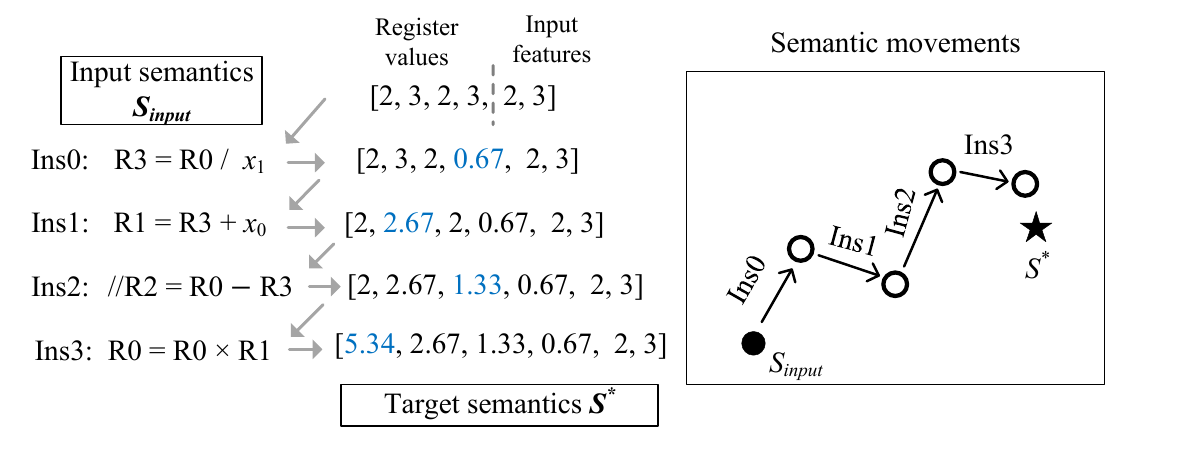}
    \caption{The semantics of executing the LGP program in Fig. \ref{fig:LGPexample} given an input of $[x_0, x_1]=[2,3]$. The manipulated registers are highlighted in blue font. The right figure is a schematic diagram of semantic movements by the example program in the 6-dimensional space.}
    \label{fig:semantics_example}
\end{figure}

\subsection{Evolutionary Framework}
LGP searches programs based on an evolutionary framework. First, LGP initializes a population of individuals, each individual representing a program. LGP evaluates these individuals by problem-specific fitness evaluation functions. As long as the stopping criteria are not satisfied, LGP iteratively selects parents, breeds new programs by applying genetic operators to the parents, and evaluates the fitness of these new programs. Finally, LGP outputs the best individual. In this paper, LGP optimizes minimization problems, that is, smaller fitness indicates better performance, and the minimum fitness is 0.

The genetic operators in LGP essentially define the neighborhood of an LGP individual. LGP individuals move within their neighborhood to explore the search space. There are three types of moves for LGP individuals, constructive moves, neutral moves, and destructive moves. 
A constructive move ($con$) means that an LGP individual moves closer to the optimal individual. A neutral move ($neu$) means that an individual remains at the same distance to the optimal individual after movement. A destructive move ($des$) means that an individual moves further away from the optimal one.
% The constructive move ($con$), neutral move ($neu$), and destructive move ($des$) indicate that $g$ moves closer to $g^*$, remains the same distance to $g^*$, and moves farther away from $g^*$, respectively, after applying the genetic operator $o$ for once. 

Many genetic operators for LGP have been explored. For the sake of simplicity, this paper takes adding and removing a random instruction as the only genetic operator of LGP (known as ``freemut'' in \cite{Brameier2007}). 
Freemut is one of the simplest operators for evolving LGP. It is commonly used in investigating LGP evolution \cite{hu_neutrality_2018}. Freemut has some advantages for investigating LGP evolution. For example, the number of changing instructions by freemut equals the instruction editing distance, simplifying analysis. 

% In this paper, we take adding and removing a random instruction in each reproduction (denoted as ``freemut'') as the genetic operators for evolving LGP. Freemut is one of the simplest operators for evolving LGP. It is commonly used in investigating LGP evolution \cite{}. Freemut has some advantages for investigating LGP evolution. For example, because freemut only varies one instruction each time (i.e., $\beta=1$), the number of moves $V$ equals $d$, which simplifies the analyses. LGP evolves within a generational EA framework for a fixed number of generations. LGP optimizes a minimization problem, that is, smaller fitness indicates better performance.

\subsection{Related Work}
Existing theoretical studies of GP mainly focus on runtime analysis and the explanation of bloat effects. However, because of the weak causality between fitness values and search spaces, they have many over-strong assumptions and are far from practice.
For example, in runtime analysis, existing studies focus on two well-designed problems, ORDER and MAJORITY \cite{durrett_computational_2011,neumann_computational_2012,moraglio_runtime_2013,mambrini_analysis_2016}. ORDER and MAJORITY are two special problems that evaluate GP programs without explicit program execution, just using program inspection. Specifically, the correct programs in ORDER must apply a positive terminal $x_i$ before its complement $- x_i$. Correct programs in MAJORITY must apply all the positive terminals, and the number of positive terminals must be larger than the number of corresponding complements. By directly defining fitness based on program genotypes, ORDER and MAJORITY create strong causality between fitness values and search spaces, which facilitates theoretical analysis.
In addition, \cite{durrett_computational_2011,neumann_computational_2012,moraglio_runtime_2013,mambrini_analysis_2016} apply problem-specific genetic operators to model the variation on GP programs, which are uneasy to be extended to other domains.

Although some existing studies have explained the bloat effect of GP in a qualitative way or by empirical analysis \cite{soule_analysis_2002, brameier_neutral_2003, vanneschi_measuring_2010, sotto_studying_2016}, only a few studies model the bloat effect quantitatively. Freitag and Poli \cite{freitag_mcphee_schema_2001} explained the cause of bloat effect by comparing the expectation of program size over generations. However, the quantitative explanation of bloat effects in \cite{freitag_mcphee_schema_2001} is based on three assumptions. First, the GP population is infinitely large to cover the whole search space and only applies crossover to breed offspring. Second, the fitness function is a two-valued function, returning either $1$ or $1+\hat{f}$. Third, the expectation of program size is obtainable. Given that these assumptions hardly hold in practice, the theoretical explanation of \cite{freitag_mcphee_schema_2001} is restrictive.
Doerr et al. \cite{doerr_bounding_2018} also analyzed the running time of GP based on ORDER and MAJORITY, evolved by problem-specific genetic operators.

To the best of our knowledge, theoretical studies of GP are not kept up with the application of GP methods. They are restricted by using strong assumptions and tricky problem-specific designs. We believe that the weak causality between fitness values and search spaces is the key reason for these limitations. Here, we relax the weak causality between fitness values and search spaces by fitness supremums.

\subsection{Basic Definitions}
\begin{definition}\label{def:Psi}
    Given an input semantics $\mathbf{s}_{0}$ and a set of programs $\mathcal{P}$, the semantic space $\Psi(\mathbf{s}_{0} | \mathcal{P})$ is the set of output semantics of all the programs $\rho$ in $\mathcal{P}$ taking $\mathbf{s}_{0}$ as inputs.
    $$\Psi(\mathbf{s}_{0} | \mathcal{P}) = \{ \rho(\mathbf{s}_{0}) | \rho \in \mathcal{P} \}.$$
\end{definition}

{
\begin{remark}
    We simply assume the optimal program $\rho^*\in \mathcal{P}$, and the target output semantics $\mathbf{s}^*\in \Psi(\mathbf{s}_0|\mathcal{P})$.
\end{remark}

\begin{definition}
    Given a set of optimal programs $\mathcal{P}^*$, $\mathcal{I}^*$ is the instruction set of all the possible instructions in $\rho^*$.
    $$\mathcal{I}^*=\{\sigma|\sigma\in \rho^*, \rho^*\in\mathcal{P}^* \}.$$
\end{definition}

\begin{definition}
    Given a set of optimal programs $\mathcal{P}^*$, $\Psi^*$ is the semantic set that includes $\mathbf{s}_0$ and all the intermediate and output semantics after sequentially executing each instruction in $\rho^*\in \mathcal{P}^*$ given $\mathbf{s}_0$.
\end{definition}
}

\begin{definition}
\label{def:theta_s}
    Given a semantic space $\Psi$ and an optimal semantic space $\Psi^* \subseteq \Psi$, $\Delta_{\Psi}$ is the minimal upper bound on the distance between any semantics in $\Psi$ and any semantics in $\Psi^*$.
    $$\Delta_{\Psi} = \sup_{\mathbf{s}_1\in\Psi, \mathbf{s}_2\in \Psi^*} ||\mathbf{s}_1 - \mathbf{s}_2||.$$
\end{definition}

\begin{remark}%[Minimum value of $\Delta_{\Psi}$]
    % If $\Psi(\mathbf{s}_{0} | \mathcal{P})$ contains the target output semantics $\mathbf{s}^*$ and $\mathcal{P}$ contains the identity program $\rho_{\mathbb{I}}(\mathbf{s}) = \mathbf{s}$, then $ \Delta_{\Psi(\mathbf{s}_{0} | \mathcal{P})} \geq ||\mathbf{s}^* - \mathbf{s}_0||$. 
    % Suppose there is a null function $f_{null}$ in the set of instructions $\mathbb{F}$, where $f_{null}(S)=S$, then $ \overline{\theta_s} \geq ||g^*(S_{input}) - f_{null}( S_{input})||$ based on \textbf{Definition \ref{def:theta_s}}.
    If $\mathcal{P}$ contains the identity program $\rho_{\mathbb{I}}(\mathbf{s}) = \mathbf{s}$, then $ \Delta_{\Psi} \geq ||\mathbf{s}^* - \mathbf{s}_0||$
\end{remark}

\begin{definition}
\label{def:K}
    Given a semantic space $\Psi$ and a fitness function $f: \Psi \mapsto \mathbb{R}$, $\Delta_{f(\Psi)}$ is the minimal upper bound on the fitness difference between any one semantics in $\Psi$ and the target semantics $\mathbf{s}^*$, normalized by their semantic difference.
    % Given any two semantics $S_1$ and $S_2$ ($\forall S_1, S_2 \in \Psi$) and a fitness evaluation function $fit(\cdot)$, $K$ is the minimum upper bound on $\frac{||fit(S_1) - fit(S_2)||}{\overline{ \theta_{s}}} $.
    $$\Delta_{f(\Psi)} = \sup_{\mathbf{s} \in \Psi} \frac{||f(\mathbf{s}) - f(\mathbf{s}^*)||}{||\mathbf{s} - \mathbf{s}^*||}.$$
\end{definition}

\begin{definition}
\label{def:theta_f}
    Given a semantic space $\Psi$ and the instruction set of optimal programs $\mathcal{I}^*$, 
    % so that $\sigma(\mathbf{s}) \in \Psi$ $(\forall \mathbf{s} \in \Psi, \sigma \in \mathcal{I})$, 
    % Given $S_1, S_2 \in \Psi$ and a set of instructions $\mathbb{F}$ $\st f(S)\in \Psi( \forall f\in \mathbb{F})$, when $||S_1 - S_2||=\theta_{s12}$,
    $\Delta_{(\mathcal{I}^*,\Psi)}$ is the minimal upper bound on the change in the difference between any two semantics in $\Psi$ by applying any single instruction in $\mathcal{I}^*$ to them.
    $$\Delta_{(\mathcal{I}^*,\Psi)} =\sup_{\substack{
        \mathbf{s}_1, \mathbf{s}_2 \in \Psi, \\
        \sigma\in \mathcal{I}^*}} (||\sigma(\mathbf{s}_1) - \sigma(\mathbf{s}_2)||-||\mathbf{s}_1 - \mathbf{s}_2||).$$
\end{definition}

\begin{definition}
\label{def:theta_F}
    % Given $S_1, S_2 \in \Psi$ and a set of instructions $\mathbb{F}$ $\st f(S)\in \Psi( \forall f\in \mathbb{F})$, when $||S_1 - S_2||=\theta_{s12}$, 
    % $\overline{\theta_F}$ is the minimum upper bound on $||f_1(S_1) - f_2(S_2)||-\theta_{s12} $.
    % $$\overline{\theta_F} =\sup_{S_1, S_2 \in \Psi, \forall f_1, f_2\in \mathbb{F}} ||f_1(S_1) - f_2(S_2)||-\theta_{s12}$$
    Given a semantic space $\Psi$, a set of instructions $\mathcal{I}$ so that $\sigma(\mathbf{s}) \in \Psi$ $(\forall \mathbf{s} \in \Psi, \sigma \in \mathcal{I})$, and a set of instructions of optimal programs $\mathcal{I}^*\subseteq \mathcal{I}$,
    $\Delta_{(\mathcal{I}^2,\Psi)}$ is the minimal upper bound on the difference between the two differences: the difference between any two semantics in $\Psi$ and the difference after applying any instruction in $\mathcal{I}$ and any instruction in $\mathcal{I}^*$ to them, respectively.
    $$\Delta_{(\mathcal{I}^2,\Psi)} =\sup_{\substack{
        \mathbf{s}_1, \mathbf{s}_2 \in \Psi, \\
        \sigma_1\in \mathcal{I}, \sigma_2\in \mathcal{I}^*}} (||\sigma_1(\mathbf{s}_1) - \sigma_2(\mathbf{s}_2)||-||\mathbf{s}_1 - \mathbf{s}_2||).$$
\end{definition}

\begin{lemma}
\label{lemma:theta}
    Given a semantic space $\Psi$ and a set of instructions $\mathcal{I}$, we have $0 \leq \Delta_{(\mathcal{I}^*,\Psi)} \leq \Delta_{(\mathcal{I}^2,\Psi)}$. (For proof refer to Appendix \ref{prf:theta})
\end{lemma}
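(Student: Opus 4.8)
The plan is to prove the two inequalities separately, each by a purely structural observation about suprema; no analytic estimate on the norms is needed, and the argument is valid in the extended reals $[0,+\infty]$ (so it is unaffected if $\Psi$ is unbounded).

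For the lower bound $0 \le \Delta_{(\mathcal{I}^*,\Psi)}$, I would note that the supremum defining $\Delta_{(\mathcal{I}^*,\Psi)}$ ranges over \emph{all} pairs $\mathbf{s}_1,\mathbf{s}_2\in\Psi$ and all $\sigma\in\mathcal{I}^*$. In particular it includes the diagonal choice $\mathbf{s}_1=\mathbf{s}_2$, for which $\|\sigma(\mathbf{s}_1)-\sigma(\mathbf{s}_2)\|-\|\mathbf{s}_1-\mathbf{s}_2\| = 0-0 = 0$. Since $0$ is an attained value of the quantity inside the supremum, the supremum is at least $0$. This uses only that $\Psi$ and $\mathcal{I}^*$ are nonempty, which follows from the standing assumption that an optimal program $\rho^*\in\mathcal{P}^*$ exists (so $\mathcal{I}^*\neq\varnothing$ and $\Psi^*\subseteq\Psi$ is nonempty).

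For the upper bound $\Delta_{(\mathcal{I}^*,\Psi)}\le\Delta_{(\mathcal{I}^2,\Psi)}$, I would invoke the inclusion $\mathcal{I}^*\subseteq\mathcal{I}$ recorded in Definition \ref{def:theta_F}, together with the well-definedness hypothesis there that $\sigma(\mathbf{s})\in\Psi$ for every $\sigma\in\mathcal{I}$ and $\mathbf{s}\in\Psi$. Then for any $\mathbf{s}_1,\mathbf{s}_2\in\Psi$ and any $\sigma\in\mathcal{I}^*$, choosing $\sigma_1=\sigma_2=\sigma$ is admissible in the supremum for $\Delta_{(\mathcal{I}^2,\Psi)}$ (the first slot needs $\sigma\in\mathcal{I}$, the second needs $\sigma\in\mathcal{I}^*$, and both hold). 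Hence every term $\|\sigma(\mathbf{s}_1)-\sigma(\mathbf{s}_2)\|-\|\mathbf{s}_1-\mathbf{s}_2\|$ appearing in the supremum for $\Delta_{(\mathcal{I}^*,\Psi)}$ also appears in the supremum for $\Delta_{(\mathcal{I}^2,\Psi)}$; a supremum taken over a larger index set is at least as large, which gives the inequality. Combining the two parts yields $0\le\Delta_{(\mathcal{I}^*,\Psi)}\le\Delta_{(\mathcal{I}^2,\Psi)}$.

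The only point requiring any care—and the closest thing to an obstacle, though a minor one—is verifying that the diagonal $\{(\mathbf{s}_1,\mathbf{s}_2,\sigma,\sigma):\mathbf{s}_1,\mathbf{s}_2\in\Psi,\ \sigma\in\mathcal{I}^*\}$ is genuinely contained in the index set over which $\Delta_{(\mathcal{I}^2,\Psi)}$ is defined; this is exactly the inclusion $\mathcal{I}^*\subseteq\mathcal{I}$ plus the closure condition $\sigma(\Psi)\subseteq\Psi$, both supplied by Definition \ref{def:theta_F}. I do not anticipate any genuine difficulty beyond bookkeeping.
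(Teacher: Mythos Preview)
Your proposal is correct and follows essentially the same approach as the paper: the paper also obtains $\Delta_{(\mathcal{I}^*,\Psi)}\ge 0$ by plugging in the diagonal $\mathbf{s}_1=\mathbf{s}_2$, and obtains $\Delta_{(\mathcal{I}^*,\Psi)}\le\Delta_{(\mathcal{I}^2,\Psi)}$ by restricting the supremum for $\Delta_{(\mathcal{I}^2,\Psi)}$ to $\sigma_1=\sigma_2=\sigma\in\mathcal{I}^*$. Your write-up is slightly more careful about justifying nonemptiness and the inclusion $\mathcal{I}^*\subseteq\mathcal{I}$, but the argument is the same.
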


% \begin{remark}
%     In Definitions \ref{def:theta_f} and \ref{def:theta_F}, only $\mathbf{s}_1,\mathbf{s}_2\in \Psi$ that can serve as input semantics can satisfy $\Delta_{(\mathcal{I}^*,\Psi)}$ and $\Delta_{(\mathcal{I}^2,\Psi)}$. Limited by the maximum program size $L$, the output semantics $\mathbf{s}_{\sigma_{L}}$ of the last instructions $\sigma_{L}$ of the programs with maximum size can not be accepted as inputs by any more instructions. Therefore, $\Delta_{(\mathcal{I}^*,\Psi)}$ and $\Delta_{(\mathcal{I}^2,\Psi)}$ do not include $\sigma(\mathbf{s}_{\sigma_{L}}), \forall \sigma \in \mathbb{\mathcal{I}}$, although $\mathbf{s}_{\sigma_{L}}\in \Psi$.

%     Note that when $\mathcal{I}$ only includes identity instruction $\sigma_{\mathbb{I}}$ ($\sigma_{\mathbb{I}}(\mathbf{s})=\mathbf{s}$), $\mathcal{I}$ has the minimum $\Delta_{(\mathcal{I}^*,\Psi)}$ and $\Delta_{(\mathcal{I}^2,\Psi)}$, i.e., $\Delta_{(\mathcal{I}^*,\Psi)}=\Delta_{(\mathcal{I}^2,\Psi)}=0$. However, it is unlikely that the optimal program $\rho^*$ is an identity program ($\rho^*=\{\sigma_{\mathbb{I}}\}$). For a specific problem, we treat the $\Delta_{(\mathcal{I}^*,\Psi)}$ and $\Delta_{(\mathcal{I}^2,\Psi)}$ of an instruction set that necessarily constructs $\rho^*$ as the minimum ones.
% \end{remark}

The notations used in this paper are summarized in Table \ref{tb:notations}. We will introduce their meanings when they first occur as well.
\begin{table}[t]
    \centering
    \caption{Notations and their meanings}
    \label{tb:notations}
    \scalebox{0.86}{
    \begin{tabular}{c|p{130mm}} \toprule
         $\overline{x}$ & The upper bound on $x$.  \\
         $\underline{x}$& The lower bound on $x$. \\
    $\mathbf{s}$& An LGP program semantics, represented as a vector of register values. \\
     $\mathbf{s}^*$& The target semantics. \\
     $\mathbf{s}_0$& The input semantics.\\
     $\sigma$& An instruction. $\sigma(\mathbf{s})$ is the output semantics by instruction $\sigma$ given $\mathbf{s}$.\\
     $\mathcal{I}$& The set of all possible instructions.\\
     $n$ & The size of $\mathcal{I}$, $|\mathcal{I}|=n$.\\
     $\Psi$& The semantic space given an instruction set $\mathcal{I}$ and problem inputs.\\
     $\rho$& A program, represented as a sequence of instructions, i.e., $\rho=\sigma_m \circ \sigma_{m-1} \circ \dots \circ \sigma_1(\cdot)$, where $m$ is the number of instructions (program size). \\
     $\mathcal{P}$& An LGP search space, which is essentially a set of programs.\\
     $\rho^*$& The optimal program.\\
     $|\rho|$& The number of instructions in program $\rho$. \\
     $\rho(\mathbf{s})$& The output semantics by program $\rho$, given input semantics $\mathbf{s}$.   \\
     $\delta_{\rho_1, \rho_2}$& The instruction editing distance between two programs $\rho_1$ and $\rho_2$. The instruction editing distance to an optimal program is $\delta^*(\rho)=\delta_{\rho^*, \rho}$.\\
     $f(x)$& The fitness value of $x$. $x$ can be a semantics $\mathbf{s}$ or a program (genotype) $\rho$. \\
     $m$& The program size. $m^*$ is the smallest program size that represents the optimal solution $\rho^*$. For a specific problem, there is a predefined maximum program size $L$.\\
     $\gamma$& The number of registers. $\gamma_{\text{out}}$ denotes the number of output registers.\\
     $\phi$& The program size of accessible optimal programs.\\
     $\Omega(m_1, m_2)$& The neutral bloating factor of an LGP program bloating from program size $m_1$ to $m_2$.\\
     $\Lambda(m_1,m_2)$& The non-neutral bloating factor of an LGP program bloating from program size $m_1$ to $m_2$.\\
     $\eta$& The normalization factor of duplicated programs after variation.\\
         \bottomrule
    \end{tabular} 
    }
\end{table}

\section{Smaller Instruction Editing Distance Implies Smaller Fitness Supremum}\label{sec:fitnessgaptheo}
This section first develops a linear relationship between instruction editing distance $\delta$ and fitness supremums. Second, we prove that the probability of fitness values over different instruction editing distance $\delta_{\rho_1,\rho_2}$ is similar when $\mathcal{I}$ and $f(\Psi)$ are well designed. The empirical results verify the relationship. 
% Finally, we propose an approximation framework for LGP based on the fitness supremums.

\subsection{Fitness Supremums and Fitness Distributions}

\begin{theorem}[Supremums on Fitness Gaps]
    \label{theo:dtofit}
    Given an LGP search space 
    $$\mathcal{P}_{\mathcal{I}, L} := \{\sigma_L \circ \sigma_{L-1} \circ \dots \circ \sigma_1(\cdot)\ |\ \sigma_1, \dots, \sigma_L \in \mathcal{I} \},$$ 
    and its corresponding semantic space $\Psi$, where $\mathcal{I}$ is the set of instructions and $L$ is the maximal program length,
    a set of optimal LGP programs $\mathcal{P}^*_{\mathcal{I}^*,L}$
    % $$\mathcal{P}^*_{\mathcal{I}^*,L}:=\{\sigma_L \circ \sigma_{L-1} \circ \dots \circ \sigma_1(\cdot)\ |\ \sigma_1, \dots, \sigma_L \in \mathcal{I}^*\},$$
    and its corresponding semantic space $\Psi^*$ where $\mathcal{I}^*$ is the set of all the possible instructions in $\mathcal{P}^*$,
    an input semantics $\mathbf{s}_0$, 
    and a fitness function $f(\cdot)$, then the fitness gap between any LGP program $\rho \in \mathcal{P}_{\mathcal{I}, L}$ and any program $\rho^*\in\mathcal{P}^*_{\mathcal{I}^*,L}$ has the following supremums.
    \begin{equation}
        |f(\rho)-f(\rho^*)| \leq
        \Delta_{f(\Psi)} (\Delta_{(\mathcal{I}^2,\Psi)} \delta_{\rho,\rho^*} + \Delta_{(\mathcal{I}^*,\Psi)} (L-\delta_{\rho,\rho^*})), \label{eq:fitgap1}
    \end{equation}
    \begin{equation}
         |f(\rho)-f(\rho^*)| \leq       \Delta_{f(\Psi)} \Delta_{\Psi}, \label{eq:fitgap2}
    \end{equation}
    where $\delta_{\rho,\rho^*}$ stands for the number of different instructions between $\rho$ and $\rho^*$, i.e., their editing distance.
    % Given a search space $\mathbb{S}$ and a fitness function $fit(\cdot)$, 
    % % denote a subset of $\mathbb{S}$ as $\mathbb{S}_d(g_0)=\{ g_1| d(g_1,g_0)=d, \forall g_1 \in \mathbb{S} \}$ where $g_0 \in \mathbb{S}$, 
    % then the fitness difference $\Delta fit(g_1,g_0)$ of any two genotypes $(\forall g_0,g_1 \in \mathbb{S})$ has an upper bound.
    % $$ \Delta fit(g_1, g_0) \leq K \min( (\overline{\theta_F} - \overline{\theta_f})d + m\overline{\theta_{f}}, \overline{\theta_s})$$
\end{theorem}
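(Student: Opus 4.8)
The plan is to prove the two inequalities separately. Inequality \eqref{eq:fitgap2} will follow directly from the defining suprema, while inequality \eqref{eq:fitgap1} will follow from a position-wise ``telescoping'' bound on the semantic distance that accumulates as the two instruction sequences are executed one instruction at a time.

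For \eqref{eq:fitgap2}: since $\rho \in \mathcal{P}_{\mathcal{I},L}$, its output $\rho(\mathbf{s}_0)$ lies in $\Psi$, and since $\rho^* \in \mathcal{P}^*_{\mathcal{I}^*,L}$ is optimal, $\rho^*(\mathbf{s}_0) = \mathbf{s}^* \in \Psi^*$. Definition~\ref{def:K} then gives $|f(\rho) - f(\rho^*)| = \|f(\rho(\mathbf{s}_0)) - f(\mathbf{s}^*)\| \le \Delta_{f(\Psi)}\,\|\rho(\mathbf{s}_0) - \mathbf{s}^*\|$, and Definition~\ref{def:theta_s} bounds $\|\rho(\mathbf{s}_0) - \mathbf{s}^*\| \le \Delta_{\Psi}$. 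Multiplying the two yields the claim.

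For \eqref{eq:fitgap1}: write $\rho = \sigma_L \circ \cdots \circ \sigma_1$ and $\rho^* = \sigma^*_L \circ \cdots \circ \sigma^*_1$, aligned position-wise so that $\delta_{\rho,\rho^*}$ is exactly the number of indices $i$ with $\sigma_i \neq \sigma^*_i$. Let $\mathbf{s}_i := \sigma_i \circ \cdots \circ \sigma_1(\mathbf{s}_0)$ and $\mathbf{s}^*_i := \sigma^*_i \circ \cdots \circ \sigma^*_1(\mathbf{s}_0)$ be the semantics after the first $i$ instructions of each program; all of these lie in $\Psi$ by the closure property stated in Definition~\ref{def:theta_F}, and $\mathbf{s}_0 = \mathbf{s}^*_0$, so setting $d_i := \|\mathbf{s}_i - \mathbf{s}^*_i\|$ we have $d_0 = 0$. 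I then bound $d_i$ by $d_{i-1}$ in two cases. If $\sigma_i = \sigma^*_i$ (hence $\sigma_i \in \mathcal{I}^*$), then $\mathbf{s}_i = \sigma_i(\mathbf{s}_{i-1})$ and $\mathbf{s}^*_i = \sigma_i(\mathbf{s}^*_{i-1})$, so Definition~\ref{def:theta_f} gives $d_i \le d_{i-1} + \Delta_{(\mathcal{I}^*,\Psi)}$. If $\sigma_i \neq \sigma^*_i$, then $\sigma_i \in \mathcal{I}$ and $\sigma^*_i \in \mathcal{I}^*$, so Definition~\ref{def:theta_F} gives $d_i \le d_{i-1} + \Delta_{(\mathcal{I}^2,\Psi)}$. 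Summing these $L$ inequalities and counting the $L - \delta_{\rho,\rho^*}$ matching and $\delta_{\rho,\rho^*}$ mismatching positions yields $\|\rho(\mathbf{s}_0) - \rho^*(\mathbf{s}_0)\| = d_L \le \Delta_{(\mathcal{I}^2,\Psi)}\,\delta_{\rho,\rho^*} + \Delta_{(\mathcal{I}^*,\Psi)}\,(L - \delta_{\rho,\rho^*})$. Since $\rho^*(\mathbf{s}_0) = \mathbf{s}^*$, applying Definition~\ref{def:K} to $\|\rho(\mathbf{s}_0) - \mathbf{s}^*\| = d_L$ and multiplying by $\Delta_{f(\Psi)}$ gives \eqref{eq:fitgap1}; by Lemma~\ref{lemma:theta} this bound is nondecreasing in $\delta_{\rho,\rho^*}$, which is the monotone ``linear'' relationship the section advertises.

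I expect the main obstacle to be the bookkeeping rather than any hard estimate: one must (i) justify that every prefix output $\mathbf{s}_i$ and $\mathbf{s}^*_i$ genuinely lies in $\Psi$ so that the supremum-type constants of Definitions~\ref{def:theta_f} and~\ref{def:theta_F} can be invoked at each step (this is exactly why $\Psi$ is taken closed under $\mathcal{I}$); (ii) pin down the convention that $\delta_{\rho,\rho^*}$ is measured by position-wise alignment of two length-$L$ sequences, including verifying that an optimal program of natural size below $L$, once padded to length $L$, still has all of its instructions (padding/introns included) in $\mathcal{I}^*$, so the ``matching'' case legitimately uses $\Delta_{(\mathcal{I}^*,\Psi)}$; and (iii) confirm $\rho^*(\mathbf{s}_0) = \mathbf{s}^*$. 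None of these is deep, but the case split in the telescoping step is where the two constants $\Delta_{(\mathcal{I}^*,\Psi)}$ and $\Delta_{(\mathcal{I}^2,\Psi)}$ separate, and making the accumulated coefficients come out as exactly $(L-\delta_{\rho,\rho^*})$ and $\delta_{\rho,\rho^*}$ is the crux of the argument.
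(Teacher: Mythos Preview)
Your proposal is correct and follows essentially the same approach as the paper: a position-wise case split on whether $\sigma_i = \sigma^*_i$ that telescopes the semantic distance using Definitions~\ref{def:theta_f} and~\ref{def:theta_F}, then an application of Definition~\ref{def:K}, with \eqref{eq:fitgap2} obtained directly from Definitions~\ref{def:K} and~\ref{def:theta_s}. Your extra bookkeeping remarks (closure of $\Psi$, the alignment convention for $\delta_{\rho,\rho^*}$, and $\rho^*(\mathbf{s}_0)=\mathbf{s}^*$) are points the paper leaves implicit, so you are if anything more careful than the original.
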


\begin{proof}
% Based on the \textbf{Definition \ref{def:theta_s}} and \textbf{\ref{def:K}}, we have 
%     $\Delta fit(g_1, g_2) = ||fit(g_1(S_{input})) - fit(g_2(S_{input}))||\leq K || g_1(S_{input}) - g_2(S_{input})||$,\\
%     and $|| g_1(S_{input}) - g_2(S_{input})||\leq \overline{\theta_s}$, where $g_1=\{f_{11}, f_{12}, ..., f_{1m}\}$ and $g_2=\{f_{21}, f_{22}, ..., f_{2m}\}$. Thus,
Denote $\rho$ as $\rho_1$ and $\rho^*$ and $\rho_2$,
let $\rho_1 = \sigma_{1,L}\circ \dots \circ\sigma_{1,1}(\cdot)$, and $\rho_2 = \sigma_{2,L}\circ\dots\circ\sigma_{2,1}(\cdot)$. Given the input semantics $\mathbf{s}_0$, let $\mathbf{s}_{ij}$ be the output semantics after the $j$th ($j = 1,\dots,L$) instruction of $\rho_i$ ($i = 1, 2$).

For the $j$th instruction, there are two possible situations:
\begin{itemize}
    \item [(a)] If $\sigma_{1,j} = \sigma_{2,j}$, then $||\mathbf{s}_{1j}-\mathbf{s}_{2j}|| \leq ||\mathbf{s}_{1(j-1)}-\mathbf{s}_{2(j-1)}|| + \Delta_{(\mathcal{I}^*, \Psi)}$ (from \textbf{Definition \ref{def:theta_f}}).
    \item [(b)] If $\sigma_{1,j} \neq \sigma_{2,j}$, then  $||\mathbf{s}_{1j}-\mathbf{s}_{2j}|| \leq ||\mathbf{s}_{1(j-1)}-\mathbf{s}_{2(j-1)}|| + \Delta_{(\mathcal{I}^2, \Psi)}$ (from \textbf{Definition \ref{def:theta_F}}).
\end{itemize}
$\rho_1$ and $\rho_2$ have $\delta_{\rho, \rho^*}$ different instructions and $L -\delta_{\rho, \rho^*}$ same instructions. Therefore, after enumerating the total $L$ instructions, the above situation (a) occurs $L -\delta_{\rho, \rho^*}$ times, and the above situation (b) occurs $\delta_{\rho, \rho^*}$ times. Therefore,
$$
||\rho(\mathbf{s}_0) - \rho^*(\mathbf{s}_0)|| \leq \Delta_{(\mathcal{I}^2, \Psi)}\delta_{\rho, \rho^*} + \Delta_{(\mathcal{I}^*, \Psi)}(L-\delta_{\rho, \rho^*}).
$$

Then, from \textbf{Definition \ref{def:K}}, we have (by substituting the supremums on $||\rho(\mathbf{s}_0) - \rho^*(\mathbf{s}_0)||$ as $||\mathbf{s} - \mathbf{s}^*||$)
\begin{equation}
     |f(\rho)-f(\rho^*)| \leq
        \Delta_{f(\Psi)} (\Delta_{(\mathcal{I}^2,\Psi)} \delta_{\rho,\rho^*} + \Delta_{(\mathcal{I}^*,\Psi)} (L-\delta_{\rho,\rho^*})),
\end{equation}
Eq. (\ref{eq:fitgap1}) is proven.

On the other hand, from \textbf{Definition} \ref{def:K} we have 
$$
|f(\rho)-f(\rho^*)| \leq \Delta_{f(\Psi)} ||\rho(\mathbf{s}_0) - \rho^*(\mathbf{s}_0)||.
$$
Then, from \textbf{Definition} \ref{def:theta_s} we have
$$
0 \leq ||\rho(\mathbf{s}_0) - \rho^*(\mathbf{s}_0)|| \leq \Delta_{\Psi}.
$$
Therefore, $|f(\rho)-f(\rho^*)| \leq       \Delta_{f(\Psi)} \Delta_{\Psi}$, Eq. (\ref{eq:fitgap2}) is proven.  
\end{proof}

% \begin{lemma}
% \label{lemma:theta}
%     Given a semantic space $\Psi$ and a set of instructions $\mathbb{F}$, we have $0\leq \overline{\theta_f} \leq \overline{\theta_F}$. (For proof refer to Appendix \ref{prf:theta})
% \end{lemma}

\begin{remark}
Without loss of generality, we simply assume $\inf(f(\rho))=f(\rho^*)=0$ (e.g., the distance to the target semantics), then the fitness supremum of $\rho$ is re-written as: 
% \begin{align}
%     & \sup f(\rho)  \nonumber \\
%     & = \Delta_{f(\Psi)} \min\{(\Delta_{(\mathcal{I}^2,\Psi)}- \Delta_{(\mathcal{I}^*,\Psi)})\delta^*(\rho) + \Delta_{(\mathcal{I}^*,\Psi)} L, \Delta_{\Psi}\}. \label{eq:fitupper}
% \end{align}    
\begin{equation}
    \sup f(\rho)= \Delta_{f(\Psi)} \min\{(\Delta_{(\mathcal{I}^2,\Psi)}- \Delta_{(\mathcal{I}^*,\Psi)})\delta^*(\rho) + \Delta_{(\mathcal{I}^*,\Psi)} L, \Delta_{\Psi}\}. \label{eq:fitupper}
\end{equation}

    % Eq. (\ref{eq:fitgap1}) is re-written by merging items of $\delta_{\rho, \rho^*}$ as 
    % \begin{multline}
    %     |f(\rho)-f(\rho^*)| \leq \\ 
    %     \Delta_{f(\Psi)} ((\Delta_{(\mathcal{I}^2,\Psi)}- \Delta_{(\mathcal{I}^*,\Psi)})\delta_{\rho,\rho^*} + \Delta_{(\mathcal{I}^*,\Psi)} L). \label{eq:fitgap3}
    % \end{multline}
    From \textbf{Lemma \ref{lemma:theta}}, we have $\Delta_{(\mathcal{I}^2,\Psi)}- \Delta_{(\mathcal{I}^*,\Psi)} \geq 0$. Therefore, \textbf{Theorem \ref{theo:dtofit}} (in particular Eq. (\ref{eq:fitgap1})) shows that within a predefined LGP search space and its corresponding semantic space, a smaller (larger) editing distance from the optimal program $\rho^*$ implies a smaller (larger) fitness supremum.
\end{remark}

\begin{theorem}[Similar Fitness Probability]\label{theo:similar_fit_dist}
Assuming an LGP search space $\mathcal{P}$ containing the target program ($\rho^* \in \mathcal{P}$) 
% and its corresponding semantic space $\psi$ are
is
given, for any editing distance $d$, let the program subspace $\mathcal{P}^d = \{\rho \in \mathcal{P} | \delta_{\rho, \rho^*} \leq d\}$, and for any fitness $v \in f(\mathcal{P}^{d}) = \{f(\rho) | \rho \in \mathcal{P}^d\}$, let the program subspace $\mathcal{P}^{(d, v)} = \{\rho \in \mathcal{P}^d | f(\rho) = v\}$, then for any distances $d \geq d_0 > 0$ and $v \in f(\mathcal{P}^{d_0})$, if
\begin{equation}
\frac{||\mathcal{P}^{(d+1,v)}\setminus \mathcal{P}^{(d,v)}||}{||\mathcal{P}^{d+1}\setminus \mathcal{P}^{d}||} \leq \frac{||\mathcal{P}^{(d,v)}||}{||\mathcal{P}^{d}||} + \frac{||\mathcal{P}^{(d,v)}||}{||\mathcal{P}^{d+1}\setminus \mathcal{P}^{d}||}, \label{eq:small_dist_variation}
\end{equation}
then we have 
\begin{equation}
    \left | \frac{||\mathcal{P}^{(d+1,v)}||}{||\mathcal{P}^{d+1}||} - \frac{||\mathcal{P}^{(d,v)}||}{||\mathcal{P}^{d}||} \right | < \frac{||\mathcal{P}^{(d,v)}||}{||\mathcal{P}^{d}||}, \label{eq:similar_fit_dist}
\end{equation}
where $||\mathcal{P}||$ stands for the number of programs in the program space $\mathcal{P}$.
\end{theorem}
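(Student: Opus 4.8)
The plan is to reduce the whole statement to elementary algebra on four nonnegative integers. First I would abbreviate $a := ||\mathcal{P}^{(d,v)}||$, $b := ||\mathcal{P}^{d}||$, $c := ||\mathcal{P}^{(d+1,v)}\setminus \mathcal{P}^{(d,v)}||$ and $e := ||\mathcal{P}^{d+1}\setminus \mathcal{P}^{d}||$. The key structural observation is that $\mathcal{P}^{d}\subseteq \mathcal{P}^{d+1}$ by definition, which forces $\mathcal{P}^{(d,v)}\subseteq \mathcal{P}^{(d+1,v)}$; consequently $||\mathcal{P}^{(d+1,v)}|| = a+c$ and $||\mathcal{P}^{d+1}|| = b+e$, and all four quantities are $\geq 0$. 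Moreover, since $d\geq d_0$ and $v\in f(\mathcal{P}^{d_0})$, the set $\mathcal{P}^{(d,v)}$ is nonempty, so $a\geq 1$ and hence $b\geq a>0$. I would also note that if $e=0$ then $\mathcal{P}^{d+1}=\mathcal{P}^{d}$, so $c=0$ too, the left-hand side of (\ref{eq:similar_fit_dist}) is exactly $0<\tfrac{a}{b}$, and we are done; so in the main argument I may assume $e>0$.

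Next I would rewrite the target. A direct computation gives
$$\frac{a+c}{b+e}-\frac{a}{b}=\frac{bc-ae}{b(b+e)},$$
so, multiplying through by the strictly positive quantity $b(b+e)$, the conclusion (\ref{eq:similar_fit_dist}) is equivalent to the two-sided bound $-a(b+e) < bc-ae < a(b+e)$.

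For the lower bound I would use only signs: $bc\geq 0$ and $ab>0$ give $bc-ae \geq -ae > -ae-ab = -a(b+e)$. For the upper bound I would invoke hypothesis (\ref{eq:small_dist_variation}): multiplying $\frac{c}{e}\leq \frac{a}{b}+\frac{a}{e}$ by the positive quantity $be$ yields $bc\leq ae+ab=a(b+e)$, whence $bc-ae\leq ab < a(b+e)$ because $ae>0$. Combining the two bounds gives $|bc-ae|<a(b+e)$, i.e. $\frac{|bc-ae|}{b(b+e)}<\frac{a}{b}$, which is exactly (\ref{eq:similar_fit_dist}).

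I do not expect a genuine obstacle: the argument is essentially bookkeeping once the reduction $\frac{a+c}{b+e}-\frac{a}{b}=\frac{bc-ae}{b(b+e)}$ is in place. The points that need care are (i) keeping the set inclusions straight so that the set-difference cardinalities combine cleanly into $a+c$ and $b+e$, and (ii) ensuring that every denominator one clears is strictly positive — which is precisely why the hypotheses $d\geq d_0>0$ and $v\in f(\mathcal{P}^{d_0})$ (forcing $a\geq 1$, hence $b>0$) are invoked and why the degenerate case $e=0$ is split off at the start.
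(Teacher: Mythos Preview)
Your argument is correct and follows essentially the same route as the paper: both reduce the difference to $\dfrac{bc-ae}{b(b+e)}$ (in your notation), bound the positive side via the hypothesis to get $bc-ae\le ab$, and bound the negative side by dropping the nonnegative term $bc$. Your version is a bit tidier thanks to the $a,b,c,e$ abbreviations and the explicit treatment of the degenerate case $e=0$; the paper instead asserts at the end that $||\mathcal{P}^{d+1}||>||\mathcal{P}^{d}||>0$ (so $e>0$) because one can always add instructions to $\rho^*$.
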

\begin{proof}
First, we have
% \begin{footnotesize}
    \begin{align*}
   & \frac{||\mathcal{P}^{(d+1,v)}||}{||\mathcal{P}^{d+1}||} - \frac{||\mathcal{P}^{(d,v)}||}{||\mathcal{P}^{d}||} \\
    = \;& \frac{||\mathcal{P}^{(d+1,v)}||\cdot ||\mathcal{P}^{d}|| - ||\mathcal{P}^{(d,v)}|| \cdot ||\mathcal{P}^{d+1}||}{||\mathcal{P}^{d+1}|| \cdot ||\mathcal{P}^{d}||} \\
    = \;& \frac{(||\mathcal{P}^{(d+1,v)}\setminus \mathcal{P}^{(d,v)}|| + || \mathcal{P}^{(d,v)} ||)\cdot ||\mathcal{P}^{d}|| - ||\mathcal{P}^{(d,v)}|| \cdot ||\mathcal{P}^{d+1}||}{||\mathcal{P}^{d+1}|| \cdot ||\mathcal{P}^{d}||} \\
    = \;& \frac{||\mathcal{P}^{(d+1,v)}\setminus \mathcal{P}^{(d,v)}||\cdot ||\mathcal{P}^{d}|| - ||\mathcal{P}^{(d,v)}|| \cdot ||\mathcal{P}^{d+1}\setminus \mathcal{P}^{d}||}{||\mathcal{P}^{d+1}|| \cdot ||\mathcal{P}^{d}||}.
\end{align*}
% \end{footnotesize}%
In addition, since
$$
\frac{||\mathcal{P}^{(d+1,v)}\setminus \mathcal{P}^{(d,v)}||}{||\mathcal{P}^{d+1}\setminus \mathcal{P}^{d}||} \leq \frac{||\mathcal{P}^{(d,v)}||}{||\mathcal{P}^{d}||} + \frac{||\mathcal{P}^{(d,v)}||}{||\mathcal{P}^{d+1}\setminus \mathcal{P}^{d}||},
$$
we have
\begin{align*}
   & ||\mathcal{P}^{(d+1,v)}\setminus \mathcal{P}^{(d,v)}||\cdot ||\mathcal{P}^{d}|| - ||\mathcal{P}^{(d,v)}|| \cdot ||\mathcal{P}^{d+1}\setminus \mathcal{P}^{d}|| \\
   \leq \; & ||\mathcal{P}^{(d,v)}|| \cdot ||\mathcal{P}^{d}||.
\end{align*}
Therefore, 
\begin{equation*}
    \frac{||\mathcal{P}^{(d+1,v)}||}{||\mathcal{P}^{d+1}||} - \frac{||\mathcal{P}^{(d,v)}||}{||\mathcal{P}^{d}||} \leq \frac{||\mathcal{P}^{(d,v)}||}{||\mathcal{P}^{d+1}||} < \frac{||\mathcal{P}^{(d,v)}||}{||\mathcal{P}^{d}||}. \label{eq:sim_fit_dist_pos}
\end{equation*}
Second, if 
$$
\frac{||\mathcal{P}^{(d+1,v)}||}{||\mathcal{P}^{d+1}||} - \frac{||\mathcal{P}^{(d,v)}||}{||\mathcal{P}^{d}||} < 0,
$$
Then
\begin{align*}
& \frac{||\mathcal{P}^{(d,v)}||}{||\mathcal{P}^{d}||} 
 - \frac{||\mathcal{P}^{(d+1,v)}||}{||\mathcal{P}^{d+1}||} \\
    = \;& \frac{||\mathcal{P}^{(d,v)}|| \cdot ||\mathcal{P}^{d+1}\setminus \mathcal{P}^{d}|| - ||\mathcal{P}^{(d+1,v)}\setminus \mathcal{P}^{(d,v)}||\cdot ||\mathcal{P}^{d}||}{||\mathcal{P}^{d+1}|| \cdot ||\mathcal{P}^{d}||} \\
    \leq \;& \frac{||\mathcal{P}^{(d,v)}|| \cdot ||\mathcal{P}^{d+1}\setminus \mathcal{P}^{d}||}{||\mathcal{P}^{d}||\cdot ||\mathcal{P}^{d+1}||} < \frac{||\mathcal{P}^{(d,v)}||}{||\mathcal{P}^{d}||}.
\end{align*}
Overall, we have 
$$
\left | \frac{||\mathcal{P}^{(d+1,v)}||}{||\mathcal{P}^{d+1}||} - \frac{||\mathcal{P}^{(d,v)}||}{||\mathcal{P}^{d}||} \right | < \frac{||\mathcal{P}^{(d,v)}||}{||\mathcal{P}^{d}||}.
$$
Note that $||\mathcal{P}^{d+1}||>||\mathcal{P}^{d}||>0$ as they can be easily constructed by adding instructions into $\rho^*$.
\end{proof}
\begin{remark}
    In \textbf{Theorem \ref{theo:similar_fit_dist}}, $\frac{||\mathcal{P}^{(d,v)}||}{||\mathcal{P}^{d}||}$ stands for the conditional probability 
    $$
    \frac{||\mathcal{P}^{(d,v)}||}{||\mathcal{P}^{d}||} =: \Pr(f(\rho) = v | \delta_{\rho,\rho^*} \leq d),
    $$ 
    and $\frac{||\mathcal{P}^{(d+1,v)}\setminus \mathcal{P}^{(d,v)}||}{||\mathcal{P}^{d+1}\setminus \mathcal{P}^{d}||}$ stands for the conditional probability 
    $$
    \frac{||\mathcal{P}^{(d+1,v)}\setminus \mathcal{P}^{(d,v)}||}{||\mathcal{P}^{d+1}\setminus \mathcal{P}^{d}||} =: \Pr(f(\rho) = v | \delta_{\rho,\rho^*} = d+1),
    $$
    where ``$=:$'' indicates denoting the left side by the right notation.
    Therefore, \textbf{Theorem \ref{theo:similar_fit_dist}} implies that when the LGP program space expands from distance $d$ to distance $d+1$ (by including more distant programs from the target program), if the probability for each fitness value does not increase by more than $\frac{||\mathcal{P}^{(d,v)}||}{||\mathcal{P}^{d+1}\setminus \mathcal{P}^{d}||}$ (Eq. (\ref{eq:small_dist_variation})), then the fitness probability over the search spaces $\mathcal{P}^{d+1}$ and $\mathcal{P}^{d}$ are also similar to each other (at most by $\Pr(f(\rho) = v | \delta_{\rho,\rho^*} \leq d)$ for the probability of each fitness value). 
    $\Pr(f(\rho) = v | \delta_{\rho,\rho^*} \leq d)$ is small enough when the fitness function has multiple possible values across the search space. For example, the R-square fitness function $R^2(\rho)\in [0,1]$ in regression results in small enough $\Pr(f(\rho) = v | \delta_{\rho,\rho^*} \leq d)$, while an indicator function of finding optimal programs or not $f(\rho)\in\{0,1\}$ does not.
    
\end{remark}

\textbf{Theorem \ref{theo:dtofit}} shows that reducing $\delta_{\rho,\rho^*}$ is equivalent to reducing the supremum on $f(\rho)$ based on a fixed infimum $f(\rho^*)=0$. 
\textbf{Theorem \ref{theo:similar_fit_dist}} shows that the fitness probability below the supremum is similar between the search spaces $\mathcal{P}^d$ and $\mathcal{P}^{d+1}$ when the fitness function satisfies Eq.~(\ref{eq:small_dist_variation}).
Therefore, the increase (reduction) of $\delta_{\rho,\rho^*}$ implies the increase (reduction) of the expectation of fitness over $\delta_{\rho,\rho^*}$,
% Because the fitness distribution between $d$ and $d+1$ is similar and the range of fitness values (from 0 to the fitness supremum) keeps enlarging with $d$, the increase (reduction) of $d^*(g)$ consistently implies the increase (reduction) of the expectation of fitness over $d^*(g)$.
% $d^*(g)$ consistently describes the behaviors of fitness expectation, 
which provides a connection between fitness and search space. 
This enables us to analyze the LGP evolution behaviors from the genotype perspective since LGP prefers moving to search spaces with better fitness, i.e., smaller $\delta_{\rho,\rho^*}$. 

\subsection{Verification of Fitness Supremum and Expectation}
\label{sec:estimateCoeff}

% \textbf{Theorem \ref{theo:dtofit}} and its empirical verification imply that the $\Delta f_{\rho_1, \rho_2}$ grows with the fitness gap upper bound. Based on \textbf{Theorem \ref{theo:dtofit}}, a question arises: does $f(\rho)$ change with the fitness supremum based on a similar distribution? To answer this question, we investigate the test performance of LGP with various fitness supremums, caused by different $\Delta_{(\mathcal{I}^*,\Psi)}$ and $\Delta_{(\mathcal{I}^2,\Psi)}$, and $L$. 

To verify the relationship between fitness supremum and its expectaction, we investigate the test performance of LGP with various fitness supremums, caused by different $\Delta_{(\mathcal{I}^*,\Psi)}$ and $\Delta_{(\mathcal{I}^2,\Psi)}$, and $L$. 
We select symbolic regression problems, a typical application of LGP, as our test problems.
We evolve LGP based on the recommended parameter settings in existing LGP studies \cite{Huang2022}. Specifically, LGP initializes programs with 5 to 20 random instructions. Each LGP program can use up to 100 instructions during evolution. The population size is 256 individuals, evolving for 200 generations. Both the adding and removing mutation in freemut have 45\% probability, with reproduction taking another 10\% probability. The tournament selection size is 7, and the elitism selection size is 3 (i.e., $\approx$1\% of the population size). LGP constructs its instruction set and programs based on a primitive set, $\{+,-,\times,\div,\sin,\cos,\sqrt{|\cdot|},\ln(|\cdot|),\text{problem inputs} \}$. The rest of the experiments in this paper follow these settings.

\begin{table}[]
    \centering
    \caption{Manipulated Instruction Set.}
    \label{tb:instructionset}
    \scalebox{0.9}{
    \begin{tabular}{@{}p{20mm}p{90mm}@{}} 
    \toprule
Name & \makecell[c]{Instruction Set}   \\ \midrule
default  &  $\mathcal{I}$  \\
fx1.1     &  $\mathcal{I}$ $\cup$ $\{\sigma'|\sigma'(x)=1.1\sigma(x),\forall \sigma(x)\in\mathcal{I}\}$ \\
fx2      & $\mathcal{I}$ $\cup$ $\{\sigma'|\sigma'(x)=2\sigma(x),\forall \sigma(x)\in\mathcal{I}\}$  \\
fx4      & $\mathcal{I}$ $\cup$ $\{\sigma'|\sigma'(x)=4\sigma(x),\forall \sigma(x)\in\mathcal{I}\}$ \\
exp      & $\mathcal{I}$ $\cup$ $\{\sigma'|\sigma'(x)=exp(x)\}$ \\
add+100  & $\mathcal{I}$ $\cup$ $\{\sigma'|\sigma'(x)=\sigma(x)+100,  \forall \sigma(x)\in\mathcal{I} \wedge \sigma_{fun}=``+''\}$ \\
add+1000 & $\mathcal{I}$ $\cup$ $\{\sigma'|\sigma'(x)=\sigma(x)+1000,  \forall \sigma(x)\in\mathcal{I} \wedge \sigma_{fun}=``+''\}$ \\
\bottomrule
\end{tabular}}
\end{table}

% \makecell[c]{ default $\cup$\\ $\{ \text{the outputs of instructions with ``$+$'' add a constant of 100} \}$ } 

$\Delta_{(\mathcal{I}^*,\Psi)}$ and $\Delta_{(\mathcal{I}^2,\Psi)}$ are decided by the instruction set $\mathcal{I}$. When $\mathcal{I}$ includes instructions that behave very differently with slightly different inputs (e.g., $\exp(\cdot)$), $\Delta_{(\mathcal{I}^*,\Psi)}$ and $\Delta_{(\mathcal{I}^2,\Psi)}$ would be large. With this in mind, we design six instruction sets, as shown in Table \ref{tb:instructionset}. 
We ensure that the search spaces based on these instruction sets must include $\rho^*$. 
% since they are the super sets of $\mathcal{I}$.

From ``fx1.1'' to ``fx4'', we include additional functions into the default primitive set by multiplying coefficients 1.1, 2, and 4, respectively. By scaling the output of each instruction, we scale $\Delta_{(\mathcal{I}^*,\Psi)}$ and $\Delta_{(\mathcal{I}^2,\Psi)}$ by a factor of $4$ in ``fx4'', $2$ in ``fx2'', and $1.1$ in ``fx1.1''. 
% we can scale  from a large value (i.e., ``fx4'' scale up 4 times for ) to a small one``fx05''. 
In ``exp'', ``add+100'', and ``add+1000'', we include one more function in the instruction set. Specifically, we include $\exp(x)$ in ``exp'', the instruction that adds 100 to the output for the instructions with addition in ``add+100'', and the instruction adding 1000 to the output for the instructions with addition in ``add+1000''. 
Given that the default instruction set $\mathcal{I}$ has been well designed to necessarily include $\rho^*$ by existing studies, it is hard to further reduce $\Delta_{(\mathcal{I}^*,\Psi)}$ and $\Delta_{(\mathcal{I}^2,\Psi)}$ from the default instruction set.

To verify GP performance, we test GP on ten benchmark problems, including Nguyen4, Nguyen5, Nguyen7, Keijzer11, R1, \href{https://archive.ics.uci.edu/dataset/291/airfoil+self+noise}{Airfoil}, \href{https://www.kaggle.com/datasets/schirmerchad/bostonhoustingmlnd}{BHouse}, Tower, 
\href{https://archive.ics.uci.edu/dataset/183/communities+and+crime}{CCN}, 
and \href{https://archive.ics.uci.edu/dataset/186/wine+quality}{Redwine}. 
LGP minimizes the relative square error (RSE, i.e., fitness) of programs on these problems.

Table \ref{tb:testperform_F} shows the test RSE of LGP with different instruction sets. The best mean performance is highlighted in bold. We perform the Friedman test and Wilcoxon rank-sum test with a significance level of 0.05 to analyze the results. 
Although ``fx1.1'', ``fx2'', and ``fx4'' all double the instruction set, indicating a much larger search space, ``fx1.1'' has statistically similar performance with $\mathcal{I}$, while the other two are significantly worse than $\mathcal{I}$ in many cases. 
In addition, methods with instructions whose outputs vary greatly with different inputs (e.g., ``fx4'' and ``exp'') are worse than those with instructions whose outputs are less sensitive, in terms of the mean rank.
For example, ``fx4'' (mean rank is $6.3$) is worse than ``fx2'' (mean rank is $5.5$), and ``exp'' (mean rank is $3.45$) is worse than ``add+100'' (mean rank is $3$). 
The results show that smaller $\Delta_{(\mathcal{I}^*,\Psi)}$ and $\Delta_{(\mathcal{I}^2,\Psi)}$ (i.e., a smaller upper bound on $\Delta f^*$) likely imply better test performance of LGP in practice. 

\begin{table*}[]
\caption{Test relative square error (standard deviation) of LGP with different instruction sets}
\label{tb:testperform_F}
\setlength{\tabcolsep}{3pt}
\scalebox{0.73}{
\begin{tabular}{cccccccc} 
\toprule
Problem   & default       & fx1.1              & fx2             & fx4             & exp             & add+100         & add+1000        \\ 
\midrule
Nguyen4   & 0.146 (0.103) & 0.188 (0.095) = & 0.221 (0.165) - & 0.237 (0.1) -   & \textbf{0.146 (0.101)} = & 0.161 (0.109) = & 0.188 (0.095) = \\
Nguyen5   & \textbf{0.081 (0.071)} & 0.095 (0.075) = & 0.084 (0.067) = & 0.284 (0.622) - & 0.121 (0.068) - & 0.112 (0.072) = & 0.095 (0.075) = \\
Nguyen7   & \textbf{0.001 (0)}     & 0.001 (0.002) = & 0.002 (0.005) - & 0.003 (0.006) - & 0.001 (0.001) = & 0.003 (0.015) = & 0.001 (0.002) = \\
Keijzer11 & 0.303 (0.143) & \textbf{0.256 (0.106)} = & 0.526 (0.414) - & 0.855 (1.834) - & 0.31 (0.156) =  & 0.308 (0.126) = & \textbf{0.256 (0.106)} = \\
R1        & 0.074 (0.033) & 0.073 (0.046) = & 0.079 (0.03) =  & 0.094 (0.057) - & \textbf{0.064 (0.04)} =  & 0.082 (0.046) = & 0.073 (0.046) = \\
Airfoil   & \textbf{0.601 (0.124)} & 0.612 (0.112) = & 0.69 (0.1) -    & 0.722 (0.132) - & 0.614 (0.113) = & 0.603 (0.121) = & 0.612 (0.112) = \\
Bhouse    & 0.432 (0.101) & 0.435 (0.105) = & 0.456 (0.109) = & 0.477 (0.106) - & 0.431 (0.112) = & \textbf{0.422 (0.127)} = & 0.435 (0.105) = \\
Tower     & 0.37 (0.048)  & 0.386 (0.062) = & 0.421 (0.08) -  & 0.419 (0.067) - & \textbf{0.369 (0.068)} = & 0.378 (0.06) =  & 0.386 (0.062) = \\
CNN       & 0.071 (0.081) & 0.055 (0.067) = & 0.079 (0.095) = & \textbf{0.053 (0.059)} = & 0.069 (0.108) = & 0.056 (0.072) = & 0.055 (0.067) = \\
Redwine   & 0.763 (0.034) & 0.771 (0.052) = & 0.8 (0.07) -    & 0.808 (0.065) - & \textbf{0.758 (0.037)} = & 0.773 (0.075) = & 0.771 (0.052) = \\ \hline
mean rank & \textbf{2.6}           & 3.95            & 5.5             & 6.3             & 3.45            & 3               & 3.2             \\
p-value   &               & 1               & 0.055           & 0.003           & 1               & 1               & 1        \\
\bottomrule
\end{tabular}
}
\end{table*}

\section{Estimation on Expectation of $\delta$ over Search Space}\label{sec:d_overspace}
This section estimates the expectation of $\delta$ over the search space by relaxing the expectation to its upper bounds. The expectation of $\delta$ allows us to estimate the expectation of fitness values over the search space. To facilitate the understanding of our modeling, we visualize the search space of LGP and name it the ``\textit{exploding lasagna model}''. 

\subsection{Exploding Lasagna Model}
\begin{figure}
    \centering
    \includegraphics[scale=0.7, viewport= 5 40 520 350, clip=true]{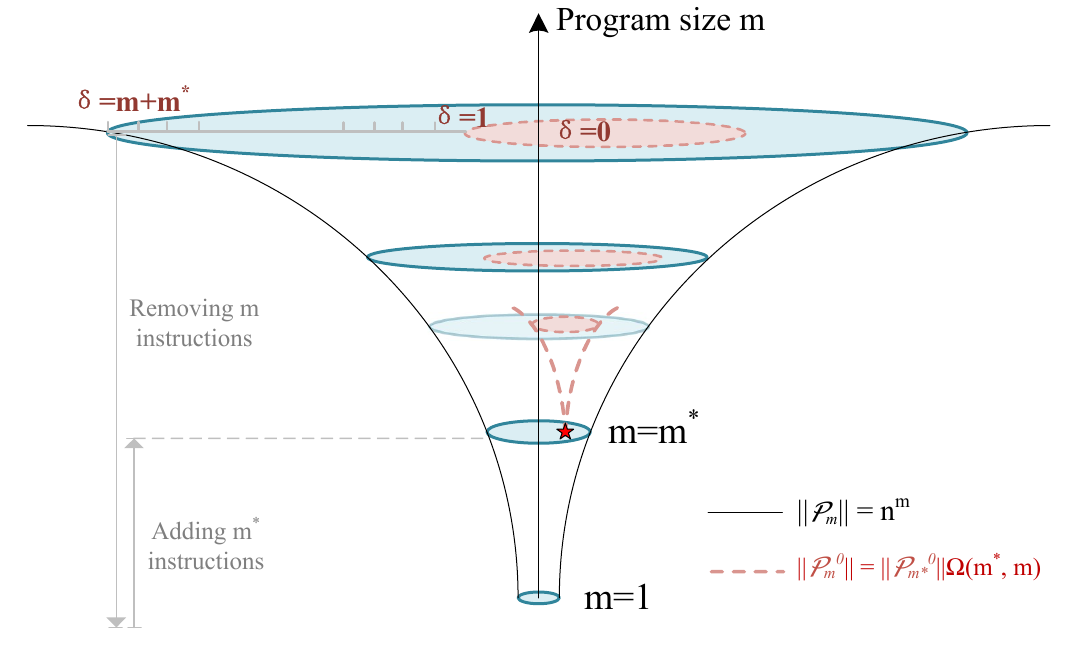}
    \caption{The exploding lasagna model for an LGP search space.}
    \label{fig:explodinglasagna}
\end{figure}
We divide the entire search space based on program size $m$.
Fig. \ref{fig:explodinglasagna} shows an exploding lasagna model. The term ``lasagna'' indicates that there are multiple layers in the models. Each layer in the model is composed of all the possible LGP solutions with the same program size $m$ (the same number of instructions). The term ``exploding'' indicates the exponentially increasing solution number in each layer. Given the total number of $n$ different instructions, there are $n^m$ possible programs in the $m$-th layer.

Let $\rho^{*}_{(m)}$ be an optimal/target program with $m$ instructions, and $m^*$ the size of the shortest possible optimal program, then from an optimal program $\rho^{*}_{(m)}$ ($m \geq  m^*$), we can obtain the optimal programs in its adjacent layers ($\rho^{*}_{(m\pm 1)}$) by adding or removing an intron instruction, as shown in the red regions in Fig. \ref{fig:explodinglasagna}. 
Specifically, the internal exploding lasagna model increases its program size with a bloating factor $\Omega(m^*, m) (m \geq m^*)$ (see \textbf{Lemma \ref{lemma:UBLBomega}}).  
An LGP solution jumps from a layer of $m$ to $m+1$ by adding an instruction and jumps to $m-1$ by removing an instruction. An LGP solution is at most $m^*+m$ away from $\rho^*$ (see \textbf{Lemma \ref{lemma:d}}).

% From the layer of the smallest size of optimal programs $m^*$ to the upper ones, we can obtain the optimal programs $\rho^*$ in each layer. Since the optimal programs with size of $m$ ($\rho^*(m)$) can derive $\rho^*(m\pm 1)$ by adding or removing introns, the $\rho^*$ in layers of $m\geq m^*$ form an internal exploding lasagna model (i.e., the red one in Fig. \ref{fig:explodinglasagna}). Specifically, the internal exploding lasagna model increases its program size with a bloating factor $\Omega(m^*, m) (m \geq m^*)$ (see \textbf{Lemma \ref{lemma:neutralbloat}}).  
% An LGP solution jumps from a layer of $m$ to $m+1$ by adding an instruction and jumps to $m-1$ by removing an instruction. An LGP solution is at most $m^*+m$ away from $g^*$ (see \textbf{Lemma \ref{lemma:d}}).

\subsection{Estimation on $\delta$ Expectation over Search Space}
As $\delta^*(\rho)$ defines the fitness supremum and implies the increment of fitness expectation,
estimating $\delta^*(\rho)$ over the search space helps analyze the change of fitness expectation over the search space. We estimate the expectation of $\delta$ by counting the number of solutions that access $\rho^*$ with at least $\delta$ steps. 
% Specifically, we count the solutions based on the genetic operators (i.e., adding or removing a random instruction) given a program size. 
Here, we focus on a case in which $m^*$ is included in the search space.

% The solutions with size $m$ might contain introns. An LGP solution with a certain fitness has neighbors with the same fitness by adding or removing introns if there are any introns in the LGP solution. 
% Introns are the instructions that do not contribute to the final output of LGP programs. 
% Specifically, we define constructing neighbors by adding introns as \textit{neutral bloat}. 

% The identification of introns depends on the effective destination registers. For example, if there are four registers, $R0$ to $R3$ (i.e., $\mathbb{R}=\{R0,R1,R2,R3\}$), and $R0$ and $R1$ are effective destination registers (i.e., $R2$ and $R3$ are not effective), then the instructions with destination registers $R2$ and $R3$ are introns. 

Let $|\rho|$ be the number of instructions in program $\rho$, $\delta^*(\rho)$ be the editing distance from a program $\rho$ to its closest optimal program, $\mathcal{P}_{m}$ be the set of programs with $m$ instructions (i.e., program size is $m$), $\mathcal{P}^{d}_{m}$ be the set of programs with size $m$ and $\delta^*(\rho)\leq d$, and $\mathcal{P}^{\delta^*=d}_{m}$ be the set of programs with size $m$ and $\delta^*(\rho)= d$, then the expected editing distance from a program with size $m$ to its closest target program is:
% Given a program size $m$, the expectation of $\delta$ over the solutions with size $m$ is shown as Eq. (\ref{eq:Edsm}).
\begin{align}
\label{eq:Edsm}
\mathbb{E}[\delta^*(\rho)\ |\ \rho \in  \mathcal{P}_{m}] = \frac{\sum_{\rho \in \mathcal{P}_{m}} \delta^*_{\rho}}{n^m} = \sum_{d=\inf{\delta^*_{(m)}}}^{\sup{\delta^*_{(m)}}} \frac{d}{n^m}\cdot ||\mathcal{P}^{\delta^*=d}_m ||.
% E(\delta^*_{\rho}\ |\ \rho \in  \mathcal{P}_{m}) = \sum_{d=\underline{d^*}}^{\overline{d^*}}P( d^*=d | m)\cdot d
\end{align}

\begin{lemma}[Infimum and Supremum on $\delta^*$]
    \label{lemma:d}
     Suppose the smallest program size that represents $\rho^*$ is $m^*$, and LGP only uses adding or removing one random instruction as genetic operators to produce offspring. Then an LGP solution with size $m$ has infimum and supremum on $\delta^*_{(m)}$ to access $\rho^*$. Specifically,
    $$\inf{\delta^*_{(m)}}=\max\{0, m^*-m\} \leq \delta^*_{(m)} \leq m^*+m = \sup{\delta^*_{(m)}}.$$
    (For proof refer to Appendix \ref{prf:d})
\end{lemma}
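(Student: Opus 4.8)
The plan is to establish the two bounds separately, working directly with the \textbf{freemut} operator (add/remove one instruction) and the definition of $m^*$ as the size of the shortest optimal program.

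First I would prove the lower bound $\inf \delta^*_{(m)} = \max\{0, m^*-m\}$. The key observation is that each freemut move changes the program size by exactly $1$, and simultaneously changes the instruction editing distance to any fixed target by at most $1$. Hence if a size-$m$ program $\rho$ is to reach \emph{some} optimal program $\rho^*_{(m')}$ of size $m'$, the path must include at least $|m - m'|$ size-changing steps, so $\delta_{\rho,\rho^*_{(m')}} \geq |m - m'|$. Since the smallest admissible $m'$ is $m^*$, when $m < m^*$ we get $\delta^*_{(m)} \geq m^* - m$; when $m \geq m^*$ the trivial bound $\delta^*_{(m)} \geq 0$ applies (and is attained by an optimal program of size $m$, which exists because we can pad $\rho^*_{(m^*)}$ with $m - m^*$ intron instructions). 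This gives $\inf \delta^*_{(m)} = \max\{0, m^*-m\}$, with attainability witnessed explicitly.

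Next I would prove the upper bound $\sup \delta^*_{(m)} = m^* + m$. For the $\leq$ direction: from any size-$m$ program $\rho$ we can reach $\rho^*_{(m^*)}$ by first removing all $m$ instructions one at a time (reaching the empty/size-$0$ program in $m$ steps) and then adding the $m^*$ instructions of $\rho^*_{(m^*)}$ one at a time ($m^*$ further steps), so the editing distance to that particular optimal program — and hence $\delta^*(\rho)$ — is at most $m + m^*$. For the $\geq$ direction (that the bound is tight, i.e., it is actually the supremum and not merely an upper bound), I would exhibit a size-$m$ program whose instruction set is disjoint from $\mathcal{I}^*$ and, more importantly, argue that no shorter route exists: reaching any optimal program $\rho^*_{(m')}$ requires editing distance at least $m'$ on the "target side" plus, when the source program shares no instruction with it, essentially $m$ deletions; making this rigorous means arguing that for a suitably chosen $\rho$ every instruction of $\rho$ must be removed and every instruction of the eventual optimal program must be added, giving $\delta^*(\rho) = m + m'\geq m + m^*$. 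Here one must be slightly careful that a single freemut "add" or "remove" cannot be reinterpreted as a substitution, which is exactly the advantage of freemut noted earlier in the paper (the number of changing instructions equals the editing distance).

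The main obstacle I anticipate is the tightness of the upper bound: showing that $m^* + m$ is genuinely achieved rather than just an upper envelope. This requires care about what "instruction editing distance" means under an operator that only inserts and deletes (never substitutes), and about the existence of a size-$m$ program that is maximally far — one must confirm the instruction set $\mathcal{I}$ is rich enough to contain $m$ instructions none of which appears in any optimal program, or otherwise handle the degenerate case. The lower bound, by contrast, is essentially a one-line parity/counting argument once one notes that size and editing distance both move by at most one per step, so I expect it to be routine.
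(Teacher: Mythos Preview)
Your proposal is correct and follows essentially the same approach as the paper: the infimum is obtained from the size-change constraint (each freemut step alters program length by one, so at least $m^*-m$ additions are needed when $m<m^*$, with $0$ attained via intron padding when $m\geq m^*$), and the supremum is obtained by the ``remove all $m$, then insert all $m^*$'' route together with a maximally-disjoint witness program. Your treatment is in fact more careful than the paper's own sketch---in particular your explicit worry about whether $\mathcal{I}$ is rich enough to realise a program sharing no instruction with any optimal program is a genuine point the paper simply assumes away.
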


% \begin{lemma}\label{lemma:LBUB_ra}
%     Given an LGP program $\rho$ with $\delta^*=d$, denote $r$ as the number of removing instructions, and $a$ is the number of adding instructions so that $d=r+a$. Then $r$ and $a$ have their own lower and upper bounds, respectively. 
%     \begin{equation}
% \label{eq:LUB_r}
% \footnotesize
%     \underline{r} = \max(0, d-m^*) \leq r \leq \min(\frac{d-m^*+m}{2}, d, m) = \overline{r}
% \end{equation}
% \begin{equation}
% \label{eq:LUB_a}
% \footnotesize
%     \underline{a} = \max(0, d-m, \frac{m^*+d-m}{2}) \leq a \leq \min(d, m^*) = \overline{a}
% \end{equation}
% (For proof refer to Appendix \ref{prf:LBUB_ra})
% \end{lemma}

% \begin{lemma}[Lower and Upper Bounds on Accessible $\rho^*$ Program Sizes $\phi$]
%     \label{lemma:mp}
%     Denote the program size of $\rho^*$ that are accessible by $\rho_{(m)}$ within $d$ steps as $\phi_{(m,d)}$, then $\phi_{(m,d)}$ has its lower and upper bounds on 
%     \begin{multline*}
%         \underline{\phi_{(m,d)}} = m+d-2\min\left\{\frac{d-m^*+m}{2}, d, m\right\} \leq \\ \phi_{(m,d)} \leq m+d-2\max\{0, d-m^*\} = \overline{\phi_{(m,d)}}.
%     \end{multline*}
%     % where $r$ is the number of removing instructions, $r_{low}=\max(0, d-m^*)$ and $r_{up}=\min(\frac{d-m^*+m}{2}, d, m)$. 
%     (For proof refer to Appendix \ref{prf:mp})
% \end{lemma}

% \begin{remark}
%     Lemmas \ref{lemma:LBUB_ra} and \ref{lemma:mp} show that there is a fixed range of the program size of $\rho^*$ (i.e., the lower and upper bounds on $m'$), given $m$ and $\delta^*$.
% \end{remark}

\begin{lemma}[Lower and Upper Bounds on Neutral Bloating Factor $\Omega(m_1, m_2)$]
    \label{lemma:UBLBomega}
    The term ``neutral bloat'' indicates that the number of solutions with the same fitness increases when the program size increases by adding introns. 
    Suppose that an LGP program has $\gamma_{\text{out}}$ output registers ($\gamma_{\text{out}}\leq \gamma$). When the LGP program increases its size from $m_1$ to $m_2$, the increasing factor of the number of LGP programs by adding introns into this program is $\Omega(m_1, m_2)$. 
    Then the lower and upper bounds on the neutral bloating factor are:
    \begin{equation}
        \sum_{i=\omega}^{\gamma-\gamma_{\text{out}}} \left( \frac{in}{\gamma}\right)^{m_2-m_1} < \Omega(m_1, m_2) < 
        \left( \frac{m_2(\gamma-\gamma_{\text{out}}+\omega)(\gamma-\gamma_{\text{out}}-\omega+1)n}{2\gamma} \right)^{m_2 - m_1},
    \end{equation}
    where $\omega=\max\{0,\gamma-\gamma_{\text{out}}-m_1\}$.
    (For proof refer to Appendix \ref{prf:UBLBomega})
\end{lemma}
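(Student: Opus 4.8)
The plan is to count, for a fixed target/optimal program $\rho^*_{(m_1)}$ of size $m_1$, the number of distinct ways to insert $m_2 - m_1$ intron instructions so as to reach a program of size $m_2$, and then bound this count from both sides. The key observation supplied earlier in the excerpt is that the number of \emph{possible} intron instructions available at a position grows with the position index: when there are $k$ effective destination registers at that position, the number of possible introns is $\tfrac{(\gamma - k)n}{\gamma}$, ranging over $k \in \{\gamma_{\text{out}}, \dots, \gamma\}$, i.e. the number of introns available ranges over $\tfrac{in}{\gamma}$ for $i \in \{0, 1, \dots, \gamma - \gamma_{\text{out}}\}$. First I would set up the insertion process as a sequence of $m_2 - m_1$ steps, where at each step we choose a position in the current program and an intron instruction valid at that position; I would argue that the ``deepest'' usable insertion corresponds to the largest index $\gamma - \gamma_{\text{out}}$ and the shallowest to $\omega = \max\{0, \gamma - \gamma_{\text{out}} - m_1\}$ (since with only $m_1$ instructions one cannot yet have accumulated more than $m_1$ layers of intron-producing structure, which is where the $\omega$ truncation enters).

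For the \textbf{lower bound}, I would restrict attention to a single fixed ``slot'' near the end of the program where all $i \in \{\omega, \dots, \gamma - \gamma_{\text{out}}\}$ intron choices are simultaneously available, and count only those bloated programs obtained by repeatedly inserting at that slot. Each of the $m_2 - m_1$ insertions then contributes independently a factor between $\tfrac{\omega n}{\gamma}$ and $\tfrac{(\gamma-\gamma_{\text{out}})n}{\gamma}$; summing the per-step count $\tfrac{in}{\gamma}$ over $i$ and raising to the power $m_2 - m_1$ — after checking that distinct insertion sequences yield distinct programs, so there is no overcounting that would break the inequality — gives $\sum_{i=\omega}^{\gamma-\gamma_{\text{out}}} \left(\tfrac{in}{\gamma}\right)^{m_2-m_1}$ as a valid lower bound. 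The strictness of ``$<$'' comes from the fact that there are additional bloated programs (inserting at other positions) not counted here.

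For the \textbf{upper bound}, I would over-count generously: at step $j$ (going from size $m_1 + j - 1$ to $m_1 + j$) there are at most $m_1 + j \le m_2$ positions to insert at, and at each position at most $\tfrac{(\gamma - \gamma_{\text{out}})n}{\gamma}$ intron choices, but I would use the sharper bound that the \emph{total} over positions of the number of available introns is at most $\sum_{i=\omega}^{\gamma-\gamma_{\text{out}}} \tfrac{in}{\gamma}$ per ``copy'' of the layer structure, times $m_2$ for the number of positions. Evaluating $\sum_{i=\omega}^{\gamma-\gamma_{\text{out}}} i = \tfrac{(\gamma - \gamma_{\text{out}} + \omega)(\gamma - \gamma_{\text{out}} - \omega + 1)}{2}$ and multiplying by $\tfrac{n}{\gamma}$ and by $m_2$ gives the per-step bound $\tfrac{m_2(\gamma - \gamma_{\text{out}} + \omega)(\gamma - \gamma_{\text{out}} - \omega + 1)n}{2\gamma}$, and raising to the power $m_2 - m_1$ yields the claimed upper bound. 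The inequality is strict because not all position-instruction combinations actually produce introns in every intermediate program, and because the same final program can be reached by several insertion orders (so the true count is smaller than this product).

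The main obstacle I anticipate is making the combinatorial bookkeeping rigorous: precisely defining when inserted instructions are guaranteed to be introns (this depends on the current set of effective destination registers, which itself changes as instructions are inserted), justifying that the per-position intron count is correctly captured by the $\tfrac{in}{\gamma}$ layering from the preliminaries, and handling the $\omega$-truncation cleanly when $m_1 < \gamma - \gamma_{\text{out}}$. I would isolate these issues into a short structural claim — ``inserting an instruction whose destination register is non-effective at its position, and whose source registers are arbitrary, always yields an intron, and the number of such choices at a position with $i$ non-effective destination registers is $\tfrac{in}{\gamma}$'' — and then the two bounds follow by the counting sketched above.
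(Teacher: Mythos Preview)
Your upper-bound route differs from the paper's. The paper writes $\Omega(m_1,m_2)$ as a sum over compositions $(\alpha_\omega,\dots,\alpha_{\gamma-\gamma_{\text{out}}})$ of $m_2-m_1$ and $(A_\omega,\dots,A_{\gamma-\gamma_{\text{out}}})$ of $m_2$, with generic term $\prod_i\binom{A_i}{\alpha_i}\bigl(\tfrac{in}{\gamma}\bigr)^{\alpha_i}$; it then bounds $\prod_i\binom{A_i}{\alpha_i}\le m_2^{m_2-m_1}\binom{m_2-m_1}{\alpha_\omega,\dots,\alpha_{\gamma-\gamma_{\text{out}}}}$ and collapses the sum via the Multinomial Theorem to obtain $m_2^{m_2-m_1}\bigl(\sum_i\tfrac{in}{\gamma}\bigr)^{m_2-m_1}$. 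Your sequential-insertion picture reaches the same per-step factor $m_2\sum_i\tfrac{in}{\gamma}$ without that machinery, which is a legitimate and arguably more elementary alternative; just make explicit that counting ordered insertion sequences over-counts distinct final programs, so the product of per-step bounds is automatically an upper bound on $\Omega$.

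Your lower-bound argument has a genuine gap. A single fixed insertion position does \emph{not} have ``all $i\in\{\omega,\dots,\gamma-\gamma_{\text{out}}\}$ intron choices simultaneously available'': at any one position the number of non-effective destination registers is a single fixed value $i$, so the per-step count there is one term $\tfrac{in}{\gamma}$, not a sum over $i$. Your phrase ``summing the per-step count $\tfrac{in}{\gamma}$ over $i$ and raising to the power $m_2-m_1$'' would produce $\bigl(\sum_i\tfrac{in}{\gamma}\bigr)^{m_2-m_1}$, not the claimed $\sum_i\bigl(\tfrac{in}{\gamma}\bigr)^{m_2-m_1}$, and the premise that a single slot offers every level is incoherent. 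The paper's lower bound instead proceeds \emph{one level $i$ at a time}: for each $i$ it restricts to the term with $\alpha_i=m_2-m_1$ and all other $\alpha_j=0$ (i.e., all $m_2-m_1$ introns placed at level-$i$ positions), observes that this term alone contributes at least $\bigl(\tfrac{in}{\gamma}\bigr)^{m_2-m_1}$ via $\binom{A_i}{\alpha_i}\ge 1$, and then sums these disjoint contributions over $i$. You need that per-level decomposition to land on the stated expression.
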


\begin{remark}
Lemma \ref{lemma:UBLBomega} shows that both the lower and upper bounds on $\Omega(m_1, m_2)$ are a function of $n^{m_2-m_1}$. We can imagine that the number of optimal solutions that are generated from $\mathcal{P}_{m^*}^0$ by adding introns is exponentially increasing with program size. 
Specifically, the ratio of $\mathcal{P}_{m}^0$ over $\mathcal{P}_{m}$ (i.e., $||\mathcal{P}_{m}^0|| / n^{m}$) for a given $m$ is at least a ratio of $\sum_{i=\omega}^{\gamma-\gamma_{\text{out}}} \left( \frac{i}{\gamma}\right)^{m-m_1}n^{-m_1}$ where $m_1 \geq m^*$.
This implies that finding the programs with the most concise exons (the rest of the program filled by introns, $m_1=m^*$) has a much higher probability than with redundant exons ($m_1>m^*$) in LGP.
For example, to search a target program $\rho^*(x)=x+1$, a concise program $\rho_1=\{\mathrm{R0}=x+1\}$ is much eaiser to be found than $\rho_2=\{\mathrm{R0}=0+1;\mathrm{R0}=\mathrm{R0}+x\}$.
This conclusion backs up the empirical observations in \cite{winkler_how_2024}.

Furthermore, because the introns in LGP, especially the structural introns, are easily detected by a backward visiting method \cite{Brameier2007}, LGP has a great potential to find concise solutions, which is essential for improving interpretability. 
\end{remark}

\begin{lemma}[Lower and Upper Bounds on Non-neutral Bloating Factor $\Lambda(m_1, m_2)$]
    \label{lemma:UBLBlambda}
    The lower and upper bounds on the $\Lambda(m_1, m_2)$ are:
    $\sum_{i=\gamma_{\text{out}}}^{\lambda} \left( \frac{in}{\gamma}\right)^{m_2-m_1}<\Lambda(m_1, m_2)<\left( \frac{m_2(\gamma_{\text{out}}+\lambda)(\lambda -\gamma_{\text{out}} +1)n}{2\gamma} \right)^{m_2 - m_1}$
    where $\lambda=\min\{\gamma,\gamma_{\text{out}}+m_1\}$.
     (For proof refer to Appendix \ref{prf:UBLBlambda})
\end{lemma}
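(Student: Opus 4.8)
The plan is to mirror, line for line, the proof of Lemma~\ref{lemma:UBLBomega}, exploiting the exon/intron duality: at any position in a program the set of \emph{effective} destination registers is exactly the complement (within the $\gamma$ registers) of the set of intron-producing registers, and $\gamma-\lambda=\omega$, $\gamma-(\gamma-\gamma_{\text{out}})=\gamma_{\text{out}}$, so the entire argument for $\Omega$ goes through under the substitution $i\mapsto\gamma-i$. The one structural fact I would establish first is the same one used for introns in Section~\ref{sec:preliminary}: instruction types are distributed evenly over the $\gamma$ possible destination registers (so each register is the destination of $n/\gamma$ types), hence at a position whose effective-register set has size $i$, precisely $\frac{in}{\gamma}$ of the $n$ instruction types write to an effective register and therefore constitute a non-neutral (exon) addition there. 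Then I would pin down the range of $i$: it is at least $\gamma_{\text{out}}$ because the output registers are always effective, and at most $\lambda=\min\{\gamma,\gamma_{\text{out}}+m_1\}$ because the effective set at a position equals the $\gamma_{\text{out}}$ output registers together with the registers ``activated'' by the instructions downstream of that position, a size-$m_1$ program having at most $m_1$ such instructions, each activating at most one further register under the linearized intron-growth model of Section~\ref{sec:preliminary} (and the count can never exceed $\gamma$).

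For the lower bound, fix $i\in\{\gamma_{\text{out}},\dots,\lambda\}$ and count only those bloated programs in which all $m_2-m_1$ inserted instructions are placed so that each is an exon at a position with exactly $i$ effective registers (the dual of the ``write to a dead register'' construction used for introns: here one inserts a short chain of new instructions feeding into an effective register, keeping the effective-set size at $i$). Each inserted instruction then has $\frac{in}{\gamma}$ choices, giving $\bigl(\frac{in}{\gamma}\bigr)^{m_2-m_1}$ distinct size-$m_2$ programs, and programs produced for different $i$ are pairwise distinct (they differ in exon structure); summing over $i$, and noting that still more bloated programs exist, yields the strict inequality $\sum_{i=\gamma_{\text{out}}}^{\lambda}\bigl(\frac{in}{\gamma}\bigr)^{m_2-m_1}<\Lambda(m_1,m_2)$.

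For the upper bound I would overcount: build a bloated program by inserting the $m_2-m_1$ new instructions one at a time, each insertion choosing a position and an exon instruction at it. Grouping positions by effective-register count $i$, a type-$i$ position offers $\frac{in}{\gamma}$ exons and there are at most $m_2$ positions of any type, so the number of choices at each insertion is at most $\sum_{i=\gamma_{\text{out}}}^{\lambda} m_2\cdot\frac{in}{\gamma}=\frac{m_2 n}{2\gamma}(\gamma_{\text{out}}+\lambda)(\lambda-\gamma_{\text{out}}+1)$; taking the product over the $m_2-m_1$ insertions gives $\Lambda(m_1,m_2)<\bigl(\frac{m_2(\gamma_{\text{out}}+\lambda)(\lambda-\gamma_{\text{out}}+1)n}{2\gamma}\bigr)^{m_2-m_1}$, which is the stated bound and the exact mirror of the one for $\Omega$.

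The inequalities themselves are routine once the set-up is in place; the real obstacle is the bookkeeping. Two points need care: (i) justifying that the effective-register count at every position of a size-$m_1$ program lies in $[\gamma_{\text{out}},\lambda]$ — this must explicitly invoke the linearized intron/exon-growth model, since exact effectiveness depends on a concrete program; and (ii) exhibiting, in the lower bound, an honest construction that inserts $m_2-m_1$ genuine exons while holding the effective-set size fixed at each chosen $i$ and producing pairwise-distinct programs across the different $i$ without double counting — the exon analogue of the ``dead-register'' trick from the proof of Lemma~\ref{lemma:UBLBomega}, which is slightly more delicate because writing to an effective register perturbs the semantics. Once that construction is specified, the rest follows by the same computations as for $\Omega$.
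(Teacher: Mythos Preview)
Your proposal is correct and matches the paper's approach: the paper's entire proof of this lemma is the single sentence ``The proof is similar to Lemma~\ref{lemma:UBLBomega}, where $\omega$ and $\gamma-\gamma_{\text{out}}$ are replaced by $\gamma_{\text{out}}$ and $\lambda$, respectively,'' and your write-up is a faithful expansion of precisely that substitution. If anything, you are more careful than the paper in flagging the two bookkeeping points (the range $[\gamma_{\text{out}},\lambda]$ under the linearized growth model, and the exon-insertion construction for the lower bound); the paper simply relies on the duality without spelling these out.
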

% \begin{proof}
%     The proof is similar to \textbf{Lemma \ref{lemma:UBLBomega}}, where $\omega$ and $\gamma-\gamma_{\text{out}}$ are replaced by $\gamma_{\text{out}}$ and $\lambda$, respectively.
% \end{proof}

\begin{theorem}[Similar $\delta^*$ Probability]
    \label{theo:similar_delta_m}
    The probability of $\delta^*=d$ given a program size  $m$ ($||\mathcal{P}^{\delta^*=d}_m||/n^m$) is similar over different $m$, i.e., $$| \frac{||\mathcal{P}^{\delta^*=d}_m||}{n^m} - \frac{||\mathcal{P}^{\delta^*=d}_{m+1}||}{n^{m+1}}|\leq \epsilon,$$ 
where $\epsilon$ is a small enough positive number.
\end{theorem}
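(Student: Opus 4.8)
The plan is to turn the layered (``lasagna'') structure into a recursion between consecutive layers. The key device is the genotype-level bijection $\mathcal{P}_{m+1}\leftrightarrow\mathcal{P}_m\times\mathcal{I}$ obtained by peeling off the last instruction, $\sigma\circ\rho_m\mapsto(\rho_m,\sigma)$; under it $n^{m+1}=n^m\cdot n$. Since $\sigma\circ\rho_m$ and $\rho_m$ are at editing distance $1$ from each other, the triangle inequality (exactly the argument used for Lemma \ref{lemma:d}) gives $|\delta^*(\sigma\circ\rho_m)-\delta^*(\rho_m)|\le 1$ for every $\sigma\in\mathcal{I}$ and $\rho_m\in\mathcal{P}_m$. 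Writing $p_m(d):=\|\mathcal{P}^{\delta^*=d}_m\|/n^m=\Pr(\delta^*(\rho)=d\mid |\rho|=m)$, this means that $\mathcal{P}^{\delta^*=d}_{m+1}$ only receives programs built by appending an instruction to a source program $\rho_m$ with $\delta^*(\rho_m)\in\{d-1,d,d+1\}$, and the counting identity becomes
\[
p_{m+1}(d)=\frac1n\sum_{\sigma\in\mathcal{I}}\Pr_{\rho_m\in\mathcal{P}_m}\big(\delta^*(\sigma\circ\rho_m)=d\big)=\mathbb{E}_{\sigma\in\mathcal{I}}\Big[\Pr_{\rho_m}\big(\delta^*(\sigma\circ\rho_m)=d\big)\Big].
\]

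Next I would compare each inner term with $p_m(d)=\Pr_{\rho_m}(\delta^*(\rho_m)=d)$. For a fixed $\sigma$, the difference between $\Pr_{\rho_m}(\delta^*(\sigma\circ\rho_m)=d)$ and $\Pr_{\rho_m}(\delta^*(\rho_m)=d)$ is controlled by the ``boundary mass'' that either leaves $\{\delta^*=d\}$ or enters it under one appended instruction, which by the $|\Delta\delta^*|\le 1$ property is at most $p_m(d-1)+p_m(d)+p_m(d+1)$ (and in fact only the genuinely shifting portion contributes, so the bound can be tightened). Taking the expectation over $\sigma$ preserves this, yielding $|p_{m+1}(d)-p_m(d)|\le p_m(d-1)+p_m(d)+p_m(d+1)$, with the endpoints of the admissible distance interval from Lemma \ref{lemma:d} only removing terms. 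Equivalently, this is the instance of Theorem \ref{theo:similar_fit_dist} obtained by applying it to the nested family $\{\rho:|\rho|\le m\}$ with the role of the fitness value $v$ played by the event $\delta^*(\rho)=d$: because $\delta^*$ moves by at most one per edit, the hypothesis Eq.~(\ref{eq:small_dist_variation}) reduces to the statement that appending an instruction does not inflate any one distance's probability by more than the allotted slack, and the desired $\epsilon$ is then on the order of $\max_{d'} p_m(d')$.

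The remaining, and main, obstacle is to show this bound is genuinely small, i.e. that no single value of $\delta^*$ carries a non-negligible share of a layer's $n^m$ programs. This is where the detailed combinatorics enter: Lemma \ref{lemma:d} says $\delta^*$ ranges over an interval of width $\Theta(m)$, and Lemmas \ref{lemma:UBLBomega} and \ref{lemma:UBLBlambda} bound, via the neutral and non-neutral bloating factors, how the mass is distributed across that interval, so that $\max_{d'} p_m(d')\le\epsilon$ for a small $\epsilon$ once the instruction set admits enough distinct semantic outcomes (the same regularity condition that makes $\Pr(f(\rho)=v\mid\cdot)$ small in the remark after Theorem \ref{theo:similar_fit_dist}). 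The lifting/recursion of the first two paragraphs is essentially bookkeeping; the quantitative spreading estimate is the hard part, together with the minor care needed at the ends of the distance range and in separating the cases $m<m^*$ and $m\ge m^*$ in $\inf\delta^*_{(m)}$, both of which only make fewer terms appear.
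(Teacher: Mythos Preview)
Your approach is essentially the paper's: both set up the layer-to-layer recursion by appending one instruction (so $|\Delta\delta^*|\le 1$), bound $|p_{m+1}(d)-p_m(d)|$ by a constant combination of $p_m(d-1),p_m(d),p_m(d+1)$, and then appeal to Lemmas~\ref{lemma:UBLBomega} and~\ref{lemma:UBLBlambda} to argue those masses are small relative to $n^m$. The only cosmetic difference is that the paper writes the recursion directly as $\|\mathcal{P}^{\delta^*=d}_{m+1}\|=\|\mathcal{P}^{\delta^*=d}_{m}\|\,\Omega(m,m+1)+\|\mathcal{P}^{\delta^*=d\pm1}_{m}\|\,\Lambda(m,m+1)$ and extracts coefficients from the linear-in-$n$ bounds on $\Omega,\Lambda$, whereas you average over $\sigma\in\mathcal{I}$ and phrase the slack via the Theorem~\ref{theo:similar_fit_dist} template; the substance and the residual ``hard part'' (spreading of $p_m$) are identical.
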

\begin{proof}
    The program set $\mathcal{P}^{\delta^*=d}_{m+1}$ can be constructed from $\mathcal{P}^{\delta^*=d}_{m}$, $\mathcal{P}^{\delta^*=d-1}_{m}$, and $\mathcal{P}^{\delta^*=d+1}_{m}$ by adding one intron, or one wrong instruction, or one correct instruction, respectively. Then we have
    $$||\mathcal{P}^{\delta^*=d}_{m+1}|| = ||\mathcal{P}^{\delta^*=d}_{m}||\Omega(m,m+1) + ||\mathcal{P}^{\delta^*=d\pm1}_{m}||\Lambda(m,m+1).$$
    Therefore,
    \begin{equation}
    | \frac{||\mathcal{P}^{\delta^*=d}_m||}{n^m} - \frac{||\mathcal{P}^{\delta^*=d}_{m+1}||}{n^{m+1}}| 
    = \frac{1}{n^m}|||\mathcal{P}^{\delta^*=d}_{m}|| -\frac{||\mathcal{P}^{\delta^*=d}_{m}||\Omega(m,m+1) + ||\mathcal{P}^{\delta^*=d\pm1}_{m}||\Lambda(m,m+1)}{n}|. \nonumber
    \end{equation}
    Based on \textbf{Lemma \ref{lemma:UBLBomega}} and \textbf{Lemma \ref{lemma:UBLBlambda}}, both $\Omega(m,m+1)$ and $\Lambda(m,m+1)$ are linearly correlated to $n$ as both of their upper and lower bounds are linearly correlated to $n$. Therefore, we have
    $$| \frac{||\mathcal{P}^{\delta^*=d}_m||}{n^m} - \frac{||\mathcal{P}^{\delta^*=d}_{m+1}||}{n^{m+1}}|=\frac{1}{n^m}|\alpha_1||\mathcal{P}^{\delta^*=d}_{m}||+\alpha_2||\mathcal{P}^{\delta^*=d\pm1}_{m}|||,$$
    where $\alpha_1$ and $\alpha_2$ are coefficients of $\Omega(m,m+1)$ and $\Lambda(m,m+1)$ by reducing $n$. 
    Because $n$ is a combinatorial number of $\alpha_1$ and $\alpha_2$, and $\mathcal{P}^{\delta^*=\{d,d\pm1\}}_{m}$ is a subset of $\mathcal{P}_m$, 
    $|\alpha_1||\mathcal{P}^{\delta^*=d}_{m}||+\alpha_2||\mathcal{P}^{\delta^*=d\pm1}_{m}|||$ is much smaller than $n^m$ especially when $d$ is relatively small.
    Thus,
    $$| \frac{||\mathcal{P}^{\delta^*=d}_m||}{n^m} - \frac{||\mathcal{P}^{\delta^*=d}_{m+1}||}{n^{m+1}}|\leq \epsilon.$$
\end{proof}

\begin{theorem}
    \label{theo:UB_Emin}
    The expectation of $\delta^*(\rho)$ increases with program size $m$, i.e., 
    $$\mathbb{E}[\delta^*(\rho)\ |\ \rho \in  \mathcal{P}_{m+1}] - \mathbb{E}[\delta^*(\rho)\ |\ \rho \in  \mathcal{P}_{m}]>0$$
\end{theorem}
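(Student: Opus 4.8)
The plan is to evaluate $\mathbb{E}[\delta^*(\rho)\ |\ \rho\in\mathcal{P}_{m+1}]$ from Eq.~(\ref{eq:Edsm}) by substituting the layer‑to‑layer recursion already used in the proof of \textbf{Theorem \ref{theo:similar_delta_m}}, but writing it in the refined form that separates the two non‑neutral directions. Appending one instruction to a size‑$m$ program changes the distance to its closest optimal program by at most one: adding an intron leaves $\delta^*$ unchanged, adding one of the few ``repairing'' instructions drawn from $\mathcal{I}^*$ lowers $\delta^*$ by one, and adding any other non‑intron instruction raises $\delta^*$ by one. Letting $\Omega$, $\Lambda^{+}$, $\Lambda^{-}$ denote the (averaged) counts of these three kinds of appended instruction, the assembly reads
\[
||\mathcal{P}^{\delta^*=d}_{m+1}|| = \Omega\,||\mathcal{P}^{\delta^*=d}_{m}|| + \Lambda^{+}||\mathcal{P}^{\delta^*=d-1}_{m}|| + \Lambda^{-}||\mathcal{P}^{\delta^*=d+1}_{m}||,
\]
and summing over $d$ (using the bijection $\mathcal{P}_{m+1}\leftrightarrow\mathcal{P}_m\times\mathcal{I}$ that appends one of $n$ instructions) gives the conservation identity $\Omega+\Lambda^{+}+\Lambda^{-}=n$. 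This is consistent with the symmetric recursion in \textbf{Theorem \ref{theo:similar_delta_m}}, which only needed linearity in $n$ and so did not distinguish the two directions.

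Next I would multiply the recursion by $d/n^{m+1}$, sum over $d$, and reindex the two shifted sums. Since the support of $\delta^*$ is finite by \textbf{Lemma \ref{lemma:d}}, the reindexing is clean: $\sum_d d\,||\mathcal{P}^{\delta^*=d-1}_{m}|| = n^{m}\mathbb{E}[\delta^*\ |\ \mathcal{P}_m] + n^{m}$ and $\sum_d d\,||\mathcal{P}^{\delta^*=d+1}_{m}|| = n^{m}\mathbb{E}[\delta^*\ |\ \mathcal{P}_m] - n^{m}$. Collecting terms and applying the conservation identity, the $\mathbb{E}$‑proportional parts combine to $n^{m+1}\mathbb{E}[\delta^*\ |\ \mathcal{P}_m]$ and the residual is $(\Lambda^{+}-\Lambda^{-})n^{m}$, so that
\[
\mathbb{E}[\delta^*(\rho)\ |\ \rho\in\mathcal{P}_{m+1}] = \mathbb{E}[\delta^*(\rho)\ |\ \rho\in\mathcal{P}_{m}] + \frac{\Lambda^{+}-\Lambda^{-}}{n}.
\]
The theorem then reduces to the single inequality $\Lambda^{+}>\Lambda^{-}$.

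To establish $\Lambda^{+}>\Lambda^{-}$ I would argue that an appended instruction can decrease $\delta^*$ only if it belongs to the optimal‑instruction set $\mathcal{I}^*$ and is placed so as to fill a contributing slot, whereas essentially every other appended non‑intron instruction increases $\delta^*$; since $|\mathcal{I}^*|$ is a tiny fraction of $|\mathcal{I}|=n$ and the bounds on $\Lambda$ in \textbf{Lemma \ref{lemma:UBLBlambda}} scale linearly with $n$, the ``away'' count dominates the ``toward'' count. The boundary layers only reinforce this: when $m\ge m^*$ the programs already at $\delta^*=0$ can move only to $\delta^*=1$ (extra upward mass, none downward), and $\sup\delta^*_{(m+1)}=m^*+m+1>m^*+m=\sup\delta^*_{(m)}$ by \textbf{Lemma \ref{lemma:d}}, so a newly reachable top layer contributes strictly positive mass at the largest distance.

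The step I expect to be the main obstacle is making $\Lambda^{+}>\Lambda^{-}$ rigorous rather than heuristic: $\Omega$, $\Lambda^{+}$, $\Lambda^{-}$ are genuinely program‑dependent (they vary with the number of effective destination registers and with how many of $\rho$'s instructions already lie in $\mathcal{I}^*$), so the constant‑coefficient recursion is an aggregate approximation. A fully careful treatment would either adopt the paper's simplification explicitly, or replace the coefficient equalities by the per‑layer inequalities $\Lambda^{+}\ge\Lambda^{-}$ (strict on at least one occupied layer) together with a controlled error term, and would separately check that the left‑boundary correction when $m<m^*$ — where $\inf\delta^*$ also drops by one — cannot cancel the upward drift, which it cannot, since the mass drifting left is bounded by $\Lambda^{-}$ times the thin bottom layer while the mass drifting right is at least $\Lambda^{+}$ times the much larger bulk.
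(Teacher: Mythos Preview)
Your approach is sound in spirit but genuinely different from the paper's. The paper does not set up a drift computation at all: it simply writes the two expectations as weighted sums over $d$ (Eq.~(\ref{eq:Edsm})), invokes \textbf{Lemma~\ref{lemma:d}} to observe that the support of $\delta^*$ strictly widens as $m\to m{+}1$ (the supremum increases by one, the infimum does not increase), and then appeals to \textbf{Theorem~\ref{theo:similar_delta_m}} to say that the per-$d$ probabilities $||\mathcal{P}^{\delta^*=d}_{m}||/n^{m}$ and $||\mathcal{P}^{\delta^*=d}_{m+1}||/n^{m+1}$ are ``similar enough,'' so the extra summands at the new top of the range push the expectation up. That is the entire argument; there is no recursion, no conservation identity, and no explicit drift term.

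What you do differently is refine the recursion from \textbf{Theorem~\ref{theo:similar_delta_m}} into an asymmetric form with separate $\Lambda^{+}$ and $\Lambda^{-}$, push it through the expectation sum, and reduce the theorem to the single combinatorial inequality $\Lambda^{+}>\Lambda^{-}$. This buys you an explicit expression for the increment, $(\Lambda^{+}-\Lambda^{-})/n$, and locates the entire content of the theorem in one concrete fact about the instruction set (most appended non-introns move away from $\rho^*$). The paper's route is shorter and avoids introducing new coefficients, but it is also vaguer: ``similar enough'' in \textbf{Theorem~\ref{theo:similar_delta_m}} is only controlled up to some $\epsilon$, and the paper never checks that the mass at the new top layer $d=m^*{+}m{+}1$ actually dominates the accumulated $\epsilon$-discrepancies over the shared range. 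Your self-identified obstacle (the program-dependence of $\Omega,\Lambda^{\pm}$) is exactly the same looseness that the paper sweeps into its ``similar enough''; neither proof is fully rigorous at that step, but yours makes the missing inequality explicit rather than hiding it.
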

\begin{proof}
    \begin{equation}
     \mathbb{E}[\delta^*(\rho)\ |\ \rho \in  \mathcal{P}_{m+1}] - \mathbb{E}[\delta^*(\rho)\ |\ \rho \in  \mathcal{P}_{m}] 
     = \sum_{d=\inf{\delta^*_{(m+1)}}}^{\sup{\delta^*_{(m+1)}}} \frac{d}{n^{m+1}}\cdot ||\mathcal{P}^{\delta^*=d}_{m+1} || - \sum_{d=\inf{\delta^*_{(m)}}}^{\sup{\delta^*_{(m)}}} \frac{d}{n^m}\cdot ||\mathcal{P}^{\delta^*=d}_m ||
    \end{equation}

    Based on \textbf{Lemma \ref{lemma:d}}, $\inf{\delta^*_{(m+1)}}\leq \inf{\delta^*_{(m)}}$, and $\sup{\delta^*_{(m+1)}}> \sup{\delta^*_{(m)}}$. Because $\frac{||\mathcal{P}^{\delta^*=d}_m||}{n^m}$ and $ \frac{||\mathcal{P}^{\delta^*=d}_{m+1}||}{n^{m+1}}$ are similar enough and there are more possible $d$ in $\mathbb{E}[\delta^*(\rho)\ |\ \rho \in  \mathcal{P}_{m+1}]$,
    $$\mathbb{E}[\delta^*(\rho)\ |\ \rho \in  \mathcal{P}_{m+1}] - \mathbb{E}[\delta^*(\rho)\ |\ \rho \in  \mathcal{P}_{m}]>0.$$
\end{proof}

Based on Theorem \ref{theo:dtofit} to \ref{theo:UB_Emin}, we analyze the change of fitness expectation over the search space $\mathbb{E}[f(\rho(m)]$.
% the increase (reduction) of $\overline{\mathbb{E}}[\delta^*(\rho)\ |\ \rho \in  \mathcal{P}_{m}]$ implies the increase (reduction) of $\mathbb{E}[f(\rho(m)]$. 
Let $\mathbb{E}[\delta^*(\rho_{(m)})]$ be the abbreviation of $\mathbb{E}[\delta^*(\rho)\ |\ \rho\in\mathcal{P}_m]$, we have the following chain.
\begin{framed}
\centering
\setlength{\arraycolsep}{1pt}
% \small
    $\begin{array}{ccccc}
 % & \text{\footnotesize{{Theorem \ref{theo:dtofit}}}} & &  & \\
 \mathbb{E}[\delta^*(\rho_{(m)})] & \rightarrow & \mathbb{E}[ \sup{f(\rho_{(m)})}] & \rightarrow & \mathbb{E}[f(\rho_{(m)})] 
\end{array}$
\end{framed}
Specifically, the expectation of $\delta^*(\rho)$ increases with program size $m$, $\delta^*(\rho)$ defines the fitness supremum on $\rho$ with size $m$ ($ \sup{f(\rho_{(m)})}$) based on Theorem \ref{theo:dtofit}. The increase of the expectation of fitness supremum implies the increase of the fitness expectation $\mathbb{E}[f(\rho_{(m)})]$ as the fitness infimum is 0 and $f(\rho_{(m)})$ distribution is similar over $\delta^*$ (Theorem \ref{theo:similar_fit_dist}). Therefore, search space with larger program size has larger (worse) fitness expectation.

\subsection{Empirical Verification}\label{sec:exp_Edm}
\textbf{Theorem \ref{theo:UB_Emin}} gives us a way to estimate the expectation of $\delta^*$ over the search space with a program size of $m$ (i.e., $\mathbb{E}[\delta^*(\rho)\ |\ \rho\in\mathcal{P}_m]$). 
To verify $\mathbb{E}[\delta^*(\rho)\ |\ \rho\in\mathcal{P}_m]$, we investigate the mean fitness of randomly sampled programs with different program sizes over 50 independent runs, as shown in Fig. \ref{fig:Emind_v_meanfit}. 
We can see that the mean fitness (RSE) of randomly sampled programs increases with the program size $m$ in the four example symbolic regression problems (i.e., Nguyen4, Nguyen5, Keijzer11, and R1), verifying the increase of $\mathbb{E}[\delta^*(\rho)\ |\ \rho\in\mathcal{P}_m]$ and $\mathbb{E}[f(\rho_{(m)})]$.

% show $\overline{\mathbb{E}}[\delta^*(\rho)\ |\ \rho\in\mathcal{P}_m]$ by enumerating a range of $m$ and compare the interpolated $\overline{\mathbb{E}}[\delta^*(\rho)\ |\ \rho\in\mathcal{P}_m]$ with the mean test performance of the initial population over 50 independent runs, as shown in Fig. \ref{fig:Emind_v_meanfit}. 

% \begin{figure}
%     \centering
%     \includegraphics[scale=0.46, viewport=10 10 570 400, clip=true]{Emind_v_meanfit.pdf}
%     \caption{The comparison between $\overline{E(d^* | m)}$ and the population mean fitness going with program size $m$. The red Y-axis is $\overline{E(d^* | m)}$. The blue Y-axis is the population mean fitness over 50 independent runs for a given program size. }
%     \label{fig:Emind_v_meanfit}
% \end{figure}
% \begin{figure}
%     \centering
%     \includegraphics[scale=0.44, viewport=10 430 590 820, clip=true]{Emind_v_meanfit-b.pdf}
%     \caption{Comparison between the interpolated expectation upper bound $\overline{\mathbb{E}}[\delta^*(\rho)\ |\ \rho\in\mathcal{P}_m]$ and the mean fitness over LGP populations over program size $m$. The red Y-axis is $\overline{\mathbb{E}}[\delta^*(\rho)\ |\ \rho\in\mathcal{P}_m]$. The blue Y-axis is the mean fitness of LGP populations over 50 independent runs for a given program size. }
%     \label{fig:Emind_v_meanfit}
% \end{figure}
\begin{figure}[t]
    \centering
    \includegraphics[scale=0.55, viewport=15 25 590 480, clip=true]{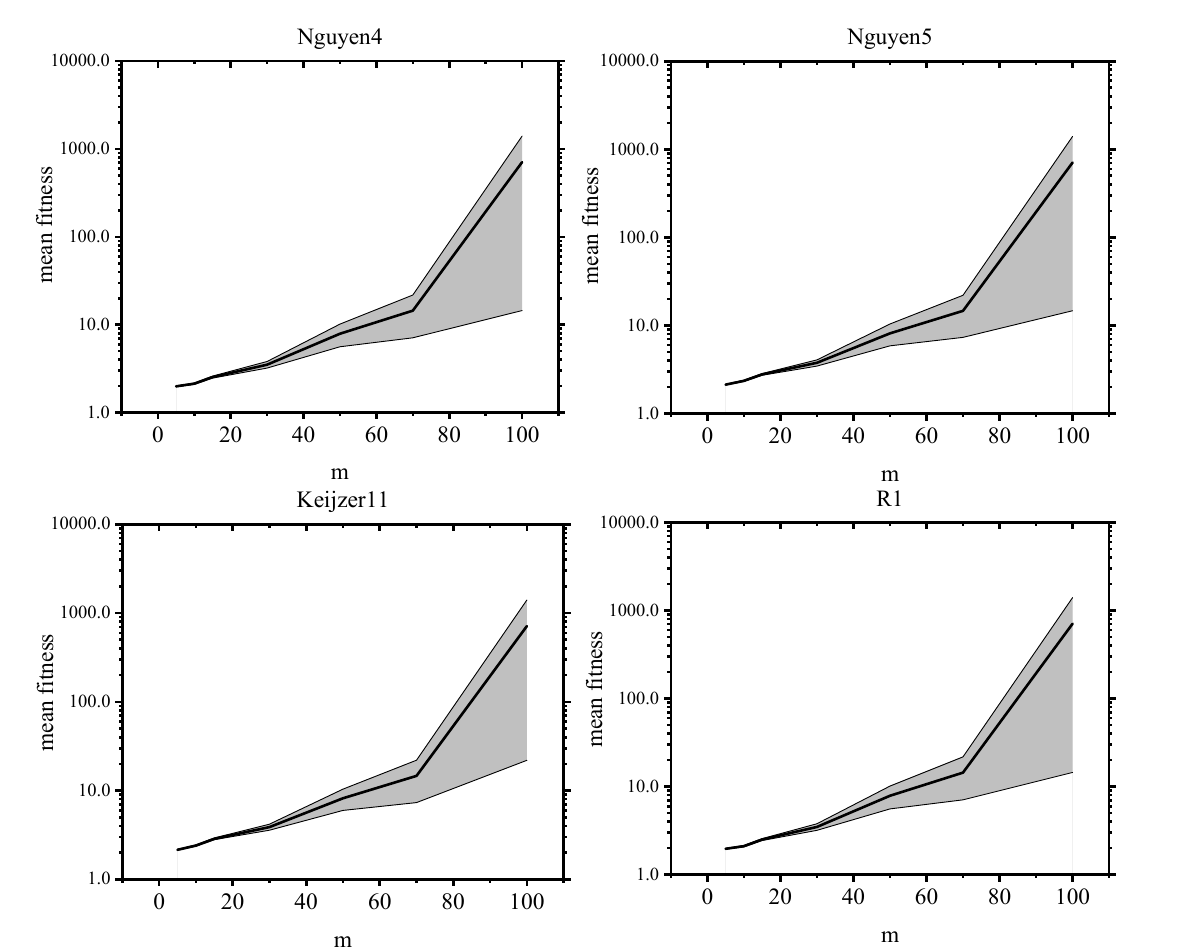}
    \caption{The mean fitness (RSE) over LGP initial populations with program size $m$. The shadow indicates the standard deviation of the fitness over 50 independent runs for a given program size. }
    \label{fig:Emind_v_meanfit}
\end{figure}

\subsection{Bloat Effect in LGP}
The bloat effect is the phenomenon that GP program size increases significantly while fitness changes slightly or not at all.
This section models the LGP bloat effect by \textbf{Definition \ref{def:constructMove}}.
% Based on $E(d^*|m)$, we further define $E( d'< d|m+\Delta m,d)$ as shown in \textbf{Definition \ref{def:constructMove}}. 
Let $\mathbb{E}[d_m - \delta^*(\rho)\ |\ \rho\in\mathcal{P}^{d_m}_{m + \Delta m}]$ be the expectation of $\delta^*$ reduction and invariance when we sample programs from the search space with a program size of $m+\Delta m$, given a certain program size $m$ and an editing distance $d_m$. The expectation of $\delta^*$ reduction and invariance implies the fitness reduction and invariance after adding or removing $\Delta m$ instructions. Given that LGP prefers moving to search spaces with small or the same fitness (in minimizing problems), we would analyze the variation of program size $\Delta m$ based on the expectation of $\delta^*$.
\begin{definition}
     \label{def:constructMove}
    Given a program size $m$ and an editing distance $d_m$, the expectation of $\delta^*$ reduction and invariance after adding or removing instructions is defined as:
    \begin{equation}
        \mathbb{E}[d_m - \delta^*(\rho)\ |\ \rho\in\mathcal{P}^{d_m}_{m + \Delta m}] 
        % &= \sum_{d=\underline{d}}^{\overline{d}}P(d^*=d | m)\cdot E(\min d'< d | m+V, m\pm V, d) \nonumber \\
        % &=\sum_{d=\underline{d}}^{\overline{d}}P(d^*=d | m)\cdot \nonumber \\
        =\left(\sum_{d=\inf{\delta^*_{(m+\Delta m)}}}^{\min\{\sup{\delta^*_{(m+\Delta m)}},d_m\}} \frac{||\mathcal{P}^{\delta^*=d}_{m+\Delta m}||}{n^{m+\Delta m}} (d_m-d) \right). 
    \end{equation}
    % where $d\leq d_m$ indicates the better and equal $d^*$ after variation.
    % $V$ is the variation size of the program size.
\end{definition}

\begin{corollary}
\label{corol:bloat_effect}
    Given a program size $m$ and an editing distance $d_m$, the expectation of editing distance reduction of newly sampled program $\rho$ in the program set with size of $m+1$ is larger than in the program set with size of $m$, i.e.,   
    $$\mathbb{E}[d_m - \delta^*(\rho)\ |\ \rho\in\mathcal{P}^{d_m}_{m + 1}] - \mathbb{E}[d_m - \delta^*(\rho)\ |\ \rho\in\mathcal{P}^{d_m}_{m }]\geq0.$$
\end{corollary}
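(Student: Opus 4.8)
The plan is to expand both expectations through \textbf{Definition \ref{def:constructMove}} and then compare them over a common summation range. First I would write, for $\Delta m\in\{0,1\}$,
$$\mathbb{E}[d_m - \delta^*(\rho)\mid \rho\in\mathcal{P}^{d_m}_{m+\Delta m}] = \sum_{d=\inf{\delta^*_{(m+\Delta m)}}}^{\min\{\sup{\delta^*_{(m+\Delta m)}},\,d_m\}} \frac{||\mathcal{P}^{\delta^*=d}_{m+\Delta m}||}{n^{m+\Delta m}}\,(d_m-d),$$
and record two elementary observations: (i) every summand is nonnegative, since the upper limit is capped at $d_m$ so that $d_m-d\ge 0$ on the whole range and the cardinalities are nonnegative; (ii) by \textbf{Lemma \ref{lemma:d}} we have $\inf{\delta^*_{(m+1)}}=\max\{0,m^*-m-1\}\le\inf{\delta^*_{(m)}}$ and $\sup{\delta^*_{(m+1)}}=m^*+m+1>\sup{\delta^*_{(m)}}$, so the summation range for $\Delta m=1$ contains the range for $\Delta m=0$.

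Next I would split the $\Delta m=1$ sum as $(\star)+(\star\star)$, with $(\star)$ over the common range $[\inf{\delta^*_{(m)}},\,\min\{\sup{\delta^*_{(m)}},d_m\}]$ and $(\star\star)$ over the remaining ``extra'' indices:
\begin{align*}
\mathbb{E}[d_m-\delta^*(\rho)\mid\mathcal{P}^{d_m}_{m+1}] &- \mathbb{E}[d_m-\delta^*(\rho)\mid\mathcal{P}^{d_m}_{m}] \\
&= \underbrace{\sum_{d\in\mathrm{common}}\!\left(\tfrac{||\mathcal{P}^{\delta^*=d}_{m+1}||}{n^{m+1}} - \tfrac{||\mathcal{P}^{\delta^*=d}_{m}||}{n^{m}}\right)(d_m-d)}_{(\star)} + \underbrace{\sum_{d\in\mathrm{extra}}\tfrac{||\mathcal{P}^{\delta^*=d}_{m+1}||}{n^{m+1}}(d_m-d)}_{(\star\star)}.
\end{align*}
Observation (i) gives $(\star\star)\ge 0$ immediately. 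For $(\star)$ I would invoke \textbf{Theorem \ref{theo:similar_delta_m}}: each bracketed probability difference has absolute value below the negligible $\epsilon$; and, using the recursion $||\mathcal{P}^{\delta^*=d}_{m+1}||=||\mathcal{P}^{\delta^*=d}_{m}||\,\Omega(m,m{+}1)+||\mathcal{P}^{\delta^*=d\pm1}_{m}||\,\Lambda(m,m{+}1)$ from that theorem's proof, growing the size merely redistributes $\delta^*$-mass locally, and since the $\delta^*$-distribution increases (and is convex) over small $d$, the small-$d$ indices — which carry the largest weights $d_m-d$ — do not lose mass; moreover when $m<m^*$ the brand-new index $d=\inf{\delta^*_{(m+1)}}<\inf{\delta^*_{(m)}}$ contributes strictly positively. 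Hence $(\star)\ge 0$ up to $\epsilon$-order slack, and combining with $(\star\star)\ge 0$ yields the corollary.

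The main obstacle is pinning down the sign of $(\star)$: \textbf{Theorem \ref{theo:similar_delta_m}} supplies only a two-sided $\epsilon$-bound on each probability difference, and conservation of total probability actually forces the common-range mass to \emph{decrease} slightly once the support widens to $\sup{\delta^*_{(m+1)}}$, so a naive term-by-term estimate only gives $(\star)\ge -O(\epsilon)$. Closing the gap needs the finer structural input — the $\Omega/\Lambda$ recursion plus monotonicity/convexity of the $\delta^*$-distribution at small distances — so that the positive contributions (chiefly the new low-distance index available at size $m+1$ when $m<m^*$) provably dominate; in the regime $m\ge m^*$ with $d_m\le m^*+m$, the statement is in fact an equality up to $\epsilon$-order error, which is why it is phrased as ``$\ge 0$''.
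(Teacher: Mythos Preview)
Your approach matches the paper's exactly: use Lemma~\ref{lemma:d} to show the summation range for $m+1$ contains that for $m$ (so your $(\star\star)\ge 0$), then cite Theorem~\ref{theo:similar_delta_m} to treat the common-range probabilities as essentially equal. The paper's own proof is terser than yours and leaves the sign of $(\star)$ at precisely the same informal $\epsilon$-level you flag as an obstacle --- it does not supply the finer $\Omega/\Lambda$ or convexity argument either, so your proposal is at least as complete as the original.
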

\begin{proof}
    Based on \textbf{Lemma \ref{lemma:d}}, we have $\inf{\delta^*_{(m+1)}} \leq \inf{\delta^*_{(m)}}$ and $\min\{\sup{\delta^*_{(m+1)}},d_m\} \geq \min\{\sup{\delta^*_{(m)}},d_m\}$. In other words, there are more adding items in $\mathbb{E}[d_m - \delta^*(\rho)\ |\ \rho\in\mathcal{P}^{d_m}_{m + 1}]$ than in $\mathbb{E}[d_m - \delta^*(\rho)\ |\ \rho\in\mathcal{P}^{d_m}_{m }]$.
    Then based on \textbf{Theorem \ref{theo:similar_delta_m}}, we have
    $$\mathbb{E}[d_m - \delta^*(\rho)\ |\ \rho\in\mathcal{P}^{d_m}_{m + 1}] - \mathbb{E}[d_m - \delta^*(\rho)\ |\ \rho\in\mathcal{P}^{d_m}_{m }]\geq0.$$
\end{proof}

\begin{remark}
    \textbf{Corollary \ref{corol:bloat_effect}} implies 
    $$\mathbb{E}[d_m - \delta^*(\rho)\ |\ \rho\in\mathcal{P}^{d_m}_{m + 1}] - \mathbb{E}[d_m - \delta^*(\rho)\ |\ \rho\in\mathcal{P}^{d_m}_{m-1}]\geq0.$$
    Adding instructions is more likely to reduce and maintain $\delta^*(\rho)$ than removing instructions. Given that the reduction of $\delta^*(\rho)$ implies the reduction of fitness expectation, LGP prefers larger programs than smaller programs, that is, bloat effect.
\end{remark}
Fig. \ref{fig:prog_size} verifies the bloat effect. Fig. \ref{fig:prog_size} shows that the program size of the LGP population consistently grows during evolution, although there is no bias in genetic operators (50\% for adding instructions and 50\% for removing).

\begin{figure}
    \centering
    \includegraphics[scale=0.75, viewport=30 20 420 320, clip=true]{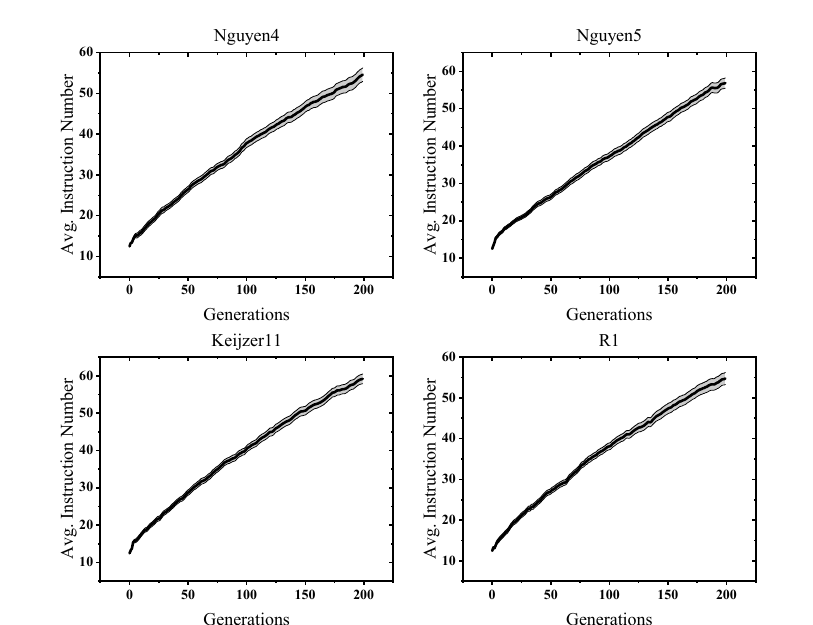}
    \caption{The average program size (the number of instructions) $\pm$std. of the population over 50 independent runs for the example problems.}
    \label{fig:prog_size}
\end{figure}

\textbf{Theorem \ref{theo:UB_Emin}} and the explanation of bloat effect back up the existing recommended initialization strategy and operator settings in LGP. First, we initialize the LGP population to a small program size so that we can have a small ${\mathbb{E}}[\delta^*(\rho)\ |\ \rho\in\mathcal{P}_m]$. 
Second, we encourage LGP to search from small to large program size so that we can reduce $\delta^*(\rho)$ with a higher probability. This explanation of the bloat effect also implies that it happens when we have a given editing distance $d_m$. However, a randomly sampled LGP population (e.g., initial population) might include a wide range of editing distances. Therefore, the average program size over an LGP population might decrease in the very early stage of evolution. This phenomenon is commonly observed in existing LGP studies \cite{huang_toward_2024}. In this beginning stage of evolution, LGP is disseminating relatively good individuals across the population so that most of the selected parents in breeding have a similar editing distance. Later, bloat happens.

\section{Expected Minimum Hitting Time of Optimal LGP Individuals}\label{sec:individual_move}
The hitting time of optimal LGP individuals indicates the running time with which LGP finds optimal individuals.
% Given an LGP individual, the LGP search space is degenerated to the neighborhood of the LGP individual. 
This section performs a case study on the expected minimum hitting time of LGP individuals based on a basic genetic operator, mutating random instructions without further constraints (also known as \textit{freemut} in \cite{Brameier2007}).
Specifically, we model the LGP evolution as reducing fitness supremums. 
% The genetic operators define the neighborhood of LGP.
LGP hits the optimal solutions within an expected minimum hitting time if LGP always moves with the upper bound on the constructive moving rate. A constructive move indicates a variation on LGP programs with a better instruction editing distance.
% fitness supremums within the neighborhood search space. The neighborhood search space is defined by the given genetic operators.

\begin{definition}
    \label{def:E_Detal_D}
Assume that an LGP individual $\rho$ with size $m$ has an initial $\delta^*(\rho)=d$. 
Let $\rho'_{(o)}$ be the offspring after one of the genetic operators $o\in\mathbb{O}$ is applied to $\rho$ for once, where $\mathbb{O}$ is the set of genetic operators. Let $\Delta \delta^*_{(\rho, o)} = \delta^*(\rho'_{(o)}) - \delta^*(\rho)$ be the difference of $\delta^*$ between $\rho'_{(o)}$ and $\rho$.
% The expectation of $\Delta \delta^* = \delta^*(\rho') - \delta^*(\rho)$ for a certain type of moves $k\in\{ des, neu, con \}$ is:
%     \begin{align}
%   E_k(\Delta d|g)&=\sum_{o\in \mathbb{O}} P(o)E_k(\Delta d|o, g) \nonumber \\
%   = \sum_{o\in \mathbb{O}} P(o)& \left( \sum_{i\in \{\text{all possible }\Delta d_k\}} \Delta d_{i} \cdot P(\Delta d_{i}|o, g) \right)  \nonumber
% \end{align}
% where 
% $$P(\Delta d_{i}|o, g) = \frac{ N(g'|o, g, \Delta d_i)}{N(g'|o,g)}$$
% $N(g'|o, g, \Delta d_i)$ is the number of offspring $g'$ that $d^*(g') - d^*(g)=\Delta d_i$ after moving $g$ by $o$, $N(g'|o,g)$ is the total number of offspring generated from $g$ by $o$.
Then, the expectation of $\Delta \delta^*$ given a program $\rho$ is defined as:
    \begin{equation}
    \mathbb{E}[\Delta \delta^*_{(\rho)}]=\sum_{o\in \mathbb{O}} \Pr(o)\mathbb{E}[\Delta \delta^*_{(\rho, o)}] 
  = \sum_{o\in \mathbb{O}} \Pr(o) \left( \sum_{\Delta d\in \{\text{all possible }\Delta \delta^*_{(\rho,o)}\}} \Delta d \cdot \Pr(\Delta d) \right),  \nonumber
    \end{equation}
$$\Pr(\Delta d) = \frac{||\mathcal{P}^{\Delta \delta^*=\Delta d}_{(\rho, o)}||}{||\mathcal{P}_{(\rho, o)}||},$$
where $\mathcal{P}_{(\rho, o)}$ is the set of offspring programs that are generated by applying operator $o$ to $\rho$, and $\mathcal{P}^{\Delta \delta^*=\Delta d}_{(\rho, o)}$ is a subset of $\mathcal{P}_{(\rho, o)}$ in which $\Delta \delta^*_{(\rho, o)} = \Delta d$.
% $$P(\Delta d) = \frac{ N(g'|o, g, \Delta d_i)}{N(g'|o,g)}$$
% $N(g'|o, g, \Delta d_i)$ is the number of offspring $g'$ that $d^*(g') - d^*(g)=\Delta d_i$ after moving $g$ by $o$, $N(g'|o,g)$ is the total number of offspring generated from $g$ by $o$.

% Since $\Delta d_i$ is uncertain, we replace $N(g'|o, g, \Delta d_i)$ by an expectation.
% \begin{align}
%   P(\Delta d_{i}|o, g) &= \frac{E( N(g'|o, g, \Delta d_i))}{N(g'|o,g)}  \nonumber \\
%   &=\frac{ \sum_{\Delta d_i} P(\Delta d_i)N(g'|o, g, \Delta d_i)}{N(g'|o,g)} \nonumber
%   % &= \frac{N(\Delta d_{i}|o, g)}{N( \{\text{all possible }\Delta d | o, g)}\times \frac{N(g'|o, g, \Delta d_i)}{N(g'|o,g)}  \nonumber
% \end{align}
% % $$P(\Delta d_{i}|o, g) = \frac{N(\Delta d_{i}|o, g)}{N( \{\text{all possible }\Delta d | o, g)}$$
% where $P(\Delta d_i)$ is the probability of a variation $\Delta d_i$ by $o$ exerting on $g$.
% $N(\Delta d_{i}|o, g)$ is the number of $g'$ that $d^*(g') - d^*(g)=\Delta d_i$ after moving $g$ by $o$, and $N( \{\text{all possible }\Delta d | o, g)$ is the total number of all the possible $\Delta d$ after moving $g$ by $o$.
\end{definition}

We define the move from $\rho$ to $\rho'_{(o)}$ to be constructive if $\Delta \delta^*_{(\rho, o)} < 0$.
% neutral if $\Delta \delta^*_{(\rho, o)} =0$, and destructive if $\Delta \delta^*_{(\rho, o)} > 0$.
Then the expectation of constructive moves from $\rho$ to $\rho'$ is (\textbf{Definition \ref{def:E_Detal_D}}):
% \sum_{o\in \mathbb{O}} P(o)E_{con}(\Delta d|o, g)
\begin{equation}
\label{eq:cons_Delta_d}
  \mathbb{E}[\Delta \delta^*_{(\rho)}\ |\ \Delta \delta^*_{(\rho)}<0]=
  \sum_{o\in \mathbb{O}} \Pr(o) \left( \sum_{\Delta d\in \{\text{all possible }\Delta \delta^*_{(\rho,o)}<0\}} \Delta d \cdot  \frac{||\mathcal{P}^{\Delta \delta^*=\Delta d}_{(\rho, o)}||}{||\mathcal{P}_{(\rho, o)}||} \right)  .
\end{equation}

We focus on a basic LGP genetic operator, \textit{freemut}, that produces offspring by adding or removing instructions. Freemut might add (remove) necessary and unnecessary instructions and introns into (from) program $\rho$. Both necessary and unnecessary instructions contribut to the program output, but necessary instructions constitute the target program while unnecessary instructions do not. Fig. \ref{fig:unnecessary_instr} shows an example of necessary and unnecessary instructions and introns for a specific program $\rho=\{\mathrm{R0}=x+1\}$ when $\rho^*=\{\mathrm{R0}=x+1;\mathrm{R0}=\mathrm{R0}+1\}$.

\begin{figure}
    \centering
    \includegraphics[scale=0.55, viewport=20 15 570 270, clip=true]{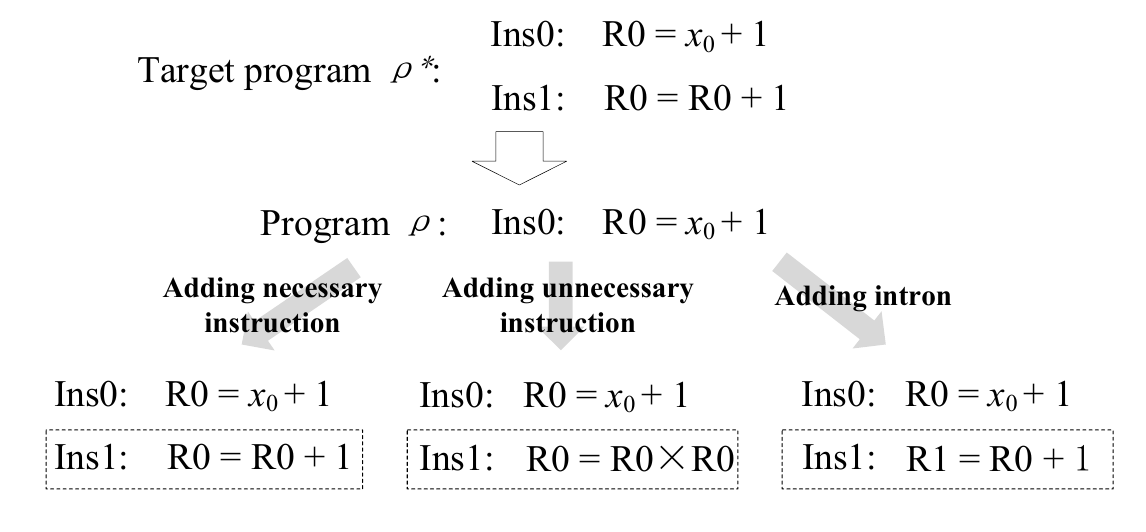}
    \caption{An example of necessary and unnecessary instructions and introns.}
    \label{fig:unnecessary_instr}
\end{figure}

\begin{lemma}\label{lemma:UB_con_add}
    Given an LGP individual $\rho$ whose $\delta^*(\rho)=a+r$ (i.e., accessing $\rho^*$ by adding $a$ necessary instructions and removing $r$ unnecessary instructions), we assume that an operator $o_{+u}$ that adds $u$ instructions into $\rho$ to produce offspring will add any number of unnecessary instructions ($j$) with the same probability. Then, the upper bound on the number of offspring that reduces $\delta^*(\rho)$ by $\Delta \delta^*_{(\rho,o_{+u})}=i$ is 
    \begin{align}
        & \overline{||\mathcal{P}^{\Delta \delta^*=i}_{(\rho, o_{+u})}||} = \frac{1}{\min\{ \lfloor (u-i)/2 \rfloor, \delta^*(\rho)-i\}+1} \sum_{j=0}^{\min\{ \lfloor (u-i)/2 \rfloor, \delta^*(\rho)-i\}} \nonumber \\
        &  \binom{\delta^*(\rho)}{i+j} \overline{\Lambda}(m+i+j,m+i+2j) \overline{\Omega}(m+i+2j, m+u), \nonumber
    \end{align}%
where $\overline{\Lambda}(\cdot)$ and $\overline{\Omega}(\cdot)$ are the upper bounds on $\Lambda$ and $\Omega$, respectively.
(For proof refer to Appendix \ref{prf:UB_con_add})
\end{lemma}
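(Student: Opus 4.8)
The plan is to count offspring directly. Fix the reduction amount $i$ and enumerate all programs that $o_{+u}$ can produce from $\rho$ while decreasing $\delta^*$ by exactly $i$, classifying the $u$ freshly inserted instructions by their effect on the editing distance, and then over-bound each class by a binomial coefficient together with the bloating factors $\overline{\Omega}$ and $\overline{\Lambda}$ supplied by \textbf{Lemma \ref{lemma:UBLBomega}} and \textbf{Lemma \ref{lemma:UBLBlambda}}.

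First I would pin down the case structure. Since $o_{+u}$ only adds instructions and $\delta^*(\rho)=a+r$, the only way to make progress is to supply some of the $a$ missing necessary instructions; adding an unnecessary exon raises $\delta^*$ by one (it must later be removed), while adding an intron is a neutral move that leaves $\delta^*$ unchanged. Hence if the $u$ insertions split into $c$ necessary, $j$ unnecessary, and $u-c-j$ intron instructions, the net change of $\delta^*$ is $-(c-j)$, so $\delta^*$ drops by exactly $i$ iff $c=i+j$. Feasibility forces $u-c-j=u-i-2j\ge 0$ and $c=i+j\le a\le\delta^*(\rho)$, so $j$ ranges precisely over $\{0,1,\dots,J\}$ with $J=\min\{\lfloor(u-i)/2\rfloor,\ \delta^*(\rho)-i\}$, which is the index set appearing in the statement.

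Next, for a fixed $j$ I would count the offspring in the three nested stages that match the factorization in the claim. (i) Selecting which $i+j$ of the necessary instructions to insert costs at most $\binom{a}{i+j}\le\binom{\delta^*(\rho)}{i+j}$, using monotonicity of $\binom{n}{k}$ in $n$ for $k\le n$ together with $a\le\delta^*(\rho)$; this grows the program to size $m+i+j$. (ii) Inserting the $j$ unnecessary exons is a non-neutral growth to size $m+i+2j$, so by \textbf{Lemma \ref{lemma:UBLBlambda}} its multiplicative contribution is at most $\overline{\Lambda}(m+i+j,\ m+i+2j)$ (the bound for the ``wrong instruction'' direction is dominated by $\overline{\Lambda}$). (iii) Inserting the remaining $u-i-2j$ introns is a neutral growth to size $m+u$, bounded by $\overline{\Omega}(m+i+2j,\ m+u)$ via \textbf{Lemma \ref{lemma:UBLBomega}}. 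The product of these three upper-bounds the number of $o_{+u}$-offspring that use this particular $j$ and reduce $\delta^*$ by $i$. Finally, since $o_{+u}$ draws the number $j$ of unnecessary insertions uniformly, I would average the per-$j$ counts over the $J+1$ feasible values of $j$ --- which is where the prefactor $\frac{1}{J+1}$ comes from --- and summing over $j=0,\dots,J$ reproduces exactly the stated expression for $\overline{||\mathcal{P}^{\Delta\delta^*=i}_{(\rho,o_{+u})}||}$.

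The main obstacle is making stages (ii)--(iii) rigorous: I must argue that the ordered decomposition ``necessary, then unnecessary, then intron'' over-counts rather than under-counts the target set --- i.e., every qualifying offspring is realised along at least one such path --- and that the worst-case factors $\overline{\Lambda}$ and $\overline{\Omega}$, which \textbf{Lemma \ref{lemma:UBLBlambda}} and \textbf{Lemma \ref{lemma:UBLBomega}} derive for growths over the whole search space, still dominate the growth factors emanating from the single fixed program $\rho$ and from the intermediate programs along the path. This rests on two facts that need to be checked: that the intron / unnecessary-exon / necessary labels of the inserted instructions are determined once the final offspring is known, so the decomposition is well posed, and that the bloating-factor bounds are uniform in the starting program. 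The binomial relaxation and the boundary cases (odd $u-i$, or $i+j>a$, where the corresponding term vanishes) are then routine, since they only enlarge a quantity that is already an upper bound.
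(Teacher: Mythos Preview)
Your proposal is correct and follows essentially the same route as the paper: decompose the $u$ added instructions into $i+j$ necessary, $j$ unnecessary, and $u-i-2j$ introns, bound the three stages by $\binom{\delta^*(\rho)}{i+j}$, $\overline{\Lambda}$, and $\overline{\Omega}$ respectively, and average over $j$ via the uniform-$j$ assumption. The only cosmetic difference is that the paper carries an explicit duplication factor $\eta_a\ge 1$ from Appendix~\ref{prf:numAdding} before relaxing it away, whereas you absorb that relaxation from the outset; your identification of the over-counting and uniformity-of-bound concerns is more careful than the paper's own argument.
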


\begin{lemma}\label{lemma:UB_con_remove}
Given an LGP individual $\rho$ whose $\delta^*(\rho)=a+r$ (i.e., accessing $\rho^*$ by adding $a$ necessary instructions and removing $r$ unnecessary instructions), we assume that an operator $o_{-u}$ that removes $u$ instructions from $\rho$ to produce offspring will remove any number of unnecessary instructions ($j$) with the same probability. Then, the upper bound on the number of offspring that reduces $\delta^*(\rho)$ by $\Delta \delta^*_{(\rho, o_{-u})}=i$ is 
$$\overline{||\mathcal{P}^{\Delta \delta^*=i}_{(\rho, o_{-u})}||} = \binom{|\rho|}{u}.$$  
    % For an LGP individual $g$ whose $d^*(g)=d=a+r$ (i.e., accessing $g^*$ by adding $a$ necessary instructions and removing $r$ unnecessary instructions), the upper bound on the number of neighbors with constructive moves by removing $u$ instructions is 
    % $$\sum_{i=1}^{\min(\overline{r'},u)} \sum_{j=0}^{\min( \lfloor (u-i)/2 \rfloor, \overline{r'}-i)} \binom{\overline{r'}}{i+j}\binom{m-i-j}{u-i-j}$$
    % where
    % \begin{footnotesize}
%     \begin{align}
%         & \sum_{i=1}^{\min(d,m-1,u)} \sum_{j=0}^{\min( \lfloor (u-i)/2 \rfloor, \min(d,m-1)-i)} \nonumber \\
% & \binom{\min(d,m-1)}{i+j}\binom{m-\min(d,m-1)}{u-i-j} \nonumber
%     \end{align}
    % $$ \sum_{i=1}^{\min(d,m-1,u)} \sum_{j=0}^{\min( \lfloor (u-i)/2 \rfloor, \min(d,m-1)-i)} \binom{\min(d,m-1)}{i+j}\binom{m-\min(d,m-1)}{u-i-j} 
 % \nonumber$$
%     \begin{align}
%         &  \sum_{i=1}^{\min(d,m-1,u)} \sum_{j=0}^{\min( \lfloor (u-i)/2 \rfloor, \min(d,m-1)-i)} \binom{\min(d,m)}{i+j}\binom{m-\min(d,m)}{u-i-j} 
%  \nonumber \\
% & \binom{\min(d,m)}{i+j}\binom{m-\min(d,m)}{u-i-j} \nonumber
%     \end{align}
    % \end{footnotesize}
(For proof refer to Appendix \ref{prf:UB_con_remove})
\end{lemma}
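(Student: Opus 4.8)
The plan is to reduce the statement to a pure counting bound on the \emph{total} number of offspring that $o_{-u}$ can generate, and then note that $\mathcal{P}^{\Delta \delta^*=i}_{(\rho, o_{-u})}$ is merely a sub-collection of this offspring set. The only real content is identifying each offspring with a choice of which instructions to delete.

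First I would make the action of $o_{-u}$ explicit: writing $\rho = \sigma_{|\rho|}\circ\cdots\circ\sigma_1$, an application of $o_{-u}$ picks a set $S$ of $u$ positions from $\{1,\dots,|\rho|\}$ and deletes the corresponding instructions, leaving the length-$(|\rho|-u)$ subsequence indexed by $\{1,\dots,|\rho|\}\setminus S$. There are exactly $\binom{|\rho|}{u}$ such position sets, and each one determines a single offspring program, so
$$||\mathcal{P}_{(\rho, o_{-u})}|| \leq \binom{|\rho|}{u},$$
possibly with strict inequality when distinct position sets yield the same subsequence (e.g.\ when $\rho$ repeats an instruction, or when deleting different introns leaves the same effective program). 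By \textbf{Definition \ref{def:E_Detal_D}}, $\mathcal{P}^{\Delta \delta^*=i}_{(\rho, o_{-u})}$ is, by construction, the sub-collection of $\mathcal{P}_{(\rho, o_{-u})}$ whose editing distance to $\rho^*$ has decreased by exactly $i$; as a subset it cannot exceed the whole, hence
$$||\mathcal{P}^{\Delta \delta^*=i}_{(\rho, o_{-u})}|| \leq ||\mathcal{P}_{(\rho, o_{-u})}|| \leq \binom{|\rho|}{u},$$
giving the claimed $\overline{||\mathcal{P}^{\Delta \delta^*=i}_{(\rho, o_{-u})}||} = \binom{|\rho|}{u}$.

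I would then remark that, in contrast with the additive case of \textbf{Lemma \ref{lemma:UB_con_add}}, the uniform-removal assumption on the number of removed unnecessary instructions plays no role in this cardinality bound; it is needed only later, when the probabilities $\Pr(\Delta d)$ entering \textbf{Definition \ref{def:E_Detal_D}} are evaluated for the hitting-time analysis. I would also note in passing that a tighter bound is available --- only removals that delete unnecessary exons and introns in the right proportions can be constructive, since deleting a necessary instruction cannot reduce $\delta^*$ --- but the loose bound $\binom{|\rho|}{u}$ is all that is used in the subsequent comparison of the constructive-move rates of adding versus removing, so I would not pursue the refinement.

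There is essentially no hard step in this argument. The one point demanding care is the bookkeeping of what constitutes a distinct offspring: one must treat $\mathcal{P}_{(\rho, o_{-u})}$ as a \emph{set} of resulting programs (so coincidental duplicates collapse) rather than a multiset of deletion outcomes, and one must confirm that ``$\Delta \delta^* = i$'' is genuinely just a subset restriction on that set, which it is directly by \textbf{Definition \ref{def:E_Detal_D}}.
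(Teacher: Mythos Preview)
Your proof is correct and considerably more direct than the paper's. Where you simply observe that $o_{-u}$ can produce at most $\binom{|\rho|}{u}$ distinct offspring (one per choice of deleted position set) and that $\mathcal{P}^{\Delta\delta^*=i}_{(\rho,o_{-u})}$ is a subset of these, the paper instead decomposes each removal according to how many unnecessary exons ($i+j$), necessary instructions ($j$), and introns ($u-i-2j$) are deleted, writes the count as the weighted sum $\sum_j P(j)\,\binom{r}{i+j}\binom{N_{\mathrm{nec}}}{j}\binom{N_{\mathrm{intron}}}{u-i-2j}/\eta_r$, and then bounds each summand via two applications of the Vandermonde-type inequality $\binom{a}{k}\binom{b}{\ell}\le\binom{a+b}{k+\ell}$, finishing with $\sum_j P(j)=1$. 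Your route is more elementary and, as you correctly remark, does not invoke the uniform-$j$ assumption at all; the paper genuinely needs that assumption because it averages over $j$ rather than counting deletion subsets directly. What the paper's decomposition buys in exchange is that the admissible range $1\le i\le \min\{\delta^*(\rho),|\rho|,(\delta^*(\rho)+|\rho|-1)/2,u\}$ drops out as a by-product; this range is quoted as $I_2$ in the proof of Theorem~\ref{theo:cons_indi_u}, so if your argument were to replace the paper's you would need to derive $I_2$ separately.
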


\begin{theorem}\label{theo:cons_indi_u}
    When LGP only has two operators, adding and removing $u$ random instructions, each with 50\% operator rate, then the upper bound on constructive moving rate $|\mathbb{E}[\Delta \delta^*_{(\rho)}\ |\ \Delta \delta^*_{(\rho)}<0]|$ is: 
    % $$\frac{1}{2}\left( \sum_{i=1}^{\min(d,u)}i\binom{d}{i} \left( \frac{\overline{\Omega(m+i, m+u)}}{\binom{m+u}{u} n^u}+\frac{\binom{m}{u-i}}{\binom{m}{u}} \right)   \right) $$
    % \begin{footnotesize}
    %     \begin{align}
    %     \overline{|E_{con}(\Delta d)|} = \frac{1}{2}&\left( \sum_{i=1}^{\min(d,u)} \frac{i}{\binom{m+u}{u} n^u} \mathcal{A}(i) + \sum_{i=1}^{\min(\overline{r'},u)} \frac{i}{\binom{m}{u}} \mathcal{R}(i)  \right)   \nonumber
    % \end{align}
    % \end{footnotesize}
    % $$\overline{|E(\Delta \delta^*_{(\rho)}\ |\ \Delta \delta^*_{(\rho)}<0)|} = \frac{1}{2} \left( \frac{1}{\binom{|\rho|+u}{u} n^u} \sum_{i_1=1}^{I_1} i_1 \overline{||\mathcal{P}^{\Delta \delta^*=i_1}_{(\rho,o_{+u})}||} + \frac{I_2(1+I_2)}{2} \right)$$
    \begin{equation}
        \overline{|\mathbb{E}[\Delta \delta^*_{(\rho)}\ |\ \Delta \delta^*_{(\rho)}<0]|} = 
         \frac{1}{2} \left( \frac{1}{\binom{|\rho|+u}{u} n^u} \sum_{i_1=1}^{I_1} i_1 \overline{||\mathcal{P}^{\Delta \delta^*=i_1}_{(\rho,o_{+u})}||} + \frac{I_2(1+I_2)}{2} \right), \nonumber
    \end{equation}
    % \begin{footnotesize}
    %     \begin{align}
    %     \overline{|E_{con}(\Delta d)|} = \frac{1}{2}&\left( \frac{1}{I_1} \sum_{i=1}^{I_1} \frac{i}{\binom{m+u}{u} n^u} \mathcal{A}(i) + \frac{1+I_2}{2}  \right)   \nonumber
    % \end{align}
    % \end{footnotesize}
    % where 
    % \begin{scriptsize}
    %  $$\mathcal{A}(i)= \frac{1}{J_1} \sum_{j=0}^{J_1} \binom{d}{i+j} \overline{\Lambda(m+i+j,m+i+2j)} \overline{\Omega(m+i+2j, m+u)} $$
    % \begin{align}
    %     \mathcal{A}(i)=& \frac{1}{J_1} \sum_{j=0}^{J_1} \binom{d}{i+j} \overline{\Lambda(m+i+j,m+i+2j)} \overline{\Omega(m+i+2j, m+u)} \nonumber \\
    %      & \binom{d}{i+j} \overline{\Lambda(m+i+j,m+i+2j)} \overline{\Omega(m+i+2j, m+u)} \nonumber 
    %     % \mathcal{R}(i)=& \sum_{j=0}^{\min( \lfloor (u-i)/2 \rfloor, \overline{r'}-i)} \binom{\overline{r'}}{i+j}\binom{m-i-j}{u-i-j} \nonumber
    % \end{align}
    % \end{scriptsize}
    where $I_1=\min\{\delta^*(\rho),u\}$, $I_2=\min\{\delta^*(\rho),|\rho|,\frac{\delta^*(\rho)+|\rho|-1}{2},u\}$.
\end{theorem}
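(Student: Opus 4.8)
The plan is to start from the general formula for the (negative) expected change in $\delta^*$ under one operator application, Eq.~(\ref{eq:cons_Delta_d}), and specialize it to the prescribed operator set $\mathbb{O}=\{o_{+u},o_{-u}\}$ with $\Pr(o_{+u})=\Pr(o_{-u})=\frac{1}{2}$. Eq.~(\ref{eq:cons_Delta_d}) then becomes a sum of two terms, one per operator, each of the form $\frac{1}{2}\sum_{\Delta d<0}\Delta d\cdot\frac{||\mathcal{P}^{\Delta\delta^*=\Delta d}_{(\rho,o)}||}{||\mathcal{P}_{(\rho,o)}||}$. Both terms are non-positive, so $|\mathbb{E}[\Delta\delta^*_{(\rho)}\mid\Delta\delta^*_{(\rho)}<0]|$ is exactly $\frac{1}{2}$ times the sum of the two magnitudes; hence I can bound the adding contribution and the removing contribution separately and add the two bounds at the end, using \textbf{Lemma~\ref{lemma:UB_con_add}} and \textbf{Lemma~\ref{lemma:UB_con_remove}} to control the numerators.

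For $o_{+u}$ I would first compute the exact size of the offspring set: choosing the $u$ new slots among the $|\rho|+u$ slots of the resulting program and filling each new slot with one of the $n$ instructions gives $||\mathcal{P}_{(\rho,o_{+u})}||=\binom{|\rho|+u}{u}n^u$, which is the denominator in the claim. I then reindex the constructive (negative) moves as $\Delta\delta^*=-i_1$. Since adding $u$ instructions can lower $\delta^*$ by at most one per added instruction and never below $0$, the index $i_1$ ranges over $1,\dots,I_1$ with $I_1=\min\{\delta^*(\rho),u\}$. Substituting $||\mathcal{P}^{\Delta\delta^*=-i_1}_{(\rho,o_{+u})}||\le\overline{||\mathcal{P}^{\Delta\delta^*=i_1}_{(\rho,o_{+u})}||}$ from \textbf{Lemma~\ref{lemma:UB_con_add}} yields the adding contribution $\frac{1}{\binom{|\rho|+u}{u}n^u}\sum_{i_1=1}^{I_1}i_1\,\overline{||\mathcal{P}^{\Delta\delta^*=i_1}_{(\rho,o_{+u})}||}$.

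For $o_{-u}$ the offspring set is the collection of programs obtained by deleting a size-$u$ subset of the $|\rho|$ instructions, so $||\mathcal{P}_{(\rho,o_{-u})}||=\binom{|\rho|}{u}$, and \textbf{Lemma~\ref{lemma:UB_con_remove}} bounds every numerator by $\binom{|\rho|}{u}$ as well, so each conditional probability in this sum is $\le 1$. To get the summation range I would use the decomposition $\delta^*(\rho)=a+r$ (add $a$ necessary, remove $r$ unnecessary instructions): deleting $u$ instructions of which $k$ are unnecessary changes $\delta^*$ by $u-2k$, so the attainable reduction is at most $\min\{r,u\}$; together with $r\le\delta^*(\rho)$, $r\le|\rho|$, and $r\le\frac{\delta^*(\rho)+|\rho|-1}{2}$ (the last following from $|\rho^*|\ge1$), this bounds $i_2$ by $I_2=\min\{\delta^*(\rho),|\rho|,\frac{\delta^*(\rho)+|\rho|-1}{2},u\}$. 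Hence the removing contribution is at most $\sum_{i_2=1}^{I_2}i_2=\frac{I_2(1+I_2)}{2}$. Adding the two contributions with the $\frac{1}{2}$ weights reproduces the claimed $\overline{|\mathbb{E}[\Delta\delta^*_{(\rho)}\mid\Delta\delta^*_{(\rho)}<0]|}$.

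The step I expect to be the main obstacle is the combinatorial bookkeeping that fixes the ranges $I_1$ and $I_2$, in particular the bound $\frac{\delta^*(\rho)+|\rho|-1}{2}$ on the reduction achievable by a single removal, which needs the alignment of $\rho$ against some $\rho^*$ into matched, unnecessary, and missing instructions and the fact that the shortest optimal program is non-empty. A secondary point needing care is that every substitution moves the estimate in the same direction — the offspring-set sizes are used exactly in the denominators, while the lemma bounds and the crude bound $1$ are genuine upper bounds in the numerators — and that the two non-positive partial sums cannot cancel, so bounding their magnitudes independently is legitimate; beyond that the argument is routine reindexing and summation of an arithmetic series.
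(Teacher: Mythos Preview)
Your proposal is correct and follows essentially the same route as the paper: specialize Eq.~(\ref{eq:cons_Delta_d}) to $\mathbb{O}=\{o_{+u},o_{-u}\}$ with weight $\tfrac12$ each, compute $||\mathcal{P}_{(\rho,o_{+u})}||=\binom{|\rho|+u}{u}n^u$ and $||\mathcal{P}_{(\rho,o_{-u})}||=\binom{|\rho|}{u}$, and plug in Lemmas~\ref{lemma:UB_con_add} and~\ref{lemma:UB_con_remove} for the numerators so that the removal term collapses to $\sum_{i_2=1}^{I_2} i_2$. Your explicit derivation of $I_1$ and $I_2$ (via the decomposition $\delta^*(\rho)=a+r$ and $m^*\ge 1$) is exactly what the paper defers to the appendix proofs of those lemmas; the only cosmetic slip is that your ``change $=u-2k$'' formula ignores introns, but since introns do not affect $\delta^*$ this does not alter the bound $\min\{r,u\}$ on the attainable reduction.
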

\begin{proof}
% Suppose that $g$ accesses $g'$ after one constructive move (i.e., adding or removing a random instruction and reducing $d^*$), \\
%   $\therefore$ $\Delta d_{con,i}=|d(g',g^*) - d(g,g^*)|=|(V(g',g^*)-V(g, g^*))\beta|$
%   $\because$ both adding and removing a random instruction have $\beta$ of 1, and $g$ and $g'$ are one-move away (i.e., $|V(g',g^*)-V(g, g^*)|=1$). \\
%   $\therefore$ $\Delta d_{con,i}=1$.\\
Assume that $\delta^*(\rho)=a+r$ where $a$ is the number of necessary to-add instructions and $r$ is the number of necessary to-remove instructions. \\
\begin{enumerate}
    \item For adding $u$ instructions, the number of offspring $||\mathcal{P}_{(\rho, o_{+u})}||$ is $\binom{|\rho|+u}{u} n^u$.\\
    % $\therefore$ $P(\Delta d_{con}|add, g) = \frac{a}{\binom{m+u}{u} n^u} \leq \frac{d}{\binom{m+u}{u} n^u}$.
    \item For removing $u$ instructions, the number of offspring $||\mathcal{P}_{(\rho, o_{-u})}||$ is $\binom{|\rho|}{u}$.\\
     % $\therefore$ $P(\Delta d_{con}|remove, g) = \frac{r}{\binom{m}{u}} \leq \frac{d}{\binom{m}{u}}$.
\end{enumerate} 
   % $\because$ $i$ are uncertain, $\therefore$ we write $N_a(i)$ and $N_r(i)$ as expectations of $i$.\\
   % \begin{footnotesize}
   %     \begin{align}
   %         N_a(i)= \sum_{i'}^{I_1} P(i_1)N_a(i_1) \nonumber \\
   %     \end{align}
   % \end{footnotesize}
  % $\therefore$ Substituting the lower bounds of $\eta_r$ and $\eta_a$ from Lemmas \ref{lemma:numAdding} and \ref{lemma:numRemoving} into Eq. (\ref{eq:cons_Delta_d}), we have:
Thus, substituting Lemmas \ref{lemma:UB_con_add} and \ref{lemma:UB_con_remove} into Eq. (\ref{eq:cons_Delta_d}), i.e., replacing $||\mathcal{P}^{\Delta \delta^*=\Delta d}_{(\rho, o_{\pm u})}||$ by their upper bounds, we have the upper bound on constructive moving rate:
\begin{align}
  \label{eq:con_ub}
      \overline{|\mathbb{E}[\Delta \delta^*_{(\rho)}\ |\ \Delta \delta^*_{(\rho)}<0]|}&=\frac{1}{2}\left( \frac{1}{\binom{|\rho|+u}{u} n^u} \sum_{i_1=1}^{I_1} i_1 \overline{||\mathcal{P}^{\Delta \delta^*=i_1}_{(\rho,o_{+u})}||} + \frac{1}{\binom{|\rho|}{u}}\sum_{i_2=1}^{I_2} i_2 \overline{||\mathcal{P}^{\Delta \delta^*=i_2}_{(\rho, o_{-u})}||} \right)  \nonumber \\
      & =\frac{1}{2} \left( \frac{1}{\binom{|\rho|+u}{u} n^u} \sum_{i_1=1}^{I_1} i_1 \overline{||\mathcal{P}^{\Delta \delta^*=i_1}_{(\rho,o_{+u})}||} + \sum_{i_2=1}^{I_2} i_2 \right),
\end{align}%
where $I_1=\min\{\delta^*(\rho),u\}$, and $I_2=\min\{\delta^*(\rho),|\rho|,\frac{\delta^*(\rho)+|\rho|-1}{2},u\}$. The proofs of $I_1$ and $I_2$ refer to Appendices \ref{prf:UB_con_add} and \ref{prf:UB_con_remove}.
\end{proof}

% \begin{remark}
% Corollary \ref{corol:cons_indi_u} shows that when an LGP individual $g$ moves towards $g^*$ based on the given adding and removing operators, the upper bound on the constructive moving rate is correlated to $d^*(g)$, the variation step size of operators $u$, program size $m$, and the total number of instructions $n$. 
% Corollary \ref{corol:indi_con_u} implies that reducing the program size (smaller $m$) and reducing the possible number of instructions (smaller $n$) can increase the upper bound on the constructive moving rate.
% \end{remark}
% \begin{definition}
%     Constructive and neutral fitness moving probability is 
%     \begin{align}
%         P_{con}(\Delta fit | g) = \sum_{o\in\mathbb{O}} P(o) \frac{N(fit(g'))}{|\nu_o(g)|} \nonumber \\
%         g'\in \{\nu_o(g), fit(g') \leq fit(g) \} \nonumber
%     \end{align}
%     where $\nu_o(g)$ is the set of neighboring offspring of $g$ generated by $o$.
% \end{definition}

% \begin{remark}
%     The constructive moving rate of $d$ does not necessarily imply the constructive moving rate of $\Delta fit$ since each move is only acceptable when the new program has better or equivalent fitness than its parent. 
% \end{remark}

\begin{remark}
    Since $\overline{||\mathcal{P}^{\Delta \delta^*=i}_{(\rho,o_{+u})}||}$ is an upper bound on the number of offspring with better $\delta^*$ by adding instructions, it is possible that $\sum_i^{I_1} \overline{||\mathcal{P}^{\Delta \delta^*=i}_{(\rho,o_{+u})}||} >\binom{|\rho|+u}{u} n^u$ (the same to $\overline{||\mathcal{P}^{\Delta \delta^*=i}_{(\rho,o_{-u})}||} > \binom{|\rho|}{u}$). To have a tight upper bound, we truncate the probability of $\frac{\overline{||\mathcal{P}^{\Delta \delta^*=i}_{(\rho,o_{+u})}||}}{\binom{|\rho|+u}{u} n^u}$ to 1 when $\overline{||\mathcal{P}^{\Delta \delta^*=i}_{(\rho,o_{+u})}||}>\binom{|\rho|+u}{u} n^u$ and scale down the two terms in Eq. (\ref{eq:con_ub}) by $\frac{1}{I_1}$ and $\frac{1}{I_2}$, respectively. Specifically, Theorem \ref{theo:cons_indi_u} can be re-written as:
    % $$\overline{|E(\Delta \delta^*_{(\rho)}\ |\ \Delta \delta^*_{(\rho)}<0)|} = \frac{1}{2} \left(\sum_{i_1=1}^{I_1} \frac{i_1 \overline{||\mathcal{P}^{\Delta \delta^*=i}_{(\rho,o_{+u})}||}}{I_1 \max( \overline{||\mathcal{P}^{\Delta \delta^*=i}_{(\rho,o_{+u})}||}, \binom{|\rho|+u}{u} n^u)}  + \frac{1+I_2}{2} \right)$$  
    \begin{equation}
        \overline{|\mathbb{E}[\Delta \delta^*_{(\rho)}\ |\ \Delta \delta^*_{(\rho)}<0]|} =
        \frac{1}{2} \left(\sum_{i_1=1}^{I_1} \frac{i_1 \overline{||\mathcal{P}^{\Delta \delta^*=i}_{(\rho,o_{+u})}||}}{I_1 \max\{ \overline{||\mathcal{P}^{\Delta \delta^*=i}_{(\rho,o_{+u})}||}, \binom{|\rho|+u}{u} n^u\}}  + \frac{1+I_2}{2} \right). \nonumber
    \end{equation}
    
\end{remark}

\begin{corollary}\label{corol:minigeneration}
When LGP only has two operators, adding and removing $u$ random instructions, each with 50\% operator rate, then an LGP individual $\rho$ accesses the neighborhood of $\rho^*$ ($\{\rho|\delta^*(\rho) \leq \epsilon\}$) with a minimum number of generations $q$ so that
    % $$q \geq \log_{1-\overline{ |E_{con}(\Delta d | g)|}/d}(\delta / d)$$
    $$ \underbrace{ D \circ \cdots \circ D(\delta^*(\rho))}_{q \times D(\cdot)} \leq \epsilon,$$
where $D(\delta^*(\rho))=\delta^*(\rho)-\overline{|\mathbb{E}[\Delta \delta^*_{(\rho)}\ |\ \Delta \delta^*_{(\rho)}<0]|}$, and $\epsilon$ is a small enough positive constant.
\end{corollary}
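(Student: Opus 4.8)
The plan is to read Corollary~\ref{corol:minigeneration} as a dynamical repackaging of Theorem~\ref{theo:cons_indi_u}: as the passage preceding Definition~\ref{def:E_Detal_D} states, the expected minimum hitting time is realized when LGP always moves at the upper bound on the constructive moving rate, so it suffices to track the resulting descent of $\delta^{*}$. First I would fix the two-operator freemut setting (adding or removing $u$ random instructions, each applied with probability $1/2$) and, for an individual $\rho$ with $\delta^{*}(\rho)=d>\epsilon$, invoke Theorem~\ref{theo:cons_indi_u} to conclude that one generation can shrink $\delta^{*}$, conditioned on a constructive move, by at most $\overline{|\mathbb{E}[\Delta\delta^{*}_{(\rho)}\mid\Delta\delta^{*}_{(\rho)}<0]|}$; hence the smallest $\delta^{*}$ attainable after that generation is $d-\overline{|\mathbb{E}[\Delta\delta^{*}_{(\rho)}\mid\Delta\delta^{*}_{(\rho)}<0]|}=D(d)$. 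One then checks $D(d)<d$ whenever a constructive move is available at $\rho$: writing $\delta^{*}(\rho)=a+r$, if $a\ge 1$ one may add a necessary instruction (padding with introns up to $u$) and if $r$ is large enough relative to $u$ one may remove unnecessary instructions, each strictly decreasing $\delta^{*}$, so the capped rate is strictly positive in that regime.

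Next I would iterate over generations along the fastest admissible trajectory $\rho=\rho_{0},\rho_{1},\dots$, where $\rho_{t+1}$ is the freemut offspring of $\rho_{t}$ that minimizes $\delta^{*}$ and, by the remark tightening Theorem~\ref{theo:cons_indi_u}, attains the bound $\delta^{*}(\rho_{t+1})=D(\delta^{*}(\rho_{t}))$. Composing over $t=0,\dots,q-1$ gives $\delta^{*}(\rho_{q})=\underbrace{D\circ\cdots\circ D}_{q}(\delta^{*}(\rho))$, so this trajectory enters the neighbourhood $\{\rho\mid\delta^{*}(\rho)\le\epsilon\}$ at exactly the first generation $q$ for which $\underbrace{D\circ\cdots\circ D}_{q}(\delta^{*}(\rho))\le\epsilon$, and no run can do so in fewer generations, since that would mean some generation reduced $\delta^{*}$ by more than the bound of Theorem~\ref{theo:cons_indi_u}. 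Finiteness of such a $q$ follows because $\delta^{*}$ is integer-valued and bounded above by $m^{*}+m$ (Lemma~\ref{lemma:d}), while $D(x)<x$ for all $x>\epsilon$ under the positive-rate regime, so the non-increasing integer sequence $\{\underbrace{D\circ\cdots\circ D}_{t}(\delta^{*}(\rho))\}_{t\ge 0}$ must reach a value at most $\epsilon$ after finitely many steps.

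The hard part will be making the composition $\underbrace{D\circ\cdots\circ D}_{q}$ fully rigorous, because $\overline{|\mathbb{E}[\Delta\delta^{*}_{(\rho)}\mid\Delta\delta^{*}_{(\rho)}<0]|}$ in Theorem~\ref{theo:cons_indi_u} depends not only on $\delta^{*}(\rho)$ but also on the program size $|\rho|$ and on the split $\delta^{*}(\rho)=a+r$, whereas $D$ is written as a function of $\delta^{*}(\rho)$ alone; along the descent $|\rho|$ changes as instructions are added or removed. I would handle this either by carrying the pair $(\delta^{*},|\rho|)$ through the recursion and verifying that the fastest trajectory stays in the size regime for which Lemmas~\ref{lemma:UB_con_add} and~\ref{lemma:UB_con_remove} give the quoted bound, or by showing that $D$ is non-increasing in $\delta^{*}$ over the relevant range so the per-generation identity $\delta^{*}(\rho_{t+1})=D(\delta^{*}(\rho_{t}))$ chains without loss. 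A secondary caveat I would make explicit is that the corollary is meaningful only when the capped constructive rate stays positive along the trajectory (otherwise $D$ has a fixed point above $\epsilon$ and no finite $q$ exists); this is exactly the case when the step size $u$ is chosen so that constructive moves remain available, which ties the corollary to the paper's thesis on choosing a reasonably large variation step size.
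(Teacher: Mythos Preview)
The paper supplies no proof for Corollary~\ref{corol:minigeneration}; it is stated immediately after Theorem~\ref{theo:cons_indi_u} and its tightening remark, as the direct iteration of that theorem's per-generation bound. Your plan is therefore consistent with, and considerably more careful than, the paper's own treatment.

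Two remarks. First, the concern you flag about $D$ depending on $|\rho|$ and on the split $\delta^{*}(\rho)=a+r$, rather than on $\delta^{*}(\rho)$ alone, is a genuine looseness in the paper's formulation, not a defect in your reasoning; the paper never resolves it and simply enumerates $(u,d,m)$ separately when producing Figs.~\ref{fig:con_move_N_minigen} and~\ref{fig:con_move_N_minigen2}. Second, your finiteness step has a small slip: the iterates $D\circ\cdots\circ D(\delta^{*}(\rho))$ are not integer-valued after the first application, since the rate subtracted in Theorem~\ref{theo:cons_indi_u} is a real number, so the ``non-increasing integer sequence'' argument does not go through as stated. The clean fix is to exhibit a uniform positive lower bound on the rate along the trajectory; for instance, the second summand $\tfrac{1}{2}\cdot\tfrac{I_2(1+I_2)}{2}$ in Theorem~\ref{theo:cons_indi_u} is at least $\tfrac{1}{2}$ whenever $I_2\ge 1$, so each iteration decreases the value by at least that fixed amount and the threshold $\epsilon$ is reached after finitely many steps.
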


% \begin{proof}
%     $\because$ $\overline{ |E_{con}(\Delta d | g)|}=\frac{u}{2}\left( \sum_{i=1}^{\min(d,u)}\binom{d}{i} \left( \frac{\overline{\Omega(m+i, m+u)}}{\binom{m+u}{u} n^u}+\frac{\binom{m}{u-i}}{\binom{m}{u}} \right)   \right) $. \\
%     % Let $A = \overline{ |E_{con}(\Delta d | g)|}/d = \frac{u}{2}\left( \frac{1}{\binom{m+u}{u}n}+\frac{1}{\binom{m}{u}}  \right)$. \\
%     $\therefore$ Denote the next $d$ after one move as $d'$, we have $\underline{E(d')}=d-\overline{ |E_{con}(\Delta d | g)|}$. \\
%     $\therefore$ suppose that the individual $g$ accesses $d^*(g)\leq \delta$ with the upper bound on constructive moving rate and the minimum generations $q$, we have
%     $$(1-A)^q d \leq \delta$$
%     $\therefore$ $q \geq \log_{1-\overline{ |E_{con}(\Delta d | g)}|/d}(\delta / d)$
% \end{proof}

\begin{figure}[t]
    \centering
    \includegraphics[scale=0.6, viewport=10 320 600 600, clip=true]{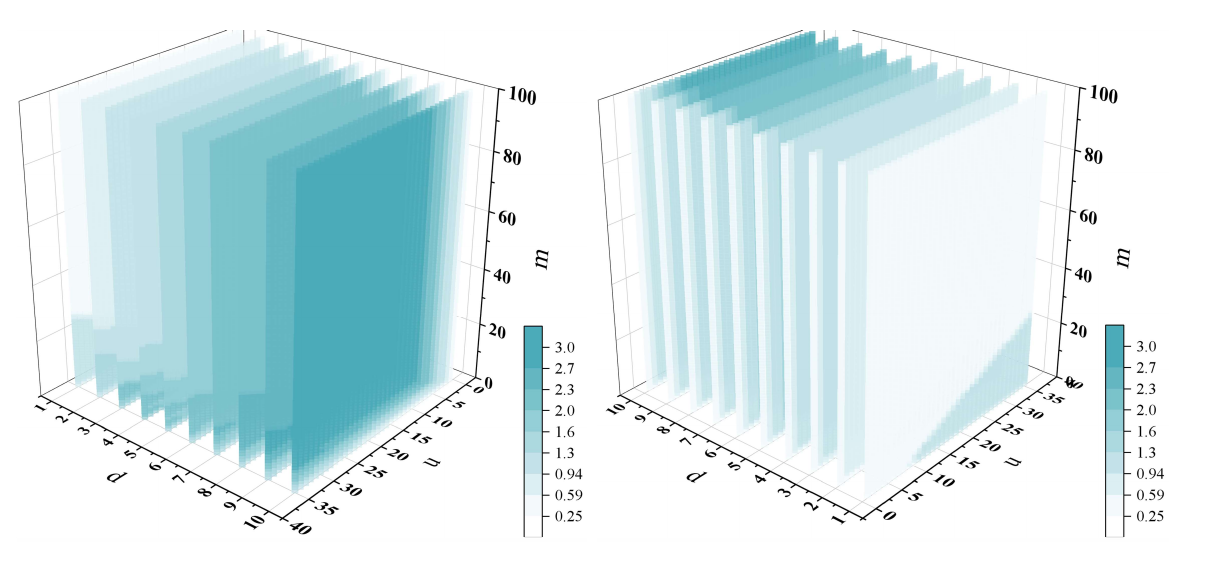}
    \caption{Each dot is a value of the upper bounds on the constructive moving rate of $\delta^*$ given variation step size $u$, editing distance $d$, and program size $m$ in Nguyen4. The two sub-figures show the values in two perspectives. The dark color indicates a large constructive moving rate upper bounds.}
    \label{fig:con_move_N_minigen}
\end{figure}
\begin{figure}[t]
    \centering
    \includegraphics[scale=0.6, viewport=10 20 600 300, clip=true]{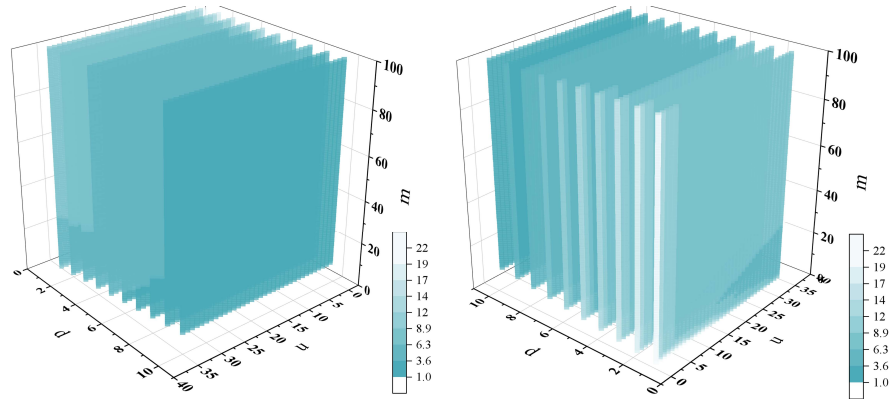}
    \caption{Each dot is a value of the minimum hitting time (number of generations) variation step size $u$, editing distance $d$, and program size $m$ in Nguyen4. The two sub-figures show the values in two perspectives. The dark color indicates a small minimum hitting time.}
    \label{fig:con_move_N_minigen2}
\end{figure}

Theorem \ref{theo:cons_indi_u} and Corollary \ref{corol:minigeneration} show the upper bounds on constructive moving rate ($\overline{|\mathbb{E}[\Delta \delta^*_{(\rho)}\ |\ \Delta \delta^*_{(\rho)}<0]|}$) and their corresponding minimum hitting time ($q$), respectively. To intuitively understand these formulas, Fig. \ref{fig:con_move_N_minigen} and \ref{fig:con_move_N_minigen2} show $\overline{|\mathbb{E}[\Delta \delta^*_{(\rho)}\ |\ \Delta \delta^*_{(\rho)}<0]|}$ and $q$ in Nguyen4 (smallest optimal program size $m^*=11$) by enumerating variation step size $u \in [1~..~35]$, instruction editing distance $d\in [1~..~10]$, program size $m\in[1~..~100]$, and set $\epsilon=1\times10^{-4}$.
We make the following three observations:
\begin{enumerate}
    \item The moving rate upper bounds and minimum hitting time is correlated to editing distance to the optimal solution ($d$). A large $d$ implies a larger moving rate and a smaller hitting time.
    \item The moving rate and the minimum hitting time with a small program size ($m\leq20$) are better than those with a large $m$. It implies that maintaining a necessarily small program size likely improves LGP search efficiency.
    \item When the variation step size $u$ increases from 1 to 10, we can see an increase in moving rate upper bounds and a reduction of minimum hitting time. However, when $u$ is larger than 10, most of the probability in $\overline{|\mathbb{E}[\Delta \delta^*_{(\rho)}\ |\ \Delta \delta^*_{(\rho)}<0]|}$ are truncated to 1. We cannot see an increase or reduction.
    This suggests that when we consider the variation step size of freemut in a range of small values (e.g., $u\in[1~..~10]$), a large step size $u$ improves LGP effectiveness.
\end{enumerate}

\subsection{Empirical Verification of Theorem \ref{theo:cons_indi_u}}
We verify \textbf{Theorem \ref{theo:cons_indi_u}} by comparing the convergence with the expected moving rates and minimum generations. 
% Fig. \ref{fig:moving_rate} and \ref{fig:mini_generations} show the upper bounds on moving rate of $d$ and the minimum number of generations estimated by \textbf{Theorem \ref{theo:cons_indi_u}}, respectively. Specifically, we set $\delta=1E-4$, fix $d=10$ in Fig. \ref{fig:moving_rate} and fix $m=100$ in Fig. \ref{fig:mini_generations}. Fig. \ref{fig:moving_rate} shows that when an LGP individual with a size of $m$ evolves by the basic mutations (i.e., adding and removing $u$ instructions), the constructive moving rate consistently goes larger when $u$ approaches larger values. With a large constructive moving rate on $d$, LGP evolves with the mutations with a large $u$ also has a small expected hitting time. As shown in Fig. \ref{fig:mini_generations}, when $u\geq 5$, \textbf{Corollary \ref{corol:minigeneration}} estimates that LGP achieves $d\leq \delta=1E-4$ within 200 generations.
Fig. \ref{fig:convergence} shows the convergence curves of LGP evolved by freemut with $u=1,3,5,7,9,12,15$ instructions. 
We can see that in the four example problems, LGP with large variation step sizes (e.g., $u=9,12,15$) converges to better fitness within fewer generations. Particularly, the blue curve ($u=9$) has the best convergence performance in three of the four problems. On the other hand, LGP with small variation step sizes (e.g., $u=1$) has the worst convergence speed among the compared settings. 
Although in Keijzer11, LGP with $u=9$ ends up with higher (worse) fitness values than $u=3,5$, the gap is marginal. 
% We suspect that the gap between the theoretical analysis and the empirical results in Keijzer11 stems from the distribution difference of the instruction editing distance over program size (Remark in Theorem \ref{theo:UB_Emin}) and the distribution difference of fitness $f(\rho)$ over $\delta^*$ (Theorem \ref{theo:similar_fit_dist}).

For $u=12, 15$ where the constructive moving rate upper bounds are truncated,  the convergence performance is slightly worse than $u=9$. This implies that when the truncation in Theorem \ref{theo:cons_indi_u} happens, the performance of LGP might decrease with the increase of $u$.
% The comparisons among Fig. \ref{fig:convergence}, Fig. \ref{fig:moving_rate}, and Fig. \ref{fig:mini_generations} 
% Fig. \ref{fig:convergence} verifies that \textbf{Theorem \ref{theo:cons_indi_u}} consistently describe the LGP evolution.
% The results show that the constructive move of $d$ can imply the constructive move in terms of fitness. 
% By relaxing the number of neighbors, \textbf{Theorem \ref{theo:cons_indi_u}} enables us to compare the expected hitting time of LGP individuals evolved by the basic mutation operators with different variation step sizes.
The empirical results confirm that, given a limited range of variation step sizes for freemut, mutating multiple instructions in freemut is better than mutating only one instruction, which is implied by \textbf{Theorem \ref{theo:cons_indi_u}} and its numerical results (Fig. \ref{fig:con_move_N_minigen} and \ref{fig:con_move_N_minigen2}).

% \begin{figure}
%     \centering
%     \includegraphics[scale=0.6, viewport=30 20 450 300, clip=true]{}
%     \caption{The upper bounds on constructive moving rate of $d$ ($\overline{|E_{con}(\Delta d | g)|}$) when $d=10$.}
%     \label{fig:moving_rate}
% \end{figure}

% \begin{figure}
%     \centering
%     \includegraphics[scale=0.6, viewport=30 20 460 300, clip=true]{}
%     \caption{The minimum hitting time (number of generations) of adding or removing $u$ instructions when $m=100$.}
%     \label{fig:mini_generations}
% \end{figure}

\begin{figure}
    \centering
    \includegraphics[scale=0.75, viewport=30 20 420 320, clip=true]{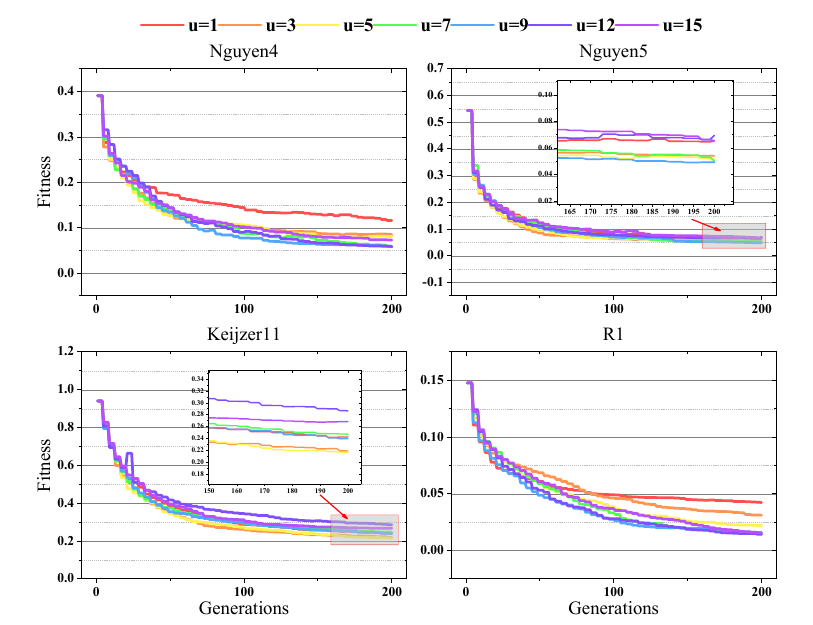}
    \caption{The convergence curves of LGP with adding and removing $u$ instructions}
    \label{fig:convergence}
\end{figure}

\section{Conclusions}\label{sec:conclude}

% \subsection{ Major contributions }
% It is non-trivial to obtain the exact number of LGP individuals with a certain fitness. 
In this paper, we have modeled LGP behavior theoretically by fitness supremums which are linearly correlated to instruction editing distance. This finding bridges the gap between fitness values and LGP search spaces and dispenses the problem-specific designs in existing GP theoretical analysis (e.g., focusing on unrealistic problems ORDER and MAJORITY).
% The empirical results in Sections \ref{sec:fitnessgaptheo} to \ref{sec:individual_move} show that instruction editing distance consistently describe LGP behaviors. 
We show that relaxing fitness to its supremum based on instruction editing distance is a feasible way to understand LGP behavior.

% The fitness supremum implies fitness values when the distribution of fitness values over different $d$ is consistent. The assumption is also verified by the empirical results. 
% Based on the relationship between fitness supremums and instruction editing distances, we propose an LGP approximation framework that approximates training data by iteratively reducing LGP fitness supremums (\textbf{Theorem \ref{theo:approximation}}).
% We analyze LGP behavior (i.e., bloat effects and minimum hitting time) by instruction editing distance in Sections \ref{sec:d_overspace} and \ref{sec:individual_move}. 
There are four key insights:
\begin{enumerate}
    \item \textbf{Theorem \ref{theo:dtofit}} implies that a) reducing the semantic gap between input and target output semantics and b) reducing the semantic gap between different instructions accepting different input semantics lead to better fitness expectation.
    \item \textbf{Theorem \ref{theo:UB_Emin}} implies that the fitness expectation of LGP individuals increases (getting worse) with program size. This theoretically backs up the recommended initialization strategy of small GP programs.
    \item \textbf{Corollary \ref{corol:bloat_effect}} explains that bloat happens because it is more likely to reduce and maintain fitness by adding instructions than by removing them, given a program size and a certain editing distance to optimal solutions. 
    \item \textbf{Theorem \ref{theo:cons_indi_u}} suggests that LGP with the basic genetic operator (i.e., freemut) should use a large variation step size and keep program size small to achieve better performance.
\end{enumerate}
% The expectation value of instruction editing distance over the search space implies that the fitness distribution of LGP individuals increases (getting worse) with program size. This theoretically backs up the recommended search strategy of searching from small to large programs and explains that bloat happens because it is more likely to achieve better offspring by adding instructions than by removing them given a program size and a certain editing distance to optimal solutions. 
% The editing distance reduction implies the minimum hitting time of an LGP individual based on a given basic genetic operator (i.e., freemut). The minimum hitting time model suggests that LGP with the basic genetic operator should use a large variation step size and keep program size small to achieve better performance.

We believe that these results strengthen the theoretical foundation of GP and provide a possible perspective (i.e., fitness supremums based on instruction editing distance) for the promotion of further theoretical studies in genetic programming.

\small

\bibliographystyle{apalike}
\bibliography{ecjsample}

\section{Proof of Lemmas and Corollaries}\label{app:lemmaproof}

\subsection{Lemma \ref{lemma:theta}}\label{prf:theta}
Given a semantic space $\Psi$ and a set of instructions $\mathcal{I}$, we have $0 \leq \Delta_{(\mathcal{I}^*,\Psi)} \leq \Delta_{(\mathcal{I}^2,\Psi)}$.
% Given a semantic space $\Psi$ and $\mathbb{F}$, we have $0\leq \overline{\theta_f} \leq \overline{\theta_F}$.

\begin{proof}
    First, by \textbf{Definition \ref{def:theta_f}} ($\Delta_{(\mathcal{I}^*,\Psi)}$ is the minimum upper bound of $||\sigma(\mathbf{s}_1) - \sigma(\mathbf{s}_2)||-||\mathbf{s}_1 - \mathbf{s}_2||$), we must have 
    $$
    \Delta_{(\mathcal{I}^*,\Psi)} \geq ||\sigma(\mathbf{s}) - \sigma(\mathbf{s})|| - ||\mathbf{s} - \mathbf{s}|| = 0, \text{ where } \mathbf{s} \in \Psi.
    $$
    Second, by \textbf{Definition \ref{def:theta_f}} and \textbf{Definition \ref{def:theta_F}}, we have 
    \begin{equation}
        \Delta_{(\mathcal{I}^2,\Psi)} \geq \sup_{\substack{
        \mathbf{s}_1, \mathbf{s}_2 \in \Psi, \sigma \in \mathcal{I}^*}} (||\sigma(\mathbf{s}_1) - \sigma(\mathbf{s}_2)||-||\mathbf{s}_1 - \mathbf{s}_2||) = \Delta_{(\mathcal{I}^*,\Psi)}.
    \end{equation}
    Therefore, $0 \leq \Delta_{(\mathcal{I}^*,\Psi)} \leq \Delta_{(\mathcal{I}^2,\Psi)}$.
\end{proof}

\subsection{Lemma \ref{lemma:d}}\label{prf:d}
 Suppose the smallest program size that represents $\rho^*$ is $m^*$, and LGP only uses adding or removing one random instruction as genetic operators to produce offspring. Then an LGP solution with size $m$, $\rho_{(m)}$, has lower and upper bounds of $\delta^*$ to access $\rho^*$. Specifically,
$$\inf{\delta^*_{(m)}}=\max\{0, m^*-m\} \leq \delta^*_{(m)} \leq m^*+m = \sup{\delta^*_{(m)}}.$$
\begin{proof}
    \begin{enumerate}
        \item For the infimum,\\
        $\because$ when $m<m^*$, the ideal case for $\rho_{(m)}$ to reach $\rho^*$ is to add $m^*-m$ instructions. In this case, the instructions in $\rho_{(m)}$ are exactly the ones in $\rho^*_{(m^*)}$ and are arranged in the correct order. When $m\geq m^*$, $\rho_{(m)}$ already includes $\rho^*_{(m)}$ (i.e., $\delta^*=0$). 
        \\
        $\therefore$ $$\max\{0, m^*-m\} \leq \delta^*_{(m)}.$$
        \item For the supremum, \\
        $\because$ the worst case for $\rho_{(m)}$ is that all the instructions in $\rho_{(m)}$ are excluded from $\rho^*_{(m)}$.\\
        $\therefore$ $\rho_{(m)}$ has to first remove all its instructions (i.e., $m$ moves) and then add the right instructions in the right order (i.e., $m^*$ moves). \\
        $\therefore$ the minimum upper bound $\delta^*_{(m)}$ is $m^*+m$.
    \end{enumerate}
\end{proof}%

\subsection{Infimum and Supremum on Accessible $\rho^*$ Program Sizes $\phi$}
\label{prf:mp}
 Denote the program size of $\rho^*$ that are accessible by $\rho_{(m)}$ within $d$ steps as $\phi_{(m,d)}$, then $\phi_{(m,d)}$ has its infimum and supremum of 
    $\inf{\phi_{(m,d)}} = m+d-2\min\{\frac{d-m^*+m}{2}, d, m\} \leq \phi_{(m,d)} \leq m+d-2\max\{0, d-m^*\} = \sup{\phi_{(m,d)}}$. 
    % where $r$ is the number of removing instructions $r_{low}=\max(0, d-m^*)$ and $r_{up}=\min(\frac{d-m^*+m}{2}, d, m)$. 
 \begin{proof}
Given an LGP program with size $m$, $\rho_{(m)}$, denote $r$ as the number of removing instructions, and $a$ is the number of adding instructions, so that $\delta^*(\rho)=d=r+a$. 
Then we have the following three relationships:
\begin{enumerate}
    \item the accessible program size of $\rho^*$ is equivalent to $\phi_{(m,d)}=m-r+a$, and $\phi_{(m,d)}$ must be greater than or equal to $m^* (m^*>0)$.
    \item the number of removing instructions $r$ must be less than or equal to the program size $m$.
    \item the number of adding instruction $a$ must be less than or equal to the smallest program size of $\rho^*$, i.e., $m^*$.
\end{enumerate}
The three relationships are formulated as follows.
\begin{align}
    \phi_{(m,d)} = m - r + a &\geq m^* > 0, \nonumber \\
    m \geq r &\geq 0, \nonumber \\
    m^* \geq a &\geq 0, \nonumber
\end{align}
% $\therefore$ After substituing $a=d-r$, we have $\underline{r} = \max(0, d-m^*) \leq r \leq \min(\frac{d-m^*+m}{2}, d, m) = \overline{r}$
% Similarly, by substituting $r=d-a$, we have $\underline{a} = \max(0, d-m, \frac{m^*+d-m}{2}) \leq a \leq \min(d, m^*) = \overline{a}$
$\therefore$ After substituting $a=d-r$ and $r=d-a$, respectively, we have:
\begin{equation}
% \footnotesize
     \max\{0, d-m^*\} \leq r \leq \min\{\frac{d-m^*+m}{2}, d, m\}, \nonumber
\end{equation}
\begin{equation}
% \footnotesize
    \max\{0, d-m, \frac{m^*+d-m}{2}\} \leq a \leq \min\{d, m^*\}. \nonumber
\end{equation} 
 
$\therefore$ For a given $m$ and $\delta^*(\rho)=d$, 
\begin{enumerate}
    \item $\phi_{(m,d)}$ has its infimum when $\rho$ removes the most $r=\min\{\frac{d-m^*+m}{2}, d, m\}$ instructions. We have $$\inf{\phi_{(m,d)}}=m+d-2\min\{\frac{d-m^*+m}{2}, d, m\}.$$
    \item $\phi_{(m,d)}$ has its supremum when $\rho$ removes the least $r=\max\{0, d-m^*\}$ instructions. We have $$\sup{\phi_{(m,d)}}=m+d-2\max\{0, d-m^*\}.$$
\end{enumerate}
%  . When $r=\underline{r}$, $m'$ has its own upper bound $\overline{m'}=m+d-2\underline{r}$.\\
% $\therefore$ $m+d-2\overline{r} \leq m' \leq m+d-2\underline{r}$
\end{proof}

\subsection{Lemma \ref{lemma:UBLBomega}}\label{prf:UBLBomega}
 The lower and upper bounds of the neutral bloating factor is:
    $\sum_{i=\omega}^{\gamma-\gamma_{\text{out}}} \left( \frac{in}{\gamma}\right)^{m_2-m_1}<\Omega(m_1, m_2)<\left( \frac{m_2(\gamma-\gamma_{\text{out}}+\omega)(\gamma-\gamma_{\text{out}}-\omega+1)n}{2\gamma} \right)^{m_2 - m_1}$.
\begin{proof}
When an LGP program increases its size from $m_1$ to $m_2$ by neutral moves, the LGP program has to add $m_2-m_1$ introns. These $m_2-m_1$ introns allocate at different instruction positions over the program. Denote $\alpha_i (i=\omega, \omega+1, ..., \gamma-\gamma_{\text{out}})$ as the number of instruction positions that are added intron instructions, we have
$$\alpha_\omega+\alpha_{\omega + 1}+...+\alpha_{\gamma-\gamma_{\text{out}}}=m_2-m_1.$$
where $\omega$ denotes the minimum number of ineffective registers $\omega=\max\{1,\gamma-\gamma_{\text{out}}-m_1\}$. 

Different instruction positions imply different numbers of possible introns. For example, given that the LGP program has $\gamma_{\text{out}}$ output registers, there are $\frac{(\gamma-\gamma_{\text{out}})n}{\gamma}$ instructions being introns following the last exon. These $\frac{(\gamma-\gamma_{\text{out}})n}{\gamma}$ instructions are composed of the instructions whose destination registers are excluded from the output register set. For the basic LGP in which each instruction has up to two source registers, there are $\frac{(\gamma-\gamma_{\text{out}} - 1)n}{\gamma}$ introns following the second last exon but preceding to the last exon. \\
$\therefore$ For a certain number of possible introns $\frac{in}{\gamma}$, it repeats $a_i$ times in the combination, and we have $\left( \frac{in}{\gamma}\right)^{\alpha_i}$.

Suppose there are up to $A_i$ instruction positions for a certain number of introns $\frac{in}{\gamma}$ to repeat $\alpha_i$ times,\\
$\therefore$ there are $\binom{A_i}{\alpha_i}$ combinations for each certain number of introns, and the total number of the maximal instruction positions for adding introns is $$A_\omega+ A_{\omega + 1}+...+A_{\gamma - \gamma_{\text{out}}}=m_2.$$ 

Based on this rule, the possible numbers of introns at each instruction position are $\frac{\omega n}{\gamma}$, $\frac{(\omega+1)n}{\gamma}$, ..., $\frac{(\gamma-\gamma_{\text{out}})n}{\gamma}$ where $\omega = \max\{0, \gamma-\gamma_{\text{out}}- m_1\}$.\\ 
$\therefore$ The total number of combinations of introns is defined as $\Omega(m_1, m_2)$.

\begin{align}
    &\Omega(m_1, m_2) = \sum_{ 
    \begin{array}{c}
         \alpha_\omega+\alpha_{\omega + 1}+...+\alpha_{\gamma-\gamma_{\text{out}}}=m_2-m_1,  \\
         A_\omega+ A_{\omega + 1}+...+A_{\gamma-\gamma_{\text{out}}}=m_2 
    \end{array}
    } \nonumber \\
% & \sum_{a_0+a_1+...+a_{|\mathbb{R}|-\rho}=m_2-m_1} \left(\frac{n}{|\mathbb{R}|}\right)^{a_1}\left(\frac{2n}{|\mathbb{R}|}\right)^{a_2}...
&\binom{A_\omega}{\alpha_\omega} \left(\frac{\omega n}{\gamma}\right)^{\alpha_\omega} \times \binom{A_{\omega+1}}{\alpha_{\omega+1}}\left(\frac{(\omega +1)n}{\gamma}\right)^{\alpha_{\omega+1}}\times... 
\times\binom{A_{\gamma-\gamma_{\text{out}}}}{\alpha_{\gamma-\gamma_{\text{out}}}}\left(\frac{(\gamma-\gamma_{\text{out}})n}{\gamma}\right)^{\alpha_{\gamma-\gamma_{\text{out}}}}. \nonumber 
\end{align}

\begin{enumerate}
    \item For the lower bound, we just simply sum up the items in which only one of the $\alpha_i (i=\omega, \omega+1,..., \gamma-\gamma_{\text{out}})$ is non-zero, and have $\Omega(m_1, m_2)>\sum_{i=\omega}^{\gamma-\gamma_{\text{out}}} \binom{A_i}{\alpha_i}\left( \frac{in}{\gamma}\right)^{m_2-m_1}$. Other $\binom{A_j}{0} \left( \frac{jn}{\gamma}\right)^{\alpha_j} = 1 (j\neq i, \alpha_j=0, \binom{A_j}{0}=1)$.\\
    % $\because$ there are many other items with multiple non-zero $\alpha_i$ when calculating $\Omega(m_1, m_2)$\\
    % $\therefore$
    % $\sum_{i=\omega}^{\gamma-\gamma_{out}} \binom{\overline{a_i}}{a_i}\left( \frac{in}{\gamma}\right)^{m_2-m_1}<\Omega(m_1, m_2)$\\
    $\because$
    $1 \leq \binom{A_i}{\alpha_i}$,\\
    $\therefore$
    $$\sum_{i=\omega}^{\gamma-\gamma_{\text{out}}} \left( \frac{in}{\gamma}\right)^{m_2-m_1}<\Omega(m_1, m_2).$$
    \item For the upper bound, \\
    $\because$ 
    $$\binom{A_\omega}{\alpha_\omega}\binom{A_{\omega+1}}{\alpha_{\omega+1}}...\binom{A_{\gamma-\gamma_{\text{out}}}}{\alpha_{\gamma-\gamma_{\text{out}}}} =: \frac{C_1}{C_2 \times C_3} ,$$
    % \begin{align}
    %     & \binom{\overline{a_\omega}}{a_\omega}\binom{\overline{a_{\omega+1}}}{a_{\omega+1}}...\binom{\overline{a_{|\mathbb{R}|-\rho}}}{a_{|\mathbb{R}|-\rho}} \nonumber \\
    %     & = \frac{\mathcal{A}}{\mathcal{B}\times \mathcal{C}} \nonumber
    % \end{align}
    where $C_1=A_\omega! A_{\omega+1}! ... A_{\gamma-\gamma_{\text{out}}}!$,\\
    $C_2=(A_\omega - \alpha_\omega)!(A_{\omega+1} - \alpha_{\omega+1})!...(A_{\gamma-\gamma_{\text{out}}} - \alpha_{\gamma-\gamma_{\text{out}}})!$,\\
    $C_3=\alpha_\omega!\alpha_{\omega+1}! ... \alpha_{\gamma-\gamma_{\text{out}}}!$.
    
    $\because$ $\frac{A_i!}{(A_i-\alpha_i)!}=(A_i-\alpha_i+1)(A_i-\alpha_i+2)...A_i$ $(i=\omega,\omega+1,...,\gamma-\gamma_{\text{out}})$,\\
    $\therefore$ $\frac{A_i!}{(A_i-\alpha_i)!}$ has $\alpha_i$ multiplying items, and each item must be less than or equal to $m_2$ since $A_i\leq m_2$.\\
    $\therefore$ $\frac{\mathcal{A}}{\mathcal{B}}$ has $\alpha_\omega+\alpha_{\omega+1}+...+\alpha_{\gamma-\gamma_{\text{out}}}=m_2-m_1$ multiplying items which are all $\leq m_2$.\\
    $\therefore$ $\frac{C_1}{C_2} < m_2^{m_2-m_1}(m_2 - m_1)!$ where $(m_2 - m_1)!$ has $m_2 - m_1$ items that are all larger than $1$.\\
    $\therefore$ 
    % \begin{scriptsize}
    \begin{align}
        & \frac{C_1}{C_2C_3} \left(\frac{\omega n}{\gamma}\right)^{\alpha_\omega}\left(\frac{(\omega +1)n}{\gamma}\right)^{\alpha_{\omega+1}}...\left(\frac{(\gamma-\gamma_{\text{out}})n}{\gamma}\right)^{\alpha_{\gamma-\gamma_{\text{out}}}} \nonumber \\
        & < \frac{m_2^{m_2-m_1}(m_2-m_1)!}{\mathcal{C}} \left(\frac{\omega n}{\gamma}\right)^{\alpha_\omega}...\left(\frac{(\gamma-\gamma_{\text{out}})n}{\gamma}\right)^{\alpha_{\gamma-\gamma_{\text{out}}}} \nonumber \\
        & = m_2^{m_2-m_1}\binom{m_2 - m_1}{\alpha_\omega,\alpha_{\omega+1},...,\alpha_{\gamma-\gamma_{\text{out}}}} \left(\frac{\omega n}{\gamma}\right)^{\alpha_\omega}...\left(\frac{(\gamma-\gamma_{\text{out}})n}{\gamma}\right)^{\alpha_{\gamma-\gamma_{\text{out}}}}. \nonumber
    \end{align}
    % \end{scriptsize}
%     $\because$ 
%     \begin{align}
% &\left(\frac{\omega n}{|\mathbb{R}|}\right)^{a_\omega}\left(\frac{(\omega +1)n}{|\mathbb{R}|}\right)^{a_{\omega+1}}...\left(\frac{(|\mathbb{R}|-\rho)n}{|\mathbb{R}|}\right)^{a_{|\mathbb{R}|-\rho}} \nonumber \\
% &<\binom{m_2-m_1}{a_\omega,a_{\omega+1},...,a_{|\mathbb{R}|-\rho}} \left(\frac{\omega n}{|\mathbb{R}|}\right)^{a_\omega}\left(\frac{(\omega +1)n}{|\mathbb{R}|}\right)^{a_{\omega+1}} \nonumber \\
% &...\left(\frac{(|\mathbb{R}|-\rho)n}{|\mathbb{R}|}\right)^{a_{|\mathbb{R}|-\rho}}  \nonumber
%     \end{align}
    % $\because$ \\
    $\therefore$ (\textbf{Multinomial Theorem})
    % \begin{footnotesize}
        \begin{align}
        &\Omega(m_1, m_2)< \nonumber \\
        &m_2^{m_2-m_1} \sum_{\alpha_\omega+\alpha_{\omega + 1}+...+\alpha_{\gamma-\gamma_{\text{out}}}=m_2-m_1}\binom{m_2-m_1}{\alpha_\omega,\alpha_{\omega+1},...,\alpha_{\gamma-\gamma_{\text{out}}}} \nonumber \\
        &\left(\frac{\omega n}{\gamma}\right)^{\alpha_\omega}\left(\frac{(\omega +1)n}{\gamma}\right)^{\alpha_{\omega+1}}...\left(\frac{(\gamma-\gamma_{\text{out}})n}{\gamma}\right)^{\alpha_{\gamma-\gamma_{\text{out}}}} \nonumber \\
        & = m_2^{m_2-m_1} \left( \frac{\omega n}{\gamma} + \frac{(\omega +1)n}{\gamma} + ... + \frac{(\gamma-\gamma_{\text{out}})n}{\gamma} \right)^{m_2-m_1} \nonumber \\
        & = \left( \frac{m_2(\gamma-\gamma_{\text{out}}+\omega)(\gamma-\gamma_{\text{out}}-\omega+1)n}{2\gamma} \right)^{m_2 - m_1}. \nonumber
    \end{align}
    % \end{footnotesize}
    % $\therefore$ $\Omega(m_1, m_2)<\left( \frac{m_2(\gamma-\gamma_{out}+\omega)(\gamma-\gamma_{out}-\omega+1)n}{2\gamma} \right)^{m_2 - m_1}$.
\end{enumerate}
\end{proof}

\subsection{Lemma \ref{lemma:UBLBlambda}}
\label{prf:UBLBlambda}
 The lower and upper bounds of the non-neutral bloating factor is:
    $\sum_{i=\gamma_{\text{out}}}^{\lambda} \left( \frac{in}{\gamma}\right)^{m_2-m_1}<\Lambda(m_1, m_2)<\left( \frac{m_2(\gamma_{\text{out}}+\lambda)(\lambda -\gamma_{\text{out}} +1)n}{2\gamma} \right)^{m_2 - m_1}$
    where $\lambda=\min\{\gamma,\gamma_{\text{out}}+m_1\}$.
\begin{proof}
    The proof is similar to \textbf{Lemma \ref{prf:UBLBomega}}, where $\omega$ and $\gamma-\gamma_{\text{out}}$ are replaced by $\gamma_{\text{out}}$ and $\lambda$, respectively.
\end{proof}

% \subsection{Lemma \ref{lemma:numRemoving}}
\subsection{Number of Unique Solutions After Removing $r$ Instructions}
\label{prf:numRemoving}
Given $||\mathcal{P}_{m}^0||$ unique programs, the number of unique programs after removing $r$ instructions from each of the $\rho\in\mathcal{P}_{m}^0$ programs is:
    $$ \frac{||\mathcal{P}_{m}^0||\binom{m}{r}}{\eta_r} ,$$
    where $\eta_r$ is the normalization factor of duplicated programs after removing. $\eta_r$ has its own lower and upper bounds.
    $$1 < \eta_r< \binom{m}{r}\left( \frac{(\gamma-\gamma_{\text{out}}) n}{\gamma}\right)^{r}.$$
\begin{proof}
1) For the lower bound of $\eta_r$:\\
The minimum number of duplicated programs after removing is 1, which indicates no duplication. $\therefore 1<\eta_r$.\\
2) For the upper bound of $\eta_r$:\\
    $\because$ the $||\mathcal{P}_{m}^0||$ unique programs have the same $m^*$ instructions and $m-m^*$ different intron instructions.\\
    $\therefore$ the generated programs might be duplicated if we remove the intron instructions.\\
    $\therefore$ we have the largest number of duplicated programs when all the $r$ removing instructions are intron instructions.\\
    $\because$ different positions have different numbers of possible intron instructions.\\
    $\therefore$ we have the largest number of duplicated programs when all the $r$ instruction positions have the largest number of possible intron instructions.\\
    $\therefore$ the largest number of duplicated programs is $\binom{m-m^*}{r}\left( \frac{(\gamma-\gamma_{\text{out}}) n}{\gamma}\right)^{r}$.\\
    $\because$ $m-m^* < m$, $\therefore$ $\binom{m-m^*}{r}<\binom{m}{r}$. \\
    $\therefore$ $\eta_r < \binom{m}{r}\left( \frac{(\gamma-\gamma_{\text{out}}) n}{\gamma}\right)^{r}$.
\end{proof}

% \subsection{Lemma \ref{lemma:numAdding}}
\subsection{Number of Unique Solutions After Adding $a$ Instructions}
\label{prf:numAdding}
 Given $N$ unique programs with size of $m$, the number of unique programs after adding $a$ instructions into each of the $N$ program is:
    $$\frac{N\binom{m+a}{a}n^{a}}{\eta_a},$$
    where $\eta_a$ is the normalization factor of duplicated programs after adding.
    $$1 < \eta_a < N\binom{m+a}{a}.$$
\begin{proof}
    1) For the lower bound of $\eta_a$:\\
    The minimum number of duplicated programs after adding is 1, which indicates no duplication. $\therefore 1<\eta_a$.\\
    2) For the upper bound of $\eta_a$:\\
    $\because$ a unique program at most generates $\binom{m+a}{a}$ duplicated offspring when the newly added instructions are the same as the instructions in the program, and all the instructions in the program are the same.\\
    $\because$ a generated program at most has $N-1$ duplicated ones that are generated from other $N-1$ unique programs.\\ 
    $\therefore$ $\eta_a < N\binom{m+a}{a}$.
\end{proof}

\subsection{Lemma \ref{lemma:UB_con_add}}\label{prf:UB_con_add}
Given an LGP individual $\rho$ whose $\delta^*(\rho)=a+r$ (i.e., accessing $\rho^*$ by adding $a$ necessary instructions and removing $r$ unnecessary instructions), we assume that an operator $o_{+u}$ that adds $u$ instructions into $\rho$ to produce offspring will add any number of unnecessary instructions ($j$) with the same probability. Then, the upper bound of the number of offspring that reduces $\delta^*(\rho)$ by $\Delta \delta^*_{(\rho,o_{+u})}=i$ is \par
% \begin{footnotesize}
    \begin{align}
    &\overline{||\mathcal{P}^{\Delta \delta^*=i}_{(\rho, o_{+u})}||} = \frac{1}{\min\{ \lfloor (u-i)/2 \rfloor, \delta^*(\rho)-i\}+1} \sum_{j=0}^{\min\{ \lfloor (u-i)/2 \rfloor, \delta^*(\rho)-i\}} \nonumber \\
    &  \binom{\delta^*(\rho)}{i+j} \overline{\Lambda}(m+i+j,m+i+2j) \overline{\Omega}(m+i+2j, m+u), \nonumber
\end{align}
% \end{footnotesize}%
% where $i\in[1,I_1], I_1=\min(d,u)$ and $J_1=\min( \lfloor (u-i)/2 \rfloor, d-i)$. $I_1$ and $J_1$ are the maximum number of necessary and unnecessary instructions, respectively. 
where $\overline{\Lambda}(\cdot)$ and $\overline{\Omega}(\cdot)$ are the upper bounds of $\Lambda$ and $\Omega$, respectively.
\begin{proof}
    Based on Appendix \ref{prf:mp}, we have $\max\{0,\delta^*(\rho)-m\} \leq a \leq \delta^*(\rho) $ by reducing $m^*$ and relaxing the lower and upper bounds of $a$.\par
    $\because$ the constructive moves by the operator $o_{+u}$ consist of three types of moves: 1) adding instructions necessary instructions, 2) adding unnecessary exons, and 3) adding introns. \par
    $\therefore$ suppose that $o_{+u}$ constructively reduces $\delta^*(\rho)$ by $i$, then LGP adds $i+j$ necessary instructions, $j$ unnecessary exons, and $u-i-2j$ introns, then the number of neighbors of an individual move is\par
    % \begin{footnotesize}
    $$\frac{\binom{a}{i+j} \Lambda(m+i+j,m+i+2j) \Omega(m+i+2j, m+u) }{\eta_a} ,$$     
    % \end{footnotesize}
    where $\eta_a$ is the number of the duplicated new programs after adding $u$ instructions.  \par
    $\because$ $i+j \leq a$ and $i\geq 1$,$j,u-i-2j\geq 0$ (Appendix \ref{prf:mp}).\par
    $\therefore$ $1\leq i \leq \min\{a,u\}$ and $0 \leq j \leq \min\{\lfloor (u-i)/2 \rfloor, a-i\}$.\par
    % $\because$ $j$ are uncertain, $\therefore$ we write $N_a(i)$ as an expectation of $j$.\\
    $\therefore$ we enumerate $j$ to estimate the expected total number of neighbors that reduce $\delta^*(\rho)$ by $i$ is (based on \textbf{Lemmas \ref{lemma:UBLBomega} and \ref{lemma:UBLBlambda}} and Appendix \ref{prf:numAdding}):\par
    % \begin{footnotesize}
    \begin{align}
    &|| \mathcal{P}^{\Delta \delta^*=i}_{(\rho, o_{+u})} || =  \sum_{j=0}^{\min\{ \lfloor (u-i)/2 \rfloor, a-i\}} \Pr(j) \binom{a}{i+j} \Lambda(m+i+j,m+i+2j) \Omega(m+i+2j, m+u) \nonumber \\
        &\leq \sum_{j=0}^{\min\{ \lfloor (u-i)/2 \rfloor, \delta^*(\rho)-i\}} 
        \Pr(j) \binom{\delta^*(\rho)}{i+j} \overline{\Lambda}(m+i+j,m+i+2j) \overline{\Omega}(m+i+2j, m+u), \nonumber
        % &\text{(Lemmas \ref{lemma:UBLBomega}, \ref{lemma:UBLBlambda}, and \ref{lemma:numAdding})}\nonumber
    \end{align}    
    % \end{footnotesize}%
    where $1\leq i \leq \min\{\delta^*(\rho),u\}$.\par
   $\because$ we assume that different values of $j$ share the same probability, $\therefore$ $\Pr(j)=\frac{1}{\min\{ \lfloor (u-i)/2 \rfloor, d-i\}+1}$.
   % $\therefore$
   % \begin{footnotesize}
   % $$E(|| \mathcal{P}^{\Delta \delta^*=i}_{(\rho, o_{+u})} ||) \leq \frac{1}{J_1+1} \sum_{j=0}^{J_1} \binom{d}{i+j} \overline{\Lambda(m+i+j,m+i+2j)} \overline{\Omega(m+i+2j, m+u)}$$    
   % \end{footnotesize}
   % where $J_1=\min( \lfloor (u-i)/2 \rfloor, d-i)$
    
    % $\therefore$ the upper bound of the total number of neighbors is (based on \textbf{Lemmas \ref{lemma:UBLBomega}, \ref{lemma:UBLBlambda}, and \ref{lemma:numAdding}}):
    % \begin{footnotesize}
    % \begin{align}
    %     &\sum_{i=1}^{\min(a,u)} \sum_{j=0}^{\min( \lfloor (u-i)/2 \rfloor, a-i)}  \nonumber \\
    %     &\binom{a}{i+j} \Lambda(m+i+j,m+i+2j) \Omega(m+i+2j, m+u) \nonumber \\
    %     &\leq  \sum_{i=1}^{\min(d,u)} \sum_{j=0}^{\min( \lfloor (u-i)/2 \rfloor, d-i)} \nonumber \\
    %     &\binom{d}{i+j} \overline{\Lambda(m+i+j,m+i+2j)} \overline{\Omega(m+i+2j, m+u)} \nonumber
    %     % &\text{(Lemmas \ref{lemma:UBLBomega}, \ref{lemma:UBLBlambda}, and \ref{lemma:numAdding})}\nonumber
    % \end{align}    
    % \end{footnotesize}
\end{proof}

\subsection{Lemma \ref{lemma:UB_con_remove}}\label{prf:UB_con_remove}
Given an LGP individual $\rho$ whose $\delta^*(\rho)=a+r$ (i.e., accessing $\rho^*$ by adding $a$ necessary instructions and removing $r$ unnecessary instructions), we assume that an operator $o_{-u}$ that removes $u$ instructions from $\rho$ to produce offspring will remove any number of unnecessary instructions ($j$) with the same probability. Then, the upper bound of the number of offspring that reduces $\delta^*(\rho)$ by $\Delta \delta^*_{(\rho, o_{-u})}=i$ is 
  $$\overline{||\mathcal{P}^{\Delta \delta^*=i}_{(\rho, o_{-u})}||} = \binom{|\rho|}{u}. $$
  
%  Given an LGP individual $g$ whose $d^*(g)=d=a+r$ (i.e., accessing $g^*$ by adding $a$ necessary instructions and removing $r$ unnecessary instructions), $g$ produces offspring by removing $u$ instructions.
% We assume that removing $u$ instructions will remove any number of necessary instructions ($j$) with the same probability. Then, the upper bound of the number of neighbors that reduce $d^*(g)$ by $i$ by removing $u$ instructions is 
%   $$\overline{ N_r(i)} =  \binom{m}{u}$$
%   where $i \in[1,I_2], I_2=\min(\overline{r'},u)$ and $J_2=\min( \lfloor (u-i)/2 \rfloor, \overline{r'}-i)$. $I_2$ and $J_2$ are the maximum number of necessary and unnecessary instructions, respectively, and $\overline{r'}=\min(d,m,\frac{d+m-1}{2})$.
\begin{proof}
Based on Appendix \ref{prf:mp}, we have $0 \leq r \leq \min\{\delta^*(\rho),|\rho|,\frac{\delta^*(\rho)+|\rho|-1}{2}\}$ by reducing $m^*$ as 1 and relaxing the lower bound.\par
    $\because$ the constructive moves by removing $u$ instructions consist of three types of moves: 1) removing unnecessary exons, 2) removing necessary instructions, and 3) removing introns. \par
    $\therefore$ suppose that $o_{-u}$ constructively reduces $\delta^*(\rho)$ by $i$, LGP removes $i+j$ unnecessary exons, $j$ necessary instructions, and $u-i-2j$ introns, where $i+j\leq r$, $i\geq 1$, $j, u-i-2j\geq 0$.\par
    $\therefore$ $1\leq i \leq \min\{r,u\}$, $0\leq j \leq \min\{\lfloor (u-i)/2 \rfloor, r-i\}$.\par
    $\therefore$ Suppose there are $N_{nec}$ necessary instructions and $N_{intron}$ introns in the program. Given $i$ and $j$, the number of neighbors of an LGP individual is $\binom{r}{i+j}\binom{N_{nec}}{j}\binom{N_{intron}}{u-i-2j}/\eta_r$, where $\eta_r$ is the number of duplicated new programs by removing instructions. \par
     % $\because$ $j$ are uncertain, $\therefore$ we write $N_r(i)$ as an expectation of $j$.\\
    $\because$ $r+N_{nec}+N_{intron}=|\rho|$.\par
    $\therefore$ we enumerate $j$ to estimate the expected total number of neighbors that reduce $\delta^*(\rho)$ by $i$ is (based on Appendix \ref{prf:numRemoving}):
    % $\therefore$ the total number of neighbors is $\sum_{i=1}^{\min(r,u)}\binom{r}{i}\binom{N_{intron}}{u-i}$, and 
    % $\therefore$the upper bound of the total number of the neighbors by removing is (based on \textbf{Lemmas \ref{lemma:numRemoving}}):
    % \begin{footnotesize}
    \begin{align}
        &||\mathcal{P}^{\Delta \delta^*=i}_{(\rho, o_{-u})}|| = 
        \sum_{j=0}^{\min\{ \lfloor (u-i)/2 \rfloor, r-i\}} \frac{P(j)}{\eta_r} \binom{r}{i+j} \binom{N_{nec}}{j}\binom{|\rho|-r-N_{nec}}{u-i-2j} \nonumber \\
        % & \binom{r}{i+j} \binom{N_{nec}}{j}\binom{m-r-N_{nec}}{u-i-2j} \nonumber \\   
        & \leq \sum_{j=0}^{\min\{ \lfloor (u-i)/2 \rfloor, r-i\}} P(j) \binom{r}{i+j} \binom{|\rho|-r}{u-i-j} \nonumber \\
& \leq \sum_{j=0}^{\min\{ \lfloor (u-i)/2 \rfloor, r-i\}} P(j) \binom{|\rho|}{u}, \nonumber
\end{align}
    % $$\sum_{i=1}^{\min(d,u,m)} \sum_{j=0}^{\min( \lfloor (u-i)/2 \rfloor, \min(d,m)-i)} \binom{\min(d,m)}{i+j}\binom{m-i-j}{j}\binom{m-i-2j}{u-i-2j}$$
% \end{footnotesize}%
where $1\leq i \leq \min\{\delta^*(\rho),|\rho|,\frac{\delta^*(\rho)+|\rho|-1}{2},u\}$.\\
    $\because$ We assume that different values of $j$ share the same probability, $\therefore$ $\Pr(j)=\frac{1}{\min\{ \lfloor (u-i)/2 \rfloor, r-i\}+1}$. 
    $\therefore ||\mathcal{P}^{\Delta \delta^*=i}_{(\rho, o_{-u})}||\leq  \binom{|\rho|}{u}$.
    % where $J_2=\min( \lfloor (u-i)/2 \rfloor, \overline{r'}-i)$
\end{proof}

\end{document}